%% file: Main.tex
\documentclass[lettersize,journal]{IEEEtran}

\usepackage[utf8]{inputenc}
\usepackage[T1]{fontenc}
\usepackage{textcomp}

\usepackage{amsmath,amssymb,amsfonts}
\usepackage{amsthm}
\usepackage{mathtools}
\usepackage{bm}

\usepackage{aliascnt}

\newtheorem{theorem}{Theorem}%[section]
\newtheorem{lemma}{Lemma}%[section]
\newtheorem{corollary}{Corollary}%[section]
\newtheorem{proposition}{Proposition}%[section]
\newtheorem{definition}{Definition}%[section]
\newtheorem{remark}{Remark}%[section]

\usepackage{algorithm}
\usepackage{algorithmic}

\usepackage{array}
\usepackage{booktabs}
\usepackage{enumitem}
\usepackage[caption=false,font=normalsize,labelfont=sf,textfont=sf]{subfig}
\usepackage{stfloats}

\usepackage{graphicx}
\usepackage{xcolor}
\usepackage{tikz}
\usetikzlibrary{arrows.meta, positioning, fit, calc,
                decorations.pathmorphing, decorations.markings,
                decorations.text, decorations.pathreplacing,
                shapes.geometric, backgrounds}

\usepackage{siunitx}

\usepackage{url}
\usepackage{verbatim}
\usepackage{cite}
\usepackage{hyperref}
\hypersetup{colorlinks=true, linkcolor=black, citecolor=black, urlcolor=blue, pdfborder={0 0 0}}
\usepackage[capitalise,noabbrev]{cleveref}
\crefname{theorem}{Theorem}{Theorems}
\Crefname{theorem}{Theorem}{Theorems}
\crefname{lemma}{Lemma}{Lemmas}
\Crefname{lemma}{Lemma}{Lemmas}
\crefname{corollary}{Corollary}{Corollaries}
\Crefname{corollary}{Corollary}{Corollaries}
\crefname{proposition}{Proposition}{Propositions}
\Crefname{proposition}{Proposition}{Propositions}
\crefname{definition}{Definition}{Definitions}
\Crefname{definition}{Definition}{Definitions}
\crefname{remark}{Remark}{Remarks}
\Crefname{remark}{Remark}{Remarks}

\newcommand{\SOthree}{\mathrm{SO}(3)}

\newcommand{\R}{\mathbb{R}}

\newcommand{\abs}[1]{\lvert #1 \rvert}

\newcommand{\vect}[1]{\boldsymbol{#1}}
\newcommand{\mat}[1]{\boldsymbol{#1}}

\begin{document}

\title{Passive Fault Tolerance through Tension-to-Thrust Feed-Forward: Hybrid Input-to-State Stability for Decentralized Multi-UAV Slung-Load Transport under Abrupt Cable Severance}
\author{Hadi Hajieghrary$^{1}$ and Paul Schmitt$^{2}$%
\thanks{This work received no external research funding.}%
\thanks{$^{1}$Hadi Hajieghrary presents this paper as an Independent Researcher (e-mail: Hadi.Hajieghrary@gmail.com).}%
\thanks{$^{2}$Paul Schmitt is with Reynolds \& Moore (e-mail: Paul.Schmitt@reynolds-moore.com).}}

% The paper headers
\markboth{IEEE TRANSACTIONS ON CONTROL SYSTEMS TECHNOLOGY}%
{Hajieghrary and Schmitt: Passive Fault Tolerance through Tension-to-Thrust Feed-Forward}

\maketitle

\begin{abstract}
Abrupt cable severance in multi-UAV slung-load transport redistributes load and changes the active constraint set, leaving limited time for fault diagnosis and reconfiguration. Existing controllers rely on coordinated force allocation, peer-state exchange, or fixed cable topology, and therefore lack a certified decentralized recovery mechanism for unannounced severance. We present a passive architecture that routes each vehicle's measured cable tension directly into its altitude thrust command, $T_i^{\mathrm{ff}}=T_i$, while a surrounding proportional--derivative, anti-swing, and projection cascade preserves local tracking feasibility. The main contribution is a conditional hybrid practical input-to-state-stability certificate that composes a slack-excursion-bounded taut-cable reduction, bounded post-severance Lyapunov jumps, inter-fault decay, and per-fault-cycle contraction $\rho \in (0,1)$ into an explicit recovery envelope under stated actuator, slack, and dwell assumptions. We validate the controller in Drake multibody simulation with five vehicles, a 10\,kg payload, Kelvin--Voigt cables, Dryden wind, and single- and dual-severance schedules: the closed loop attains 0.312--0.328\,m RMSE, 76.1--95.2\,mm peak sag, and recovery within one payload-pendulum period. Disabling the identity inflates cruise error by 34--39\% and peak sag by 3.6--4.0$\times$, identifying local tension feed-forward as the dominant passive recovery mechanism in the tested decentralized cascade.
\end{abstract}

\begin{IEEEkeywords}
UAV Planning and Control, Fault tolerant systems, Decentralized control, Multibody dynamics
\end{IEEEkeywords}

\section{Introduction}
\label{sec:Introduction}
\input{Section_I_Introduction.tex}

\section{Problem Statement}
\label{sec:ProblemStatement}
\label{sec:Problem_Statement}
\input{Section_II_Problem_Statement.tex}

\section{Proposed Method}
\label{sec:ProposedMethod}
\label{sec:proposed}
\input{Section_III_Proposed_Method.tex}

\section{Stability Analysis}
\label{sec:StabilityAnalysis}
\label{sec:stability}
\input{Section_IV_Stability_Analysis.tex}

\section{Extension Layers and Theorem Preservation}
\label{sec:Parameter_Adaptation_Tension_Constraints_and_Post-Fault_Formation_Reshape}
\label{sec:extensions}
\input{Section_V_Parameter_Adaptation_Tension_Constraints_and_Post-Fault_Formation_Reshape.tex}

\section{Simulation Campaign and Results}
\label{sec:simulation}
\input{Section_VI_Simulation_Campaign_and_Results.tex}

\section{Discussion}
\input{Section_VII_Discussion.tex}

\section{Conclusion}
\input{Section_VIII_Conclusion.tex}

% ======================== REFERENCES ============================
\bibliographystyle{IEEEtran}
\bibliography{References}

\end{document}

%% file: Section_I_Introduction.tex
Cooperative aerial transport uses multiple uncrewed aerial vehicles (UAVs) to carry a shared payload through flexible cables. In practical deployments, cable severance can occur due to abrasion, hardware fatigue, or contact with obstacles. A severance event changes the system's constraint structure, redistributes load among the remaining cables, and requires a compensatory response within one payload-pendulum period. This timescale is on the order of seconds and is shorter than the latency typically associated with fault detection, classification, and controller reconfiguration.

This paper studies a decentralized feed-forward choice for this setting. Each drone $i$ sets its altitude thrust feed-forward equal to its measured cable tension,
\begin{equation}
  T_i^{\mathrm{ff}}(t) = T_i(t).
\end{equation}
When a cable severs, the resulting load redistribution is reflected in the local tension measurement. The controller maps this change into a thrust adjustment at the next control update using local information only, rather than requiring a fault flag, inter-agent communication, or an online change of control law.

We analyze the resulting closed loop as a hybrid system with discrete structural transitions and continuous-time dynamics. The feed-forward term contributes the bounded jump property used in the hybrid practical input-to-state-stability argument, subject to the stated actuator, slack-excursion, and dwell-time assumptions. The analysis yields an explicit recovery envelope with exponential decay and per-fault-cycle contraction.

The controller is evaluated in high-fidelity multibody simulation across several fault scenarios. The results show recovery from single and multiple cable severances within one payload-pendulum period for the simulated mission family. Feed-forward ablations indicate that the tension feed-forward term is the main recovery mechanism in the tested baseline cascade; the additional extension layers do not materially change the reported trajectories in the regimes considered.

Related work falls into four categories: centralized and distributed cooperative transport, geometric tracking and slung-load control, $L_1$ adaptive control, and fault-tolerant control.

\textit{Centralized and distributed cooperative transport} uses force-allocation architectures~\cite{Michael2011,Fink2011,Tagliabue2019} and distributed predictive variants~\cite{WangOng2017,LiLoianno2023,PallarLi2025,SuBhowmick2023}. These approaches generally assume synchronized global-state information or coordinated allocation, whereas the cable-severance case considered here changes the constraint set at the actuator timescale. \textit{Geometric tracking}~\cite{LeeLeokMcClamroch2010}, \textit{flatness-based slung-load}~\cite{Sreenath2013}, and $\mathrm{SE}(3)$ \textit{payload control}~\cite{Lee2018} usually assume a fixed constraint set; multi-vehicle extensions~\cite{LeeSreenathKumar2013,SharmaSundaram2023} often reintroduce global-state exchange. \textit{$L_1$ adaptive control} provides matched-disturbance rejection with transient bounds~\cite{CaoHovakimyan2008,Hovakimyan2010book}; its application to cooperative cable-fault settings under the information pattern used here remains open~\cite{WuCheng2025}.

The fault-tolerant control literature is commonly divided into active architectures, which reconfigure after fault diagnosis~\cite{Blanke2015,ZhangJiang2008}, and passive architectures~\cite{Patton1997,Blanke2015}, which maintain robustness without explicit reconfiguration\cite{LiuSuspensionFailure2021}. In cooperative transport, observer-based disturbance estimation with dynamic load reassignment addresses cable failures through detection and reallocation. Other methods, such as passive force control and leader-follower force estimation~\cite{NoCommTransport2021}, reduce the need for inter-UAV exchange but typically assume a fixed number of cables. Recent work has also explored structured representations of system behavior and constraint prioritization as alternatives to reactive or detection-based approaches.~\cite{rulebooks}

Existing approaches therefore either use some form of diagnosis and reconfiguration or analyze systems with a fixed constraint topology. The present formulation addresses the case in which the controller must remain decentralized while the cable set changes abruptly. The tension-to-thrust feed-forward term is not introduced as a new force-feedback principle; rather, the contribution is to analyze this term as part of a hybrid closed loop and to state the conditions under which it yields bounded post-fault recovery.

Measured force feedback and passive impedance/admittance control are mature techniques for modulating stiffness or damping in fixed-topology systems. Cable severance differs from these settings because it changes the cardinality and topology of the active constraint set. In this paper, the measured tension term is analyzed together with the discrete structural transition, rather than as a force gain on a fixed plant. The hybrid practical-ISS result links the feed-forward term to post-fault recovery through the payload-pendulum timescale and the dwell hypothesis, $\tau_d \ge \tau_{\mathrm{pend}}$.

The information pattern is also central to the formulation. Active fault-tolerant pipelines detect, classify, and reconfigure after a component failure. For the severance events considered here, that sequence can be slow relative to the payload-pendulum period. The proposed controller instead uses the local tension change that follows severance and does not use peer-state exchange or centralized diagnosis. This choice limits global optimization, such as ideal load sharing among survivors, so the resulting stability statement is conditional on explicit admissibility assumptions (H1--H3), which are checked in the simulation campaign.

Three contributions organize the results for decentralized fault-tolerant control in cooperative aerial transport:

First, we introduce a local tension-to-thrust feed-forward term, $T_i^{\mathrm{ff}} = T_i$, for cable-severance tolerance in cooperative aerial transport. The term maps the post-severance redistribution of cable tension into a thrust correction using local measurements. A reduced-order taut-cable model with explicit error bounds (\cref{thm:reduction}) supplies the plant model used in the analysis. (C1, \cref{sec:reduction-and-reference})

Second, we prove a conditional hybrid practical input-to-state-stability result for the resulting decentralized closed loop. The proof combines a slack-excursion-bounded taut-cable reduction, inter-fault ISS decay, bounded fault jumps, and dwell-cycle contraction under explicit actuator and dwell-time conditions, yielding a recovery envelope with exponential decay and per-fault-cycle contraction $\rho \in (0,1)$. (C2, \cref{sec:stability})

Third, we validate the mechanism in high-fidelity multibody simulation under single- and dual-cable severance. Feed-forward ablations show that disabling $T_i^{\mathrm{ff}} = T_i$ inflates cruise RMSE by $34$--$39\%$ and peak post-fault sag by $3.6$--$4.0\times$, supporting attribution of the recovery behavior to the feed-forward term within the tested baseline architecture. Extension layers ($L_1$, MPC, reshape) are also evaluated in the regimes considered. (C3, \cref{sec:simulation})

%% file: Section_II_Problem_Statement.tex
We study cooperative aerial transport under unannounced cable-severance faults and formalize the plant, fault model, admissible controller class, reference trajectory, taut-cable reduction, and formal problem. The system consists of:

\paragraph{Vehicles, payload, and state.} A team of $N$ quadrotor UAVs, indexed $i\in\mathcal{N}\triangleq\{1,\dots,N\}$, operates in an inertial world frame $\mathcal{W}$. Drone $i$ has mass $m_i$, diagonal inertia $\mat J_i$, position $\vect p_i$, velocity $\vect v_i$, attitude $\mat R_i\in\SOthree$, and angular velocity $\vect\omega_i$, with per-drone state $x_i\triangleq(\vect p_i,\vect v_i,\mat R_i,\vect\omega_i)$. A point-mass payload of mass $m_L$ has state $x_L\triangleq(\vect p_L,\vect v_L)$; payload rotation is neglected.

\paragraph{Cables and complementarity condition.} Each drone $i$ is connected to the payload by a flexible cable. Cable $i$ exerts a world-frame force $\vect T_i^{\mathcal{W}} \in \R^3$ on the payload-side attachment point and an equal and opposite reaction on the drone-side attachment point. The cable transmits tensile force only when taut; its scalar tension is
\begin{equation}
 T_i \;\triangleq\; \bigl\|\vect T_i^{\mathcal{W}}\bigr\|
     \;\ge\; 0,
  \label{eq:tension-def}
\end{equation}
with the unilateral constraint $T_i = 0$ whenever the cable is slack. The complementarity condition
\begin{equation}
 T_i \;\ge\; 0, \quad
  \ell_i \;-\; L \;\le\; 0, \quad
 T_i\,\bigl(\ell_i - L\bigr) \;=\; 0,
  \label{eq:complementarity}
\end{equation}
where $\ell_i = \|\vect p_{a, i} - \vect p_{b, i}\|$ is the cable's instantaneous length, $\vect p_{a, i}$ and $\vect p_{b, i}$ are the drone- and payload-side attachment points, and $L$ is the rest length, captures the unilateral tensile constraint for an idealized inextensible cable. In the simulator each cable is a Kelvin--Voigt spring ($L = \SI{1.25}{\metre}$, $k_s = \SI{25000}{\newton\per\metre}$); the elastic correction to the inextensible model is $O(\delta)$ and is absorbed in \cref{thm:reduction}.

\paragraph{Equations of motion.} The drone translational dynamics
\begin{equation}
 m_i\,\dot{\vect v}_i
    = f_i\,\mat R_i\,\hat{\vect e}_3
    - m_i g\,\hat{\vect e}_3
    - \vect T_i^{\mathcal{W}}
    + \vect F_{\mathrm{w},i}(t)
  \label{eq:drone-eom}
\end{equation}
combine thrust $f_i\in[f_{\min},f_{\max}]$, gravity, cable reaction, and wind. Standard rigid-body rotational dynamics $\mat J_i\dot{\vect\omega}_i+\vect\omega_i\times\mat J_i\vect\omega_i=\vect\tau_i$, $\dot{\mat R}_i=\mat R_i[\vect\omega_i]_\times$ govern attitude. The payload obeys
\begin{equation}
 m_L\,\dot{\vect v}_L
    = -m_L g\,\hat{\vect e}_3
    + \sum_{i\in\mathcal{S}(t)} \vect T_i^{\mathcal{W}}
    + \vect F_{\mathrm{w},L}(t),
  \label{eq:payload-eom}
\end{equation}
with $\mathcal{S}(t)\subseteq\mathcal{N}$ the surviving-cable set. Per-drone inputs are $u_i=(f_i,\vect\tau_i)\in\mathcal{U}_i\triangleq[f_{\min},f_{\max}]\times[-\tau_{\max},\tau_{\max}]^3$, the plant state $x=(x_{\mathcal{N}},x_L)$ lives in $\mathcal{X}$, and the admissible disturbance set is
\begin{equation}
\label{eq:disturbance-set}
  \mathcal{D} \triangleq \bigl\{ d(\cdot) :
    \|\vect F_{\mathrm{w},i}(t)\|,\; \|\vect F_{\mathrm{w},L}(t)\|
      \le W_{\max},\;\forall i,t \bigr\},
\end{equation}
with $W_{\max}=\SI{1}{\newton}$; the Dryden realization used in V2--V6 stays inside this bound (\cref{tab:sim-params}).

\input{fig_physical_setup.tex}

At a time $t_k^\star$, cable $i_k^\star \in \mathcal{N}$ severs instantaneously. This structural fault event permanently removes cable $i_k^\star$ from the constraint set:
\begin{equation}
 T_{i_k^\star}(t) \;=\; 0
  \qquad \forall\, t \ge t_k^\star.
  \label{eq:fault-tension-zero}
\end{equation}
The active cable set transitions discretely:
\begin{equation}
  \mathcal{S}(t_k^{\star,+}) \;=\; \mathcal{S}(t_k^{\star,-})
    \;\setminus\; \{i_k^\star\},
  \label{eq:S-transition}
\end{equation}
reducing the cardinality from $|\mathcal{S}|$ to $|\mathcal{S}| - 1$. For an elastic cable, the kinematic state $(\vect p_i, \vect v_i, \mat R_i, \vect\omega_i, \vect p_L, \vect v_L)$ is continuous across severance; only the ODE right-hand side changes discontinuously at $t_k^\star$.

We consider a sequence of $F \le N-2$ fault events at times $\{t_k^\star\}_{k=1}^F$ with $t_1^\star < \cdots < t_F^\star$. The bound $F \le N-2$ keeps at least two cables intact; actuator margins limit the demonstrated scenario ($N = 5$, $m_L = \SI{10}{\kilo\gram}$) to $F \le 2$. The inter-fault interval $\tau_{d,k} \triangleq t_{k+1}^\star - t_k^\star$ is the \emph{dwell time}. We assume each fault is separated from the previous by at least one payload-pendulum period:
\begin{equation}
  \tau_{d,k} \;\ge\; \tau_{\mathrm{pend}} \;\triangleq\;
    2\pi\sqrt{L/g}, \qquad k = 1, \dots, F-1,
  \label{eq:dwell-assumption}
\end{equation}
reflecting the physical requirement that load redistribution from fault $k$ settle before fault $k+1$. At $L = \SI{1.25}{\metre}$ and $g = \SI{9.81}{\metre\per\second\squared}$, $\tau_{\mathrm{pend}} \approx \SI{2.24}{\second}$. The admissible fault-schedule set is
\begin{align}
\label{eq:fault-schedule-set}
  \mathcal{F} \;\triangleq\;
    \Bigl\{
      \{(t_k^\star, i_k^\star)\}_{k=1}^F :
 &F \le N-2,\;\\
 &t_{k+1}^\star - t_k^\star \ge \tau_{\mathrm{pend}},\;
 i_k^\star \in \mathcal{S}(t_k^{\star,-})
    \Bigr\},\nonumber
\end{align}
$F = 0$ corresponds to the no-fault case.

\paragraph{Admissible controller class.} No fault flag, annunciation, or detection signal is transmitted; the fault manifests through a rise in $T_i(t)$ and a transient payload-velocity deviation, both in $\mathcal{I}_i(t)$. Field-deployable avionics impose a strict \emph{admissible information pattern}. Drone $i$'s controller $\pi_i$ is restricted to the information set
\begin{equation}
  \mathcal{I}_i(t) \;\triangleq\;
    \bigl\{x_i(t),\; T_i(t),\; \vect v_L(t),\;
      \vect p_L^d(\cdot),\; \vect v_L^d(\cdot)\bigr\},
  \label{eq:info-set}
\end{equation}
comprising the drone's inertial state $x_i$, its rope-tension measurement $T_i$, the payload velocity $\vect v_L$ (measured via a co-located downward-facing optical-flow sensor, no drone-to-drone communication), and a pre-distributed reference $({\vect p}_L^d, {\vect v}_L^d)$ loaded before flight. Payload position $\vect p_L$ is deliberately excluded; only velocity enters the closed loop, via the anti-swing slot shift of \cref{sec:proposed}. Each $\pi_i$ is a causal feedback map,
\begin{equation}
 u_i(t) \;=\; \pi_i\!\left(
    \left.\mathcal{I}_i\right|_{[0,t]}
  \right) \in \mathcal{U}_i,
  \label{eq:controller-map}
\end{equation}
possibly with a finite-dimensional internal state, but never with access to information outside \eqref{eq:info-set}. The admissible controller class is denoted $\Pi_{\mathrm{loc}}$. $\mathcal{I}_i(t)$ excludes peer state, peer tension, fault flags, and run-time inter-drone packets; each $\pi_i$ uses only co-located sensors and a pre-flight plan.

\subsection{Reference Trajectory and Taut-Cable Reduction}
\label{sec:reduction-and-reference}

The desired payload motion is a three-dimensional lemniscate trajectory with a sinusoidal altitude variation,
\begin{align}
  \label{eq:lemniscate}
  \vect p_L^d(t) \;=\;
    \begin{bmatrix}
      \dfrac{a\cos\phi(t)}{1 + \sin^2\!\phi(t)} \\[6pt]
      \dfrac{a\sin\phi(t)\cos\phi(t)}{1 + \sin^2\!\phi(t)} \\[6pt]
 z_0 + h_z\sin(\phi(t))
    \end{bmatrix},
  \phi(t) = \frac{2\pi(t - t_0)}{T_{\mathrm{ref}}},
\end{align}
where $a = \SI{3.0}{\metre}$ is the horizontal scale, $T_{\mathrm{ref}} = \SI{12}{\second}$ the period, $z_0 = \SI{3.0}{\metre}$ the cruise altitude, and $h_z = \SI{0.35}{\metre}$ the altitude excursion amplitude. These parameters yield a peak centripetal load of $\approx 0.25g$ at the figure-8 tips.

Each rope of rest length $L$ is discretized into $N_b = 8$ identical Kelvin--Voigt beads of mass $m_{\mathrm{bead}}$ connected by $N_{\mathrm{seg}} = N_b + 1 = 9$ segments of stiffness $k_s$ and damping $c_s$; $N_{\mathrm{seg}}$ is reserved for rope segments and is distinct from the surviving-cable cardinality $N_s(t) = |\mathcal{S}(t)| = N - F(t)$ used in \cref{sec:stability}. The distributed dynamics evolve on a fast timescale,
\begin{equation}
  \tau_{\mathrm{rope}}
    \;=\; 2\pi\sqrt{m_{\mathrm{bead}}/k_s}
    \;\approx\; \SI{6.3}{\milli\second},
  \label{eq:tau-rope}
\end{equation}
separated from the payload-pendulum timescale $\tau_{\mathrm{pend}} \approx \SI{2.24}{\second}$ by $\delta = \tau_{\mathrm{rope}} / \tau_{\mathrm{pend}} \approx 0.003$. On the slow manifold, the net tensile force at the drone-side attachment equals the scalar $\xi_i = k_{\mathrm{eff}}(\ell_i - L)^+$, with effective stiffness
\begin{equation}
 k_{\mathrm{eff}}
    \;=\; \frac{k_s}{N_{\mathrm{seg}}}
    \;=\; \frac{\SI{25000}{\newton\per\metre}}{9}
    \;\approx\; \SI{2778}{\newton\per\metre}.
  \label{eq:keff}
\end{equation}
The six missions (V1--V6) keep the system inside $\Omega_\tau^{\mathrm{dwell}}$ with margin (worst slack run $\SI{35.8}{\milli\second}$, worst duty cycle $1.66\%$, \cref{tab:domain-audit}). The reduction replaces the $3N_b$-dimensional bead-chain state with the scalar $\xi_i$; the per-instant taut error is $O(\delta)$ and slack excursions add $O(\eta_{\max})$.

\begin{theorem}[Taut-cable reduction]
\label{thm:reduction}
On the slack-excursion-bounded domain
\begin{align}
\label{eq:omega-dwell}
  \Omega_\tau^{\mathrm{dwell}} \;\triangleq\;
    \bigl\{ x \in \mathcal{X} :
      &\text{every slack run} \le \tau_{\mathrm{slack,max}},\;\\
      &\text{duty cycle} \le \eta_{\max} \bigr\},\nonumber
\end{align}
with $\tau_{\mathrm{slack,max}} = \SI{40}{\milli\second}$ and $\eta_{\max} = 2.5\%$, the distributed Kelvin--Voigt rope dynamics are approximated by the lumped scalar model with closed-loop shape-mode deviation
\begin{equation}
  \|T_i^{\mathrm{true}} - T_i^{\mathrm{qs}}\|
    \;\le\; C_1\,\delta + C_2\,\eta_{\max},
  \label{eq:reduction-error}
\end{equation}
for constants $C_1, C_2 > 0$ depending on the bead-chain parameters $(m_{\mathrm{bead}}, c_s, k_s, N_b)$ and the wind bound $W_{\max}$, where $T_i^{\mathrm{qs}} \triangleq k_{\mathrm{eff}}(\ell_i(t) - L)^+$ is the\emph{ rope-specific} quasi-static tension evaluated at the \emph{same} current chord length $\ell_i(t)$ as $T_i^{\mathrm{true}}$.  This bound is on the shape-mode deviation of rope $i$ from its own quasi-static approximation; it is distinct from the inter-rope tension asymmetry $\varepsilon_i = T_i - T_{\mathrm{eff}}$ reported in the simulation assessment (\cref{sec:VI-reduction}), which is driven by formation geometry and can be $O(10)$\,N. The reduced-order model is therefore admissible as the plant for the stability analysis of \cref{sec:stability}.
\end{theorem}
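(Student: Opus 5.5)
The plan is a singular-perturbation (Tikhonov) argument on the bead chain, run separately on taut stretches and on slack excursions, with the two contributions added at the end. Write the internal coordinates of rope $i$ as the bead displacements relative to the straight-line interpolation between $\vect p_{a,i}$ and $\vect p_{b,i}$. Call this the shape-mode vector $\vect z_i\in\R^{3N_b}$; the chord length $\ell_i$ is a slow variable driven by the drone and payload motion.

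\textbf{Taut stretches.} Rescale time by $\tau_{\mathrm{rope}}$. The bead equations then take the standard two-timescale form $\delta\,\dot{\vect z}_i = g(\vect z_i,\ell_i,\dot\ell_i)$. The boundary-layer system $\vect z_i' = g(\vect z_i,\ell_i,0)$ with frozen $\ell_i$ is a linear (on the taut set) damped spring chain. It has the unique equilibrium $\vect z_i=0$: uniform segment stretch $(\ell_i-L)/N_{\mathrm{seg}}$, corrected by a bead-gravity sag that is itself $O(m_{\mathrm{bead}}g/k_s)$ and hence $O(\delta^2)$. At that equilibrium the end force is exactly $k_{\mathrm{eff}}(\ell_i-L)^+$, which is \eqref{eq:keff}. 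Because $c_s>0$, this equilibrium is exponentially stable, with a rate set by the smallest chain eigenfrequency. A quadratic Lyapunov function of the form $\tfrac12\vect z_i^\top M\vect z_i+\tfrac12\vect z_i^\top K\vect z_i$ plus a small cross term gives this uniformly in $\ell_i$ on the taut set.

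The drive $\dot\ell_i$ comes from drone and payload velocities and accelerations. These are bounded on the closed loop: thrust is saturated, wind obeys \eqref{eq:disturbance-set}, and the reference \eqref{eq:lemniscate} has bounded derivatives. Tikhonov's theorem, or equivalently a composite Lyapunov estimate, then gives $\|\vect z_i(t)\|\le c\,e^{-\lambda t/\delta}\|\vect z_i(t_0)\|+O(\delta)$. Since the end force is linear in the first-segment stretch, this yields $|T_i^{\mathrm{true}}-T_i^{\mathrm{qs}}|\le C_1\delta$ after the initial layer. The constant $C_1$ depends on $(m_{\mathrm{bead}},c_s,k_s,N_b)$ and on the drive bound, which involves $W_{\max}$.

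\textbf{Slack excursions.} A slack run is where the segments enter the unilateral region $(\cdot)^+$. During such a run the tension error is bounded by the maximum tension available, roughly $k_{\mathrm{eff}}$ times the peak stretch, which is bounded on the domain. On re-tautening, the shape modes restart from an $O(1)$ deviation and decay on the fast layer in time $O(\tau_{\mathrm{rope}}\log(1/\delta))$. Averaged over a window of order $\tau_{\mathrm{pend}}$, the error contributed by slack is therefore proportional to the fraction of time spent slack plus re-tautening layers. The bound $\tau_{\mathrm{slack,max}}$ caps each run's contribution, and the duty-cycle constraint caps their frequency, giving a $C_2\eta_{\max}$ term.

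I read \eqref{eq:reduction-error} as a norm over a window of length $\tau_{\mathrm{pend}}$, for example an $L^1$ or averaged norm. A pointwise sup bound cannot be $O(\eta_{\max})$ during a slack instant, so I would state this interpretation explicitly. Adding the taut and slack contributions gives \eqref{eq:reduction-error}.

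\textbf{Main obstacle.} I expect the hard part to be the re-tautening layers: showing that the nonsmooth $(\cdot)^+$ switching does not pump energy into the shape modes faster than the damping removes it. I would first try a dissipative bound. Each segment's elastic-plus-damping energy is nonincreasing across switching in the slack region, because slack segments store no elastic energy. This gives a uniform bound on $\vect z_i$ at re-tautening that depends only on the wind and drive bounds, and the taut-interval exponential estimate can then be applied directly.
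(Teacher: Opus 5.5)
Your taut-interval argument is essentially the paper's Part 1: Tikhonov scaling of the drive, a quadratic shape-mode Lyapunov function, and reading the tension error off the first-segment stretch. Your windowed reading of \eqref{eq:reduction-error} is also the one that the paper's own ``aggregating over the slack-duty budget'' step actually supports.

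\textbf{The gap: how $\eta_{\max}$ enters.} You say $\tau_{\mathrm{slack,max}}$ caps each run and the duty-cycle constraint caps their frequency. It does not cap their frequency. $\eta_{\max}$ bounds the total slack time in a window, not the number of runs, and $\Omega_\tau^{\mathrm{dwell}}$ imposes no minimum run length; chattering of $\ell_i$ about $L$ produces many short runs at negligible duty. You restart the shape modes from an $O(1)$ deviation $E_0$ at every re-tautening, and your dissipative bound in the last paragraph is likewise uniform in the run. So each re-tautening layer contributes a fixed integrated error of order $E_0/|s_1|$, with $s_1$ the slowest shape-mode pole. The averaged error then scales with the number of runs, not with $\eta$.

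What is missing is a per-event estimate proportional to the run's own length. The paper supplies it by bounding the re-tension overshoot by $k_{\mathrm{eff}}\Delta\ell_{\mathrm{sag}}$. Here $\Delta\ell_{\mathrm{sag}}$ is the chord shortening accrued during that run, hence at most $\sup|\dot\ell_i|$ times its duration $\tau_j$. Integrating the fast decay gives a contribution of order $k_{\mathrm{eff}}\sup|\dot\ell_i|\,\tau_j/|s_1|$ per run. Summing over runs gives a term proportional to total slack time, i.e.\ to $\eta$. The per-instant overshoot itself is controlled by $\tau_{\mathrm{slack,max}}$, not by the duty cycle, which is why the averaged reading is needed.

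\textbf{A secondary problem: $\vect z_i\in\R^{3N_b}$.} With transverse components included, the frozen-chord layer is neither linear nor uniformly damped. Kelvin--Voigt damping acts on segment length rate, which transverse bead motion changes only at second order, so the linearized transverse modes receive no damping from $c_s$. Their stiffness is of order $T_i/\ell_{\mathrm{seg}}$, which vanishes as $\ell_i\to L^{+}$. Hence ``exponentially stable because $c_s>0$, uniformly in $\ell_i$'' fails for that block. The paper avoids this by taking $\tilde{\mathbf q}\in\R^{N_b}$ along the chord only. You should do the same, and treat transverse motion (which enters the drone-end tension only quadratically) as a separately assumed or separately bounded term.

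Your cross term, by contrast, is a genuine improvement. With the bare energy $W$ one only gets $\dot W=-\dot{\tilde{\mathbf q}}^\top C_b\dot{\tilde{\mathbf q}}+\dots$, which dissipates the kinetic part alone. The paper's step $\dot W\le-\alpha_b W+D_b$ therefore implicitly needs exactly your modification, or a modal argument using $C_b=(c_s/k_s)K_b$.
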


\begin{proof}[Sketch]
The proof has two parts corresponding to the two terms in \eqref{eq:reduction-error}.

\noindent\textbf{Part 1 ($C_1\delta$, taut intervals).} Let $\tilde{\mathbf{q}}\in\R^{N_b}$ collect bead deviations from the quasi-static slow-manifold equilibrium $\bar{\mathbf{q}}(t)$ along the rope chord. Linearization yields the shape-mode dynamics $M_b\ddot{\tilde{\mathbf{q}}}+C_b\dot{\tilde{\mathbf{q}}}+K_b\tilde{\mathbf{q}}=-M_b\ddot{\bar{\mathbf{q}}}$ with $C_b,K_b$ the chain damping and stiffness matrices. Differentiating the shape-mode energy $W=\tfrac12\dot{\tilde{\mathbf{q}}}^\top M_b\dot{\tilde{\mathbf{q}}}+\tfrac12\tilde{\mathbf{q}}^\top K_b\tilde{\mathbf{q}}$ and applying Young's inequality gives $\dot W\le -\alpha_b W + D_b$ with $\alpha_b=\lambda_{\min}(C_b)/m_{\mathrm{bead}}$ and $D_b=O(\delta^2)$ via the Tikhonov scaling $\|\ddot{\bar{\mathbf{q}}}\|=O(\delta)$. Gronwall yields $W^\infty=D_b/\alpha_b=O(\delta^2)$; the drone-side tension deviation $T_i^{\mathrm{true}}-T_i^{\mathrm{qs}}=k_s\mathbf{e}_1^\top\tilde{\mathbf{q}}$ then satisfies $\|T_i^{\mathrm{true}}-T_i^{\mathrm{qs}}\|\le C_1\delta\approx\SI{0.072}{\newton}$ after the $\sim\SI{6}{\milli\second}$ transient. The slowest mode has $\zeta=1.2$, $s_1\approx-\SI{538}{\second^{-1}}$.

\noindent\textbf{Part 2 ($C_2\eta_{\max}$, slack intervals).} On any slack run of duration $\le\tau_{\mathrm{slack,max}}$, the chord shortens by $\Delta\ell_{\mathrm{sag}}$ and the drone-side end segment must re-elongate on re-tension, producing a re-tension overshoot bounded by $k_{\mathrm{eff}}\Delta\ell_{\mathrm{sag}}$. Aggregating over the slack-duty budget gives $C_2\approx\SI{0.22}{\newton}$. At the worst observed $\eta_{\max}=1.66\%$, the combined bound is $C_1\delta+C_2\eta_{\max}\approx\SI{0.076}{\newton}$ per rope ($<0.4\%$ of nominal per-drone tension). Both constants depend only on $(m_{\mathrm{bead}},c_s,k_s,N_b)$ and $W_{\max}$. Detailed Gronwall arithmetic and shape-mode constants are in the supplementary material.
\end{proof}

\Cref{tab:domain-audit} shows that all six missions remain inside $\Omega_\tau^{\mathrm{dwell}}$ with margin on every gate: worst slack run $\SI{35.8}{\milli\second}$ ($\SI{4.2}{\milli\second}$ headroom), duty cycle $1.66\%$ ($0.85$~pp headroom), QP-transition fraction $0.10\%$. Accordingly, the reduction is used on the demonstrated operating regime rather than as an all-taut approximation.

The per-drone controller and the lemma stack in \cref{sec:stability} are structurally independent of $N$. The team size enters only through the post-fault redistributed load
\begin{equation}
  T^{\mathrm{post}}(N,F) = \frac{m_L g}{N-F},
   \label{eq:tension-postfault-N}
\end{equation}
yielding the actuator-margin condition
\begin{equation}
  \frac{m_L g}{N-F} \le \kappa_{\mathrm{act}}\,f_{\max},
   \label{eq:actuator-envelope}
\end{equation}
with $\kappa_{\mathrm{act}}\in(0,1)$ reserving headroom for tilt projection, anti-swing residual, and dynamic centripetal load. At canonical $\kappa_{\mathrm{act}}=0.82$, $f_{\max}=\SI{150}{\newton}$, and the demonstrated $(N,m_L,F)=(5,\SI{10}{\kilo\gram},2)$, \eqref{eq:actuator-envelope} reads $\SI{32.7}{\newton}\le\SI{123}{\newton}$, i.e.\ $27\%$ of envelope. The stability argument of \cref{sec:stability} remains unchanged at any $(N,m_L,F)$ satisfying \eqref{eq:actuator-envelope}.
\label{sec:formal-problem}
% ----------------------------------------------------------------

With the plant, fault model, reference trajectory, and reduction in place, we state the formal problem. Unilateral cable coupling \eqref{eq:complementarity}, discrete structural fault transitions \eqref{eq:fault-tension-zero}--\eqref{eq:S-transition}, admissible disturbances \eqref{eq:disturbance-set}, and the local information pattern \eqref{eq:info-set} together yield a hybrid decentralized control problem outside standard cooperative-transport formulations. The design variable is not a centralized control sequence but a family of local causal feedback laws. The problem, posed on the reduced-order model supported by \cref{thm:reduction}, is as follows.

\begin{definition}[Fault-tolerant decentralized aerial transport]
\label{def:problem}
Given the plant \eqref{eq:drone-eom}--\eqref{eq:payload-eom}, the complementarity constraint \eqref{eq:complementarity}, the mission reference \eqref{eq:lemniscate}, all $d(\cdot)\in\mathcal{D}$ \eqref{eq:disturbance-set}, and all $\sigma\in\mathcal{F}$ \eqref{eq:fault-schedule-set}, find a causal controller family $\{\pi_i\}_{i=1}^N\in\Pi_{\mathrm{loc}}^N$ such that the closed loop satisfies the following.
\begin{enumerate}[label=\textbf{P\arabic*},leftmargin=2.0em]
  \item \textbf{Nominal tracking.} The payload tracking error $e_L(t)\triangleq\vect p_L(t)-\vect p_L^d(t)$ is ultimately practically bounded,
    $\limsup_{t\to\infty}\|e_L(t)\|\le\bar e_\infty+\bar e_0(W_{\max})$, with $\bar e_0\in\mathcal{K}$.
    \label{eq:nominal-practical-bound}
  \item \textbf{Passive fault absorption.} After any fault $t_k^\star$ obeying \eqref{eq:dwell-assumption}, $e_L$ recovers to a bounded envelope,
    \begin{align}
    \label{eq:recovery-bound}
      \|e_L(t)\| \le
        &\beta\bigl(\|e_L(t_k^{\star,-})\|,\,t-t_k^\star\bigr)
         + \Gamma_f\,e^{-\lambda(t-t_k^\star)}\\
        &+ \bar e_\infty + \bar e_0(W_{\max}),\quad t\ge t_k^\star,\nonumber
    \end{align}
    for some $\beta\in\mathcal{KL}$ and constants $\Gamma_f,\lambda>0$, with per-fault-cycle contraction $V(x(t_{k+1}^{\star,-}))\le\rho V(x(t_k^{\star,-}))+c$ for $\rho\in(0,1)$ and explicit $c>0$.
  \item \textbf{Actuator feasibility.} $f_i(t)\in[f_{\min},f_{\max}]$ and $\vect\tau_i(t)\in[-\tau_{\max},\tau_{\max}]^3$ for all $t\ge 0$ and all admissible fault sequences, with non-zero margin.
\end{enumerate}
\end{definition}

The discrete drop in $|\mathcal{S}|$ at each severance, the non-smooth taut-to-slack transitions inside $\Omega_\tau^{\mathrm{dwell}}$, and the local information constraint together require a hybrid Lyapunov argument on the tension mismatch
\begin{equation}
  \vect\varepsilon_T(t) \;\triangleq\; \vect T(t) - \vect T^{\mathrm{qs}}(t),
  \label{eq:tension-error-state}
\end{equation}
developed in \cref{sec:stability}.

%% file: fig_physical_setup.tex
\begin{figure}[t]
\centering
\begin{tikzpicture}[
            line cap=round,
            line join=round,
    >={Stealth[length=2mm,width=1.6mm]},
            drone/.style={draw=blue!55!black, line width=0.85pt,
                                                      fill=blue!6, rounded corners=1pt,
                                                      minimum width=11mm, minimum height=4mm,
                                                      font=\footnotesize, inner sep=1pt},
            payload/.style={draw=orange!70!black, line width=0.85pt,
                                                            fill=orange!16, circle, minimum size=9.5mm,
                                                            inner sep=0pt, font=\footnotesize\bfseries},
            cable/.style={line width=0.95pt, black!60},
            cablesev/.style={line width=0.95pt, red!65!black,
                                                             dash pattern=on 2.2pt off 1.6pt},
            wind/.style={->, line width=0.95pt, teal!70!black, decorate,
                 decoration={snake, amplitude=0.5mm, segment length=2mm,
                             post length=1mm}},
            framelab/.style={font=\scriptsize\itshape, text=black!75},
            lab/.style={font=\scriptsize, inner sep=1pt, text=black!90},
            callout/.style={draw=black!55, line width=0.75pt,
                                                            rounded corners=2pt, font=\scriptsize,
                                                            align=left, inner xsep=4pt, inner ysep=3pt}
  ]

  %====== payload at origin ======
  \node[payload] (L) at (0,0) {$m_L$};
      \node[lab, below=1pt of L] {\textbf{payload}, \SI{10}{\kilo\gram}};

  %====== five drones evenly spaced overhead ======
      \node[drone] (D1) at (-3.0, 2.4) {\textbf{Drone 1}};
      \node[drone] (D2) at (-1.5, 2.9) {\textbf{Drone 2}};
      \node[drone] (D3) at ( 0.0, 3.1) {\textbf{Drone 3}};
      \node[drone] (D4) at ( 1.5, 2.9) {\textbf{Drone 4}};
      \node[drone] (D5) at ( 3.0, 2.4) {\textbf{Drone 5}};

  %====== cables: 4 taut, 1 severed (drone 3) ======
  \draw[cable] (D1.south) --
        node[lab, left=-1pt, pos=0.55] {$\mathbf{T_1}$} ($(L.north)+(-0.3, 0)$);
  \draw[cable] (D2.south) --
        node[lab, left=-1pt, pos=0.55] {$\mathbf{T_2}$} ($(L.north)+(-0.15, 0.05)$);
  \draw[cablesev] (D3.south) --
        node[lab, midway, above=1pt, fill=white, text=red!70!black] {\textbf{severed}}
        ($(L.north)+(0, 0.05)$);
  \draw[cable] (D4.south) --
        node[lab, right=-1pt, pos=0.55] {$\mathbf{T_4}$} ($(L.north)+(0.15, 0.05)$);
  \draw[cable] (D5.south) --
        node[lab, right=-1pt, pos=0.55] {$\mathbf{T_5}$} ($(L.north)+(0.3, 0)$);

  %====== world frame (bottom left, away from cables) ======
  \draw[->, line width=0.9pt, black!80] (-3.6,-1.4) -- ++(0.7,0)
        node[framelab, right=-1pt] {$\hat{e}_1$};
  \draw[->, line width=0.9pt, black!80] (-3.6,-1.4) -- ++(0,0.7)
        node[framelab, above=-1pt] {$\hat{e}_3$};
  \node[framelab] at (-3.55,-1.6) {$\mathcal{W}$};

  %====== Dryden wind arrow (right side) ======
  \draw[wind] (3.6, 0.6) -- ++(-1.3, 0.0);
  \node[lab, teal!60!black, anchor=west] at (3.6, 0.6) {$\mathbf{F}_{w,L}(t)$};
      \node[lab, teal!60!black, anchor=west] at (3.0, 0.25)
        {\textbf{Dryden}, $\mathbf{\SI{4}{\metre\per\second}}$};

  %====== rope length marker on outermost cable ======
  \draw[<->, dashed, black!50]
        ($(D1.south)+(-0.18, 0.02)$) -- ($(L.north)+(-0.45, 0.05)$)
        node[lab, midway, sloped, fill=white, inner sep=1pt]
        {$\mathbf{L = \SI{1.25}{\metre}}$};

  %====== complementarity callout (placed BELOW everything) ======
  \node[callout, fill=yellow!10, anchor=north west]
       at (-3.7, -1.95)
       {Unilateral cable $i$:\\
        $\mathbf{T_i \ge 0,\;\; \ell_i - L \le 0,\;\; T_i\,(\ell_i - L) = 0}$};

  %====== survivor set transition (placed BELOW, right) ======
  \node[callout, fill=blue!5, anchor=north east]
       at (5.0, -1.95)
       {Fault at $t_k^\star$:\\
        $\mathbf{\mathcal{S}(t^{\star+}) = \mathcal{S}(t^{\star-}) \setminus \{i_k^\star\}}$};

\end{tikzpicture}
\caption{Cooperative transport configuration and fault model.
Five quadrotors carry a \SI{10}{\kilo\gram} point-mass payload through
unilateral cables of rest length $L = \SI{1.25}{\metre}$. The
illustration shows one severed cable, the payload wind load,
the world frame, the cable complementarity constraint, and the
survivor-set update at a fault. Each drone uses only its own state,
its own scalar tension $T_i$, and the locally observed payload velocity
$\vect{v}_L$; no peer drone or peer-cable signal is exchanged.}
\label{fig:physical-setup}
\end{figure}

%% file: Section_III_Proposed_Method.tex
Each drone runs a three-layer cascade: an outer-loop PD with anti-swing slot shift, a single-step convex QP on the tilt-and-thrust envelope, and an inner-loop attitude PD. The only nonstandard baseline element is the local feed-forward identity $T_i^{\mathrm{ff}}=T_i$, which maps tension redistribution after severance into immediate thrust compensation.

\input{fig_controller_cascade.tex}

Each drone is pre-assigned a hovering slot displaced vertically above the payload reference point and laterally by a fixed formation offset,
\begin{equation}
  \vect p_{\mathrm{slot},i}(t)
    \;=\; \vect p_L^d(t) + \boldsymbol\delta_i
        + d_{\mathrm{rope}}\,\hat{\vect e}_3,
  \label{eq:slot-pos}
\end{equation}
where $\boldsymbol\delta_i \in \R^3$ is the formation offset (overridable at run time), $d_{\mathrm{rope}} = \SI{1.25}{\metre}$ is the vertical slot elevation matching $L$, and $\hat{\vect e}_3$ is the upward unit vector. The slot velocity is $\vect v_{\mathrm{slot},i}(t) = \vect v_L^d(t)$. To damp the payload pendulum passively, we shift the slot relative to the payload's horizontal velocity. With $\vect v_{L,\perp} \triangleq [v_{L,x}, v_{L,y}, 0]^{\top}$, the horizontal swing correction is
\begin{equation}
  \vect s_i(t) =
    \begin{cases}
      -k_{\mathrm{swing}}\,\vect v_{L,\perp}
        & \text{if }
          k_{\mathrm{swing}}\|\vect v_{L,\perp}\| \le s_{\max}, \\[2pt]
      -s_{\max}\,\hat{\vect v}_{L,\perp}
        & \text{otherwise,}
    \end{cases}
  \label{eq:swing-shift}
\end{equation}
where $\hat{\vect v}_{L,\perp}$ is the unit vector of $\vect v_{L,\perp}$. The shifted slot position is $\vect p_{\mathrm{slot},i}^{\mathrm{dyn}}(t) = \vect p_{\mathrm{slot},i}(t) + \vect s_i(t)$; therefore, the position and velocity errors are
\begin{align}
    \label{eq:slot-error}
  \vect e_p(t) &= \vect p_{\mathrm{slot},i}^{\mathrm{dyn}}(t)
                 - \vect p_i(t),\\
  \vect e_v(t) &= \vect v_{\mathrm{slot},i}(t) - \vect v_i(t).\nonumber
\end{align}

The outer loop computes a target body acceleration,
\begin{align}
\label{eq:a-target}
  \vect a_{\mathrm{target}}(t)
    =& \underbrace{%
        \begin{bmatrix}
          K_p^{xy}\, e_{p,x} + K_d^{xy}\, e_{v,x} \\[2pt]
          K_p^{xy}\, e_{p,y} + K_d^{xy}\, e_{v,y} \\[2pt]
          K_p^{z}\;\; e_{p,z} + K_d^{z}\;\; e_{v,z}
        \end{bmatrix}}_{\vect a_{\mathrm{track}}}\\
    &+\; w_{\mathrm{swing}}
      \begin{bmatrix}
        -k_{\mathrm{swing}}\, v_{L,x} \\[2pt]
        -k_{\mathrm{swing}}\, v_{L,y} \\[2pt]
        0
      \end{bmatrix}.\nonumber
\end{align}
The altitude gain $K_p^z = 100$ exceeds the horizontal gain $K_p^{xy} = 30$ because vertical rope forces dominate the disturbance budget; the formal pole and Lyapunov-matrix analysis is in \cref{lem:altitude-iss}. When $L_1$ is active, its altitude correction $u_{\mathrm{ad}}(t)$ is added to the $z$-component of $\vect a_{\mathrm{target}}$ between the outer PD and the QP (\cref{sec:extensions}); horizontal channels are unmodified.

The target acceleration $\vect a_{\mathrm{target}}$ may lie outside the tilt-and-thrust actuator envelope. We project onto the feasible set with a single-step convex QP that also regularises large commands:
\begin{subequations}
\label{eq:qp}
\begin{align}
  \vect a_d^\star
    &= \operatorname*{argmin}_{\vect a_d \in \R^3}
      \; w_t\,\bigl\|\vect a_d - \vect a_{\mathrm{target}}\bigr\|^2
      + w_e\,\|\vect a_d\|^2,
    \label{eq:qp-cost} \\
  \text{s.t.}\;\;
    &\abs{a_{d,x}},\;\abs{a_{d,y}}
      \le a_{\mathrm{tilt}} \triangleq g\tan\theta_{\max},
    \label{eq:qp-tilt} \\
    &a_{z,\min}(t) \le a_{d,z} \le a_{z,\max}(t),
    \label{eq:qp-vert}
\end{align}
\end{subequations}
where the vertical bounds incorporate the rope tension feed-forward,
\begin{align}
\label{eq:az-bounds}
  a_{z,\min}(t) &= \frac{f_{\min} - T_i^{\mathrm{ff}}(t)}{m_i} - g,\\
  a_{z,\max}(t) &= \frac{f_{\max} - T_i^{\mathrm{ff}}(t)}{m_i} - g.\nonumber
\end{align}
The vertical interval in \eqref{eq:qp-vert} is non-empty whenever $f_{\min} < f_{\max}$. In the reported simulations, hover feasibility holds and the D1 active-set condition is satisfied at the $0.10\%$ worst QP-transition fraction (\cref{tab:domain-audit}). For the canonical campaigns, the feed-forward is applied from $t=0$:
\begin{equation}
  \boxed{T_i^{\mathrm{ff}}(t) \;=\; T_i(t).}
  \label{eq:Tff}
\end{equation}
A smoothstep pickup ramp $\alpha(\tau) = 3\tau^2 - 2\tau^3$ handles cold-start engagement; it is bypassed in every reported campaign.

Given the QP solution $a_{d,z}^\star$, the commanded thrust is
\begin{equation}
\label{eq:thrust}
  f_i \;=\; \operatorname{clamp}\!\bigl(\,
      m_i(g + a_{d,z}^\star) + T_i^{\mathrm{ff}},\;
      f_{\min},\; f_{\max}\bigr).
\end{equation}
Under the small-tilt regime enforced by \eqref{eq:qp-tilt}, the horizontal acceleration commands map to desired pitch and roll via $\theta^d_{\mathrm{pitch}}=\operatorname{sat}(a_{d,x}^\star/g,\pm\theta_{\max})$, $\theta^d_{\mathrm{roll}}=\operatorname{sat}(-a_{d,y}^\star/g,\pm\theta_{\max})$, tracked by a per-axis Euler-angle PD with gains $(K_p^{\mathrm{att}},K_d^{\mathrm{att}})$ and torques saturated to $[-\tau_{\max},\tau_{\max}]^3$. The yaw channel uses the standard $\SOthree$-projected error $-(R_{21}-R_{12})/2$. \Cref{tab:baseline-params} collects the canonical controller parameters used throughout all simulation campaigns.

\begin{table}[t]
  \centering
  \caption{Proposed controller parameters (canonical values used
  in all simulation campaigns).}
  \label{tab:baseline-params}
  \small
  \renewcommand{\arraystretch}{1.05}
  \begin{tabular}{lrl}
    \toprule
    Parameter & Value & Role \\
    \midrule
    $K_p^{xy},\; K_d^{xy}$
      & $30,\;15$
      & horizontal outer loop \\
    $K_p^{z},\;\; K_d^{z}$
      & $100,\;24$
      & altitude outer loop \\
    $K_p^{\mathrm{att}},\; K_d^{\mathrm{att}}$
      & $25,\;4$
      & attitude inner loop \\
    $k_{\mathrm{swing}}$
      & $0.8$
      & anti-swing gain \\
    $w_{\mathrm{swing}}$
      & $0.3$
      & anti-swing weight in $\vect a_{\mathrm{target}}$ \\
    $s_{\max}$
      & \SI{0.3}{\metre}
      & slot-shift saturation \\
    $d_{\mathrm{rope}}$
      & \SI{1.25}{\metre}
      & slot $z$-offset (nominal rope length) \\
    $w_t,\; w_e$
      & $1.0,\;0.02$
      & QP tracking / effort weights \\
    $\theta_{\max}$
      & \SI{0.6}{\radian}
      & tilt ceiling (${\approx}34^{\circ}$) \\
    $f_{\min},\; f_{\max}$
      & $0,\;\SI{150}{\newton}$
      & thrust envelope \\
    $\tau_{\max}$
      & \SI{10}{\newton\metre}
      & torque saturation \\
    $\bar{T}$
      & \SI{19.62}{\newton}
      & nominal per-drone tension ($m_L g/N$) \\
    \bottomrule
  \end{tabular}
\end{table}

%% file: fig_controller_cascade.tex
\begin{figure*}[t]
\centering
\begin{tikzpicture}[
      line cap=round,
      line join=round,
    >={Stealth[length=2mm,width=1.6mm]},
      blk/.style={draw=blue!55!black, line width=0.85pt,
                        rounded corners=1pt, fill=blue!6,
                        minimum height=11mm, minimum width=22mm,
                        align=center, font=\footnotesize, inner sep=2pt},
      extblk/.style={draw=orange!70!black, line width=0.85pt,
                           rounded corners=1pt, fill=orange!10,
                           minimum height=10mm, minimum width=26mm,
                           align=center, font=\footnotesize, inner sep=2pt},
      sens/.style={draw=green!45!black, line width=0.85pt,
                         rounded corners=1pt, fill=green!8,
                         minimum height=9mm, minimum width=30mm,
                         align=center, font=\footnotesize, inner sep=2pt},
      plant/.style={draw=black!60, line width=0.85pt,
                          rounded corners=1pt, fill=black!4,
                          minimum height=11mm, minimum width=24mm,
                          align=center, font=\footnotesize, inner sep=2pt},
      sum/.style={draw=black!80, line width=0.85pt, circle, inner sep=0.5pt,
                minimum size=4.5mm, font=\scriptsize},
      sig/.style={font=\scriptsize, inner sep=1.2pt, fill=white, text=black!90},
      boxed/.style={draw=black!45, dashed, rounded corners=2pt, inner sep=4pt},
      paneltitle/.style={font=\scriptsize\bfseries, fill=white,
                                 inner sep=1.5pt, text=black!80},
      flow/.style={->, line width=0.95pt, black!85},
      fftap/.style={->, line width=0.95pt, red!70!black},
      senseflow/.style={->, line width=0.95pt, blue!60!black},
      extflow/.style={->, line width=0.95pt, dashed, orange!75!black}
  ]

  % --- Baseline cascade row (y = 0) ---
  \node[sig, font=\scriptsize] (refin) at (-8.6, 0)
        {$\mathbf{p}_L^d,\,\mathbf{v}_L^d$};
      \node[blk] (pd)    at (-6.0, 0) {Outer PD\\anti-swing\\slot shift};
  \node[sum] (sumL1) at (-3.0, 0) {$+$};
      \node[blk] (qp)    at ( 0.0, 0) {Single-step QP\\tilt/thrust\\envelope};
  \node[blk] (att)   at ( 3.4, 0) {Attitude PD\\(ZYX Euler)};
  \node[plant] (plant) at ( 6.7, 0) {Drone $i$ + cable\\(reduced model)};

  % horizontal cascade flow
  \draw[flow] (refin) -- (pd);
  \draw[flow] (pd) -- node[sig, above] {$\mathbf{a}_{\mathrm{target}}$}
        (sumL1);
  \draw[flow] (sumL1) -- (qp);
  \draw[flow] (qp) -- node[sig, above] {$\mathbf{a}_d^\star$} (att);
  \draw[flow] (att) -- node[sig, above] {$f_i,\,\boldsymbol{\tau}_i$}
        (plant);

  % --- Feed-forward identity tap (sumT directly above QP) ---
      \node[sum, draw=red!70!black, fill=red!8] (sumT) at (0, 1.2) {$+$};
  % Identity label: placed ABOVE sumT (out of any tap path);
  % no white fill so it does not look like a small box.
  \node[anchor=south, font=\scriptsize\bfseries, red!70!black,
        inner sep=2pt] at (sumT.north)
        {$\mathbf{T_i^{\mathrm{ff}}\!=\!T_i}$};
  \draw[fftap] (sumT.south) -- (qp.north);

  % --- Sensor band (y = -2.6, centre at x = 1.7) ---
  % x = 1.7 sits in the clear gap between QP (right edge x=1.1)
  % and att (left edge x=2.3), so sense.north runs vertically up
  % to sumT without crossing any block.
  \node[sens] (sense) at (1.7, -2.2)
        {Local sensors\\$x_i,\;\; T_i,\;\; v_L$};

  % Plant -> sensors: |- (vertical down then horizontal left)
  \draw[senseflow] (plant.south) |- (sense.east);

  % Sensors -> PD: horizontal left at y = -2.6, then vertical up
  % at x = -6.0.  Single draw call -> ONE arrowhead at pd.south.
  \draw[senseflow] (sense.west)
        -- node[sig, pos=0.63, above] {$\mathbf{x_i,\,v_L}$}
        (-6.0, -2.2) -- (pd.south);

  % T_i tap (red) — sense.NORTH straight up, then horizontal
  % left to sumT.east.  Single draw call -> ONE arrowhead at
  % sumT.east.  Vertical at x=1.7 clears QP (right edge x=1.1)
  % and att (left edge x=2.3); horizontal at y=1.7 sits well
  % above all baseline blocks (top edges at y ~= 0.55).
  % T_i label placed near the start of the vertical (well below
  % the cascade row), with no fill so it does not look like a
  % small block sitting on the cascade.
  \draw[fftap] (sense.north)
      -- node[anchor=west, font=\scriptsize\bfseries,
                red!70!black, pos=0.20, inner sep=1pt]
            {$\mathbf{T_i}$}
        (sense.north |- sumT.east) -- (sumT.east);

  % --- Bracket the baseline ---
  \begin{scope}[on background layer]
    \node[boxed,
          fit=(refin) (pd) (sumL1) (qp) (att) (plant) (sumT) (sense)]
         (baseline) {};
  \end{scope}
  \node[paneltitle, anchor=south west]
        at ([xshift=6pt]baseline.north west)
        {Baseline controller};

  % --- Extension band (y = -4.6) ---
  \node[extblk] (reshape) at (-6.0, -4.2)
        {Reshape supervisor\\$\lceil\log_2 N\rceil$-bit broadcast};
  \node[extblk] (l1) at (-3.0, -4.2)
        {$L_1$ adaptive\\altitude augmentation};
  \node[extblk] (mpc) at ( 0.0, -4.2)
        {Receding-horizon MPC\\replaces QP};

  % All three extension injection arrows are strictly vertical.
  \draw[extflow] (reshape.north)
        -- node[sig, pos=0.78, right=2pt]
              {$\mathbf{\boldsymbol{\delta}_i^{\mathrm{ovr}}}$}
        (pd.south);
  \draw[extflow] (l1.north)
        -- node[sig, pos=0.78, right=2pt] {$u_{\mathrm{ad}}$}
        (sumL1.south);
  \draw[extflow] (mpc.north)
        -- node[sig, pos=0.78, right=2pt] {\textbf{replaces QP}}
        (qp.south);

  % --- Bracket the extension layers ---
  \begin{scope}[on background layer]
    \node[boxed, fit=(reshape) (l1) (mpc), draw=orange!70!black]
         (ext) {};
  \end{scope}
      \node[paneltitle, text=orange!75!black, anchor=south west]
                        at ([xshift=6pt]ext.north west)
                        {Optional extensions};

\end{tikzpicture}
\caption{Per-drone control cascade and extension injection points.
Top framed panel: baseline continuous-time controller with outer-loop
PD and anti-swing slot shift, a single-step QP on the tilt/thrust
envelope, and inner attitude PD. The red local path is the key
feed-forward identity $T_i^{\mathrm{ff}} = T_i$, which maps the measured
rope tension directly into altitude thrust feed-forward and supplies
the bounded-jump mechanism used in \cref{thm:hybrid-stability}.
Bottom framed panel: optional extensions. The $L_1$ layer injects
$u_{\mathrm{ad}}$ between the outer PD and the QP, the MPC module
replaces the QP, and the reshape supervisor overrides the formation
slot $\boldsymbol{\delta}_i$ after a single
$\lceil\log_2 N\rceil$-bit broadcast.}
\label{fig:controller-cascade}
\end{figure*}

%% file: Section_IV_Stability_Analysis.tex
The baseline controller of \cref{sec:proposed} achieves fault
tolerance through a single structural choice: setting the altitude
thrust feed-forward equal to the drone's own measured rope tension,
$T_i^{\mathrm{ff}} = T_i$. We establish that this identity renders
the closed loop \emph{hybrid practical input-to-state stable}
(hybrid practical-ISS) under the cable-severance fault model of
\cref{sec:reduction-and-reference}. When a peer cable severs,
$T_i$ rises immediately, and the feed-forward converts that step
into a proportional thrust increment at the plant's native rate,
without any detection, annunciation, or reconfiguration interval.

The lemmas of \cref{sec:five-lemmas} verify the hybrid-systems hypotheses of~\cite{GoebelSanfeliceTeel2012} on $\Omega_\tau^{\mathrm{dwell}}$. The decay rates $\lambda,\rho$ are explicit from the Lyapunov matrices $P_v,P_{xy}$ in \cref{lem:altitude-iss}; the jump bound $\chi(\Delta_f)$ is structurally explicit via \eqref{eq:chi-explicit}, with the tension-weight $\kappa_V$ left as a calibration parameter; this does not affect the stated $\rho<1$ contraction condition.

% ----------------------------------------------------------
\subsection{Setup, Notation, and Analytical Hypotheses}
\label{sec:stability-setup}
% ----------------------------------------------------------

The analysis operates on the reduced-order model of
\cref{sec:reduction-and-reference}. Let
$e_{p,z,i}(t) = p_{\mathrm{slot},i,z}(t) - p_{i,z}(t)$ be
drone $i$'s altitude slot-tracking error and
$\vect{e}_p(t) \in \R^{3N_s}$ the stacked position error over
$\mathcal{S}(t)$ with $N_s = |\mathcal{S}(t)|$. Consecutive faults are separated by at least one
payload-pendulum period,
\begin{equation}
  \tau_{d,k} \;\ge\; \tau_{\mathrm{pend}}
    \;\triangleq\; 2\pi\sqrt{L/g}.
  \label{eq:dwell-condition}
\end{equation}
At $L = \SI{1.25}{\metre}$, $\tau_{\mathrm{pend}} =
\SI{2.24}{\second}$ and the pendulum natural frequency is
$\omega_p \triangleq \sqrt{g/L} = \SI{2.80}{\radian\per\second}$.

The analysis rests on three hypotheses, checked empirically in \cref{tab:stability-evidence,tab:domain-audit}: \textbf{H1} (slack-excursion budget, $\tau_{\mathrm{slack,max}}\le\SI{40}{\milli\second}$ and $\eta_{\max}\le 2.5\%$); \textbf{H2} (spaced-fault dwell, $\tau_{d,k}\ge\tau_{\mathrm{pend}}$ \eqref{eq:dwell-condition}); \textbf{H3} (Doctrine~D1, single-active-regime QP whose active set $\mathcal{A}(t)$ is locally constant inside each inter-fault window). The H1 gate bounds are set by bead-chain physics: $\SI{40}{\milli\second}$ is one quarter of the Tikhonov frozen window $\tau_{\mathrm{pend}}/(2\pi)$, and $2.5\%$ is a Rayleigh-turbulence budget. \Cref{thm:reduction} is independently validated against the bead-chain truth model in \cref{sec:VI-reduction}, providing an \emph{a priori} check that the reduction error is absorbed into $\gamma_\eta$ of \cref{thm:hybrid-stability} without appeal to the simulated missions; H1 and H3 are checked on the same mission family, and the corresponding margins are reported in \cref{tab:domain-audit}.

% ----------------------------------------------------------
\subsection{Lemmas Underpinning the Hybrid Practical-ISS Theorem}
\label{sec:five-lemmas}
% ----------------------------------------------------------

Four lemmas plus the anti-swing invariance remark (\cref{lem:antiswing-invariant}) underpin \cref{thm:hybrid-stability}: tension cancellation on the altitude channel (\cref{lem:tension-cancel}); ISS decay for altitude and horizontal channels via Lyapunov matrices $P_v,P_{xy}$ (\cref{lem:altitude-iss,lem:horizontal-iss}); a bounded Lyapunov jump $\chi(\Delta_f)$ at each fault (\cref{lem:jump-continuity}); and the dwell-cycle contraction $\rho\in(0,1)$ composing jump and decay (\cref{lem:dwell-decay}). Pre- and post-fault tension coordinates are compared on the surviving set by restricting the pre-fault vector to the remaining cables.

% ---- Lemma 2 (unchanged) ----------------------------------
\begin{lemma}[Tension cancellation]
\label{lem:tension-cancel}
Assume H1 and the cable-tilt domain condition
$|\theta_i(t)| \le \theta_{c,\max}$ for known
$\theta_{c,\max} \in (0, \pi/2)$. Then
$T_i^{\mathrm{ff}} = T_i$ cancels the dominant projection term
$T_i\cos\theta_i$ from the drone altitude acceleration,
leaving a residual factor $(1 - \cos\theta_i)$ bounded by
$1 - \cos\theta_{c,\max}$. At $\theta_{c,\max} =
\SI{0.75}{\radian}$ this is $0.27$, and the residual enters the altitude-error equation as a bounded matched disturbance
handled by \cref{lem:altitude-iss}. The bound holds uniformly
in $N_s \ge 2$.
\end{lemma}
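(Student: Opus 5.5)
The plan is to write the vertical component of the drone translational dynamics \eqref{eq:drone-eom} explicitly, substitute the commanded thrust \eqref{eq:thrust}, and read off the residual. Let $\hat{\vect n}_i$ denote the unit vector along cable $i$ from the drone toward the payload, and $\theta_i$ its angle from $-\hat{\vect e}_3$. Then $\vect T_i^{\mathcal{W}}\cdot\hat{\vect e}_3 = T_i\cos\theta_i$ up to sign convention; the reaction on the drone is downward with magnitude $T_i\cos\theta_i$.

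\emph{Step 1: unsaturated thrust and small tilt.} Work on the set where the clamp in \eqref{eq:thrust} is inactive. This is justified by the actuator-margin condition \eqref{eq:actuator-envelope} together with the QP bounds \eqref{eq:az-bounds}, which were built so that $m_i(g+a_{d,z}^\star)+T_i^{\mathrm{ff}}\in[f_{\min},f_{\max}]$. The vertical component of $f_i\mat R_i\hat{\vect e}_3$ is $f_i\cos\phi_i$, with $\phi_i$ the body tilt. Its deviation from $f_i$ is second order in the tilt and is already part of the horizontal/inner-loop residual. I will treat it as a separate bounded term, or absorb it by the same argument using $\theta_{\max}$ from \eqref{eq:qp-tilt}.

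\emph{Step 2: cancellation.} Substituting $T_i^{\mathrm{ff}}=T_i$ gives
\[
m_i\dot v_{i,z} = m_i a_{d,z}^\star + T_i - T_i\cos\theta_i + F_{\mathrm{w},i,z},
\]
so the acceleration equals the commanded $a_{d,z}^\star$ plus $d_i \triangleq T_i(1-\cos\theta_i)/m_i$ plus wind. The residual $d_i$ enters through the same channel as $a_{d,z}^\star$, so it is matched. On the cable-tilt domain, $0\le 1-\cos\theta_i\le 1-\cos\theta_{c,\max}$. At $0.75$ rad this is $1-\cos 0.75\approx 0.268\le 0.27$.

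\emph{Step 3: bounding the tension and uniformity in $N_s$.} $T_i$ itself must be bounded so that $d_i$ is a bounded disturbance rather than merely a bounded factor. I would bound $T_i\le T_i^{\mathrm{qs}}+C_1\delta+C_2\eta_{\max}$ by \cref{thm:reduction} under H1. I would then use the post-fault load \eqref{eq:tension-postfault-N} with the margin \eqref{eq:actuator-envelope}, which gives $T_i\le\kappa_{\mathrm{act}}f_{\max}$ plus a dynamic allowance. This yields $|d_i|\le(1-\cos\theta_{c,\max})\kappa_{\mathrm{act}}f_{\max}/m_i+O(\delta+\eta_{\max})$, a bound that does not depend on $N_s$. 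This is the uniformity claim: the factor bound is pointwise in $\theta_i$, and the tension bound comes from the actuator envelope, which holds for every admissible $(N,F)$ with $N_s\ge 2$.

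\emph{Main obstacle.} The main obstacle is that $T_i$ is not uniformly small near severance: there is a jump and a re-tension overshoot. Slack runs allowed by H1 also make $T_i$ drop to zero, and the quasi-static relation temporarily fails. I would handle this by not claiming smallness of $d_i$ during these periods. During slack runs $T_i=0$, so the residual is zero and cancellation is trivial. During transients I would rely only on the uniform envelope bound from Step 3, leaving the time-localized excess to the jump lemma rather than to this one.
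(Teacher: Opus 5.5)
Your Steps 1--2 are the paper's proof. You substitute $f_i=m_i(g+a_{d,z}^\star)+T_i^{\mathrm{ff}}$ with $T_i^{\mathrm{ff}}=T_i$ into the vertical dynamics, cancel $T_i\cos\theta_i$, and bound the factor pointwise by $1-\cos\theta_{c,\max}\approx 0.268$. That factor bound is independent of $N_s$, and it is all the paper uses for the uniformity claim.

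Your extra care is sound and in places cleaner than the paper. You take the cable-tilt bound as the hypothesis it is; the paper's proof attributes it to the QP ceiling, which constrains body tilt ($\theta_{\max}=0.6$\,rad), not cable tilt. You also flag the vertical-thrust deficit $f_i(1-\cos\phi_i)$, which the paper's altitude equation silently drops. It is bounded by about $f_{\max}(1-\cos\theta_{\max})$ plus attitude-tracking error, so it can be carried as a separate matched term. Clamp inactivity needs only the QP vertical constraint \eqref{eq:az-bounds}, which is literally $m_i(g+a_{d,z}^\star)+T_i^{\mathrm{ff}}\in[f_{\min},f_{\max}]$. The actuator margin matters instead for keeping that vertical bound inactive (H3), which concerns \cref{lem:altitude-iss}, not the cancellation identity.

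The one step that does not go through is Step 3. The margin \eqref{eq:actuator-envelope} bounds only the equal-share static post-fault load $m_Lg/(N-F)$, not the instantaneous per-rope tension. The paper itself reports per-rope peaks of about $105$\,N against an equal share of $19.6$--$32.7$\,N, with inter-rope asymmetries peaking near $60$\,N. So \emph{$T_i\le\kappa_{\mathrm{act}}f_{\max}$ plus a dynamic allowance} is not implied by any stated hypothesis; the allowance is uncontrolled.

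The paper does not attempt this bound at all. It writes $T_i(1-\cos\theta_i)\le T_i(1-\cos\theta_{c,\max})$ and leaves the size of $T_i$ implicit in $\gamma_q$ on the admissible domain. If you want an explicit disturbance bound, take it from boundedness of $k_{\mathrm{eff}}(\ell_i-L)^+$ on a compact admissible state set, plus the \cref{thm:reduction} error, not from the actuator margin. Uniformity of the disturbance magnitude in $N_s$ then has to be argued on that set rather than inherited from \eqref{eq:actuator-envelope}.
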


\begin{proof}
Substituting $f_i = m_i(g + a_{d,z}^\star) + T_i^{\mathrm{ff}}$
into $m_i \ddot{p}_{i,z} = f_i - m_i g - T_i\cos\theta_i +
F_{w,i,z}$ and absorbing the wind term yields
\begin{equation}
  m_i \ddot{p}_{i,z}
    = m_i a_{d,z}^\star + T_i(1 - \cos\theta_i).
  \label{eq:alt-dynamics-residual}
\end{equation}
Since $|\theta_i| \le \theta_{c,\max}$ uniformly (enforced by the QP tilt ceiling), the residual $T_i(1-\cos\theta_i) \le T_i(1 - \cos\theta_{c,\max})$ regardless of $N_s$, and is absorbed by $\gamma_q$ of \cref{lem:altitude-iss}.
\end{proof}

% ---- Lemmas 3--4: ISS decay (altitude + horizontal) — merged
\begin{lemma}[ISS decay via Lyapunov matrices]
\label{lem:altitude-iss}
\label{lem:horizontal-iss}
Under H1--H3, both the altitude error $e_{p,z,i}$ and horizontal
error $e_{p,xy,i}$ satisfy exponential ISS between fault events.
Let $A_m\!=\!\begin{bmatrix}0&1\\-K_p^z&-K_d^z\end{bmatrix}$ and
$A_{m,xy}\!=\!\begin{bmatrix}0&1\\-K_p^{xy}&-K_d^{xy}\end{bmatrix}$
be the closed-loop matrices for gains $(K_p^z,K_d^z)\!=\!(100,24)$
and $(K_p^{xy},K_d^{xy})\!=\!(30,15)$.
The unique solutions $P_v \succ 0$ and $P_{xy} \succ 0$ of
$A_m^\top P_v + P_v A_m = -I$ and
$A_{m,xy}^\top P_{xy} + P_{xy} A_{m,xy} = -I$ are
\begin{equation}
  P_v = \begin{bmatrix} 2.224 & 0.005 \\ 0.005 & 0.021 \end{bmatrix},
  \qquad
  P_{xy} = \begin{bmatrix} 1.283 & 0.017 \\ 0.017 & 0.034 \end{bmatrix}.
  \label{eq:Pv-numeric}
\end{equation}
Setting $V_z = \tfrac{1}{2}[e_{p,z},\dot{e}_{p,z}]^\top P_v
[e_{p,z},\dot{e}_{p,z}]$ and analogously $V_{xy}$, there exist
$\gamma_w,\gamma_q,\gamma_w^{xy},\gamma_q^{xy} \in \mathcal{K}$
such that, between fault events,
\begin{align}
  \dot V_z &\;\le\; -\lambda_z\,V_z
    + \gamma_w(\|w\|_\infty) + \gamma_q(\|q - q^\star\|_\infty),
  \label{eq:lyapunov-decay-alt}\\
  \dot V_{xy} &\;\le\; -\lambda_{xy}\,V_{xy}
    + \gamma_w^{xy}(\|w\|_\infty) + \gamma_q^{xy}(\|q - q^\star\|_\infty),
  \label{eq:lyapunov-decay-xy}
\end{align}
with $\lambda_z = \alpha_z = |\mathrm{Re}(s_1)| =
\SI{5.37}{\second^{-1}}$ (dominant altitude pole
$s_1 = -12+\sqrt{44}$) and $\lambda_{xy} = \alpha_{xy} =
|\mathrm{Re}(s_1^{xy})| = \SI{2.38}{\second^{-1}}$ (dominant
horizontal pole $s_1^{xy} = (-15+\sqrt{105})/2$). Since
$\alpha_{xy} < \alpha_z$, the horizontal channel is the
bottleneck: $\alpha_{\min} = \alpha_{xy} =
\SI{2.38}{\second^{-1}}$, which drives the dwell-cycle
contraction of \cref{lem:dwell-decay}. Both rates are
independent of $N_s$.
\end{lemma}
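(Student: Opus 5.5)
The plan is to treat each channel as a linear second-order error system driven by bounded matched inputs, and to derive \eqref{eq:lyapunov-decay-alt}--\eqref{eq:lyapunov-decay-xy} from a quadratic Lyapunov function plus Young's inequality. I would do this on a single inter-fault window, with $\mathcal{S}(t)$ fixed and the QP active set $\mathcal{A}(t)$ constant by H3.

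\textbf{Altitude channel.} First I write its error equation. Differentiate $e_{p,z}$ twice and substitute the residual dynamics \eqref{eq:alt-dynamics-residual} of \cref{lem:tension-cancel}. In the interior active set of \eqref{eq:qp}, the QP solution is the scaled target $a_{d,z}^\star = \tfrac{w_t}{w_t+w_e}\,a_{\mathrm{target},z}$. In a saturated but constant active set, $a_{d,z}^\star$ is a constant, and that case is handled as a bounded input. This gives
\[
\ddot e_{p,z} = -K_p^z e_{p,z} - K_d^z \dot e_{p,z} + w_z ,
\]
with $w_z$ collecting four pieces: the reference acceleration $\ddot p^d_{L,z}$; the $O(w_e/w_t)$ gain mismatch, absorbed as a small bounded perturbation or by reusing $A_m$ with effective gains; the wind term bounded by $W_{\max}$; and the cable residual $T_i(1-\cos\theta_i)/m_i$.

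The residual is written relative to its quasi-static value. The part proportional to $q-q^\star$ goes into $\gamma_q$, and the part bounded via \cref{thm:reduction} goes into the practical offset.

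\textbf{Horizontal channel.} The analogous equation with $(K_p^{xy},K_d^{xy})$ has a larger disturbance collection. It includes the uncancelled horizontal cable force $T_i\sin\theta_i$, the anti-swing terms \eqref{eq:swing-shift} and $w_{\mathrm{swing}}k_{\mathrm{swing}}\vect v_{L,\perp}$ (bounded by $s_{\max}$ and by $\|q-q^\star\|$), the small-tilt linearisation error, and the attitude inner-loop tracking lag. The last two are treated as a fast, exponentially stable cascade contributing a bounded input.

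\textbf{Lyapunov step.} Next I verify the stated $P_v,P_{xy}$ by solving the $2\times2$ Lyapunov equations in closed form, and I check the poles. The altitude characteristic polynomial is $s^2+24s+100$, with roots $-12\pm\sqrt{44}$. The horizontal one is $s^2+15s+30$, with roots $(-15\pm\sqrt{105})/2$. Both are independent of $N_s$.

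With $V_z = \tfrac12 e^\top P_v e$, I get
\[
\dot V_z = -\tfrac12\|e\|^2 + e^\top P_v B w_z .
\]
Young's inequality then gives a dissipation inequality of the form \eqref{eq:lyapunov-decay-alt}, with $\gamma_w,\gamma_q$ quadratic, hence class $\mathcal{K}$.

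\textbf{Main obstacle: the decay rate.} Using $P_v$ from $A^\top P + PA = -I$ only certifies the rate $1/\lambda_{\max}(P_v)\approx 0.45\,\mathrm{s}^{-1}$, not $|\mathrm{Re}(s_1)|$. To reach the stated $\lambda_z=\alpha_z$, I would try two routes.

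The first exploits that both pole pairs are real and distinct. In modal coordinates $e=\Phi\zeta$, the function $V=\zeta^\top\zeta$ decays at exactly $2|s_1|$ on the slow mode.

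The second, as a fallback, solves the shifted equation
\[
(A+\alpha I)^\top P_\alpha + P_\alpha (A+\alpha I) = -\epsilon I
\]
for every $\alpha<|s_1|$. This yields $\lambda=\alpha$ arbitrarily close to $\alpha_z$ (resp.\ $\alpha_{xy}$), at the cost of larger $\gamma$ gains as $\alpha\to|s_1|$. I expect to have to state the rates this way, in the limit or with an explicit margin.

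With either route, the comparison $\alpha_{xy}<\alpha_z$ identifies the horizontal channel as the bottleneck rate passed to \cref{lem:dwell-decay}.
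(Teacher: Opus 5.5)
Your skeleton (tension-cancelled Hurwitz error ODE, closed-form solution of the $2\times 2$ Lyapunov equation, Young's inequality, the pole computation, and $N_s$-independence because only the gains enter) is exactly the paper's proof. The displayed $P_v$, $P_{xy}$ and poles are correct: $p_{12}=1/(2K_p)$, $p_{22}=(1+1/K_p)/(2K_d)$, $p_{11}=K_p p_{22}+K_d p_{12}$. Your ``main obstacle'' is genuine, and it is a defect of the statement and of the paper's proof rather than of your plan. The paper simply asserts that differentiating $V=\tfrac12\zeta^\top P\zeta$ and applying Young's inequality gives the rate $|\mathrm{Re}(s_1)|$. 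Because $A^\top P+PA=-I$ gives $\dot V=-\tfrac12\|\zeta\|^2+\zeta^\top PBw$, the best uniform rate for that particular $V$ is exactly $1/\lambda_{\max}(P)$. That is about $0.45\,\mathrm{s}^{-1}$ for $P_v$ and $0.78\,\mathrm{s}^{-1}$ for $P_{xy}$. Concretely, at $\zeta=(M,0)$ the altitude inequality would demand $-M^2/2+0.005\,Mw\le -5.97\,M^2+\gamma_w+\gamma_q$, which fails for large $M$ whatever the input bound. So the lemma as written ($V_z$ built from the displayed $P_v$, with $\lambda_z=5.37$) is false, and no argument can close it. The error also propagates. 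With the displayed matrices the ordering reverses (altitude, $0.45$, is slower than horizontal, $0.78$). Hence neither the bottleneck claim $\alpha_{xy}<\alpha_z$ nor $\rho\approx 0.005$ follows from $P_v,P_{xy}$; those matrices only give $\rho\approx e^{-0.45\,\tau_{\mathrm{pend}}}\approx 0.36$, still below one.

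Your modal route is the right repair, and it does better than you expect. $\lambda_z$ is a rate for $V$, not for $\|\zeta\|$; the theorem later converts it to $\lambda=\alpha_{\min}/2$. With real distinct poles, the modal function $V=\|\Phi^{-1}\zeta\|^2$ satisfies $\dot V\le-2|s_1|V+2\|\Phi^{-1}\zeta\|\,\|\Phi^{-1}B\|\,|w|\le-|s_1|V+\|\Phi^{-1}B\|^2|w|^2/|s_1|$. That is exactly the claimed rate, with no limiting argument. Likewise, the shifted equation gives a $V$-rate of $2\alpha$, not $\alpha$, so any $\alpha\in(|s_1|/2,|s_1|)$ already suffices. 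The lemma should therefore define $V_z,V_{xy}$ through the modal (or shifted) matrices and keep $P_v,P_{xy}$ only as valid but slower certificates.

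Two smaller points in your error-equation step also need fixing.
\begin{itemize}
\item A \emph{constant nonempty} active set in the tracked channel removes the PD feedback from that channel, so treating it ``as a bounded input'' yields no decay at all. Examples are $|a_{d,x}^\star|=a_{\mathrm{tilt}}$, or $a_{d,z}^\star$ pinned at $a_{z,\max}(t)$, which is not even constant since it moves with $T_i^{\mathrm{ff}}$. You need H3 read as ``the tracked channel's constraints are inactive on the window'', which is what the paper's proof silently assumes.
\item The $\kappa_{\mathrm{QP}}=w_t/(w_t+w_e)\approx 0.98$ scaling of the interior solution produces a term $(1-\kappa_{\mathrm{QP}})(K_p e+K_d\dot e)$ that is linear in the state, not bounded. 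It must be absorbed into the dissipation (there is margin) or handled by recomputing the poles with effective gains, not lumped into $w_z$.
\end{itemize}
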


\begin{proof}
After tension cancellation (\cref{lem:tension-cancel}), each scalar error satisfies a Hurwitz closed-loop ODE; for altitude:
\begin{equation}
  \ddot{e}_{p,z,i} + K_d^z \dot{e}_{p,z,i} + K_p^z e_{p,z,i}
    = \ddot{p}_{\mathrm{slot},z} + r_i(t).
  \label{eq:alt-error-ode}
\end{equation}
The unique $P \succ 0$ solving $A^\top P + PA = -I$ is computed algebraically, yielding the matrices in \eqref{eq:Pv-numeric}; differentiating $V = \tfrac{1}{2}\zeta^\top P\zeta$ along trajectories and applying Young's inequality gives \eqref{eq:lyapunov-decay-alt}--\eqref{eq:lyapunov-decay-xy} with rates $|\mathrm{Re}(s_1)|$. Independence from $N_s$ follows because neither $A_m$ nor $A_{m,xy}$ depends on $|\mathcal{S}|$.
\end{proof}

% ---- Remark: pendulum damping invariance (formerly Lemma 4) ---
\begin{remark}[Pendulum damping invariance]
\label{lem:antiswing-invariant}
The anti-swing damping coefficient is $N_s$-independent: the
restoring force $N_s T_i^{\mathrm{ss}} k_{\mathrm{swing}}
v_{L,\perp}/L$ with $T_i^{\mathrm{ss}} = m_L g/N_s$ satisfies
$N_s T_i^{\mathrm{ss}} = m_L g$, leaving $B_{\mathrm{swing}} =
m_L g\,k_{\mathrm{swing}}/L = \SI{62.8}{\newton\second\per\metre}$
and damping ratio $\zeta = B_{\mathrm{swing}}/(2m_L\omega_p)
= 1.12 > 1$ (overdamped) for any $N_s \ge 2$.
\end{remark}

% ---- Lemma 5 (jump — expanded proof, explicit chi) --------
\begin{lemma}[Bounded fault jump with explicit $\chi$]
\label{lem:jump-continuity}
At each fault event $t_f$, the Lyapunov function undergoes a
bounded jump,
\begin{equation}
  V(\vect\xi(t_f^+))
    \;\le\;
    V(\vect\xi(t_f^-))
    \;+\; \chi(\Delta_f),
  \qquad
  \chi \in \mathcal{K},
  \label{eq:jump-bound}
\end{equation}
where $\vect\xi = (\vect e_p, \vect e_v, \vect\varepsilon_T)$, the pre-fault tension coordinates are restricted to the
post-fault survivor set through the map
$\Pi_{\mathcal{S}^+\leftarrow\mathcal{S}^-}$, and
$\Delta_f = \|\vect T^{\mathrm{qs}}(t_f^+) -
\Pi_{\mathcal{S}^+\leftarrow\mathcal{S}^-}
\vect T^{\mathrm{qs}}(t_f^-)\|$ is the quasi-static tension
redistribution over the surviving cables induced by the
severance. On the admissible domain of H1--H3, the jump
function admits the bound
\begin{equation}
  \chi(\Delta_f)
    \;\le\;
    \bar c_f\bigl(\Delta_f + \Delta_f^2\bigr),
  \label{eq:chi-explicit}
\end{equation}
for some $\bar c_f > 0$ depending only on the admissible
domain and the Lyapunov weighting of the tension channel.
Under symmetric equal sharing at hover, the linear term
vanishes and \eqref{eq:chi-explicit} reduces to
$\chi(\Delta_f) \le \kappa_V\Delta_f^2/(2N_s)$.
\end{lemma}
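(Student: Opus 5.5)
The plan is to write the composite Lyapunov function explicitly as
\[
V(\vect\xi)=\sum_{i\in\mathcal{S}}\bigl(V_{z,i}+V_{xy,i}\bigr)+\tfrac{\kappa_V}{2}\|\vect\varepsilon_T\|^2 ,
\]
with $V_{z,i},V_{xy,i}$ built from $P_v,P_{xy}$ of \cref{lem:altitude-iss}. The jump at $t_f$ is then bounded term by term. The kinematic coordinates $(\vect p_i,\vect v_i,\mat R_i,\vect\omega_i,\vect p_L,\vect v_L)$ are continuous across severance (stated after \eqref{eq:S-transition}), and the slot references \eqref{eq:slot-pos}--\eqref{eq:swing-shift} depend only on $\vect p_L^d$ and $\vect v_L$, so they are continuous as well.

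Consequently $\vect e_p,\vect e_v$ are continuous on the surviving index set. Dropping the severed drone's summand from $\sum_{i\in\mathcal{S}}$ can only decrease $V$. The error coordinates therefore contribute no positive jump, and only the tension channel $\vect\varepsilon_T$ can jump.

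For the tension channel, $\vect T(t_f^+)=\vect T(t_f^-)$ restricted to survivors, again by continuity of the chord lengths $\ell_i$ for elastic cables. What changes is the quasi-static reference $\vect T^{\mathrm{qs}}$, which re-equilibrates to the redistributed load. Hence
\[
\vect\varepsilon_T(t_f^+)=\Pi\,\vect\varepsilon_T(t_f^-)-\bigl(\vect T^{\mathrm{qs}}(t_f^+)-\Pi\,\vect T^{\mathrm{qs}}(t_f^-)\bigr),
\]
and with $\Pi\triangleq\Pi_{\mathcal{S}^+\leftarrow\mathcal{S}^-}$, expanding the square gives
\[
\tfrac{\kappa_V}{2}\|\vect\varepsilon_T^+\|^2\le\tfrac{\kappa_V}{2}\|\Pi\vect\varepsilon_T^-\|^2+\kappa_V\|\vect\varepsilon_T^-\|\,\Delta_f+\tfrac{\kappa_V}{2}\Delta_f^2 .
\]
The cross term is where the domain hypotheses enter. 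On $\Omega_\tau^{\mathrm{dwell}}$ (H1), \cref{thm:reduction} bounds $\|\vect\varepsilon_T^-\|$ uniformly by $\sqrt{N}(C_1\delta+C_2\eta_{\max})$, which makes the cross term linear in $\Delta_f$. Setting $\bar c_f=\kappa_V\max\{\sqrt N(C_1\delta+C_2\eta_{\max}),\tfrac12\}$ then yields \eqref{eq:chi-explicit}, with $\chi(s)=\bar c_f(s+s^2)\in\mathcal K$.

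For the symmetric hover case, I would compute the redistribution directly. Each survivor's quasi-static tension moves from $m_Lg/(N_s+1)$ to $m_Lg/N_s$, so the redistribution vector is aligned with the all-ones direction. At symmetric equilibrium the pre-fault mismatch is orthogonal to it: the vertical force balance $\sum_i T_i\cos\theta_i=m_Lg$ forces $\mathbf 1^\top\vect\varepsilon_T^-=0$ to leading order. The cross term therefore vanishes.

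What remains is the quadratic term. I would normalize the tension weight per surviving cable, that is, weight by $\kappa_V/N_s$ so that $V$'s tension channel measures average mismatch and matches the $N_s$-independence in \cref{lem:antiswing-invariant}. With that normalization the remaining term gives $\kappa_V\Delta_f^2/(2N_s)$.

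The main obstacle is the symmetric-case claim that the linear term vanishes. Orthogonality of $\vect\varepsilon_T^-$ to $\mathbf 1$ holds only approximately, up to cable tilt and the $O(\delta)$ shape-mode residual. To handle this, I would treat the leftover as an $O(\delta)\Delta_f$ contribution and absorb it into $\gamma_\eta$ of the main theorem, instead of claiming exact cancellation. I would also state the per-cable normalization of $\kappa_V$ explicitly, since the $1/N_s$ factor depends on it.
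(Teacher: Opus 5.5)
Your skeleton matches the paper's sketch. You split $V$ into a tracking part and $\kappa_V V_{\mathrm{tension}}$, and use continuity of the kinematic state and the slot reference to get $V_{\mathrm{track}}(t_f^+)\le V_{\mathrm{track}}(t_f^-)$. You then write $\vect\varepsilon_T(t_f^+)=\Pi\vect\varepsilon_T(t_f^-)-\vect b_f$ with $\|\vect b_f\|=\Delta_f$ and expand the square.

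The step that fails is your bound on the cross term, because you use two meanings of $\vect T^{\mathrm{qs}}$ in one inequality. You get $\Delta_f\neq 0$ only because $\vect T^{\mathrm{qs}}$ ``re-equilibrates to the redistributed load'', i.e.\ it is a load-sharing reference that jumps at $t_f$. \Cref{thm:reduction} controls a different object: $T_i^{\mathrm{qs}}=k_{\mathrm{eff}}(\ell_i-L)^+$ at the current chord length. That quantity is continuous across $t_f$ for exactly the reason you invoke for $\vect T$, so with that reference $\Delta_f=0$ and there is no jump at all.

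Measured against the jumping load-sharing reference, $\vect\varepsilon_T(t_f^-)$ contains the formation-geometry asymmetry and centripetal modulation. The paper states explicitly that \cref{thm:reduction} does \emph{not} bound this (RMS $7$--$13$\,N, peaks near $60$\,N, \cref{sec:VI-reduction}). So $\|\vect\varepsilon_T^-\|\le\sqrt N(C_1\delta+C_2\eta_{\max})$, which is a fraction of a newton, is not available.

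The paper instead bounds $\|\Pi\vect\varepsilon_T(t_f^-)\|$ using compactness of the admissible domain. Replace your coefficient with such a domain constant $\bar E$ (e.g.\ from the tension/actuator envelope). Then $\bar c_f=\kappa_V\max\{\bar E,\tfrac12\}$ works, with a constant orders of magnitude larger than yours.

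For the symmetric-hover case, the paper does not argue orthogonality. It takes canonical symmetric hover to mean $\vect\varepsilon_T(t_f^-)=0$, so the cross term vanishes identically. Your orthogonality route gains nothing and leaves loose ends:
\begin{enumerate}
\item The cross term pairs $\vect b_f$, which is parallel to the survivors' all-ones vector, with $\Pi\vect\varepsilon_T^-$. So you need $\mathbf 1^\top\Pi\vect\varepsilon_T^-=0$, which differs from $\mathbf 1^\top\vect\varepsilon_T^-=0$ by the severed cable's entry. Under exact symmetry all entries are equal, so a zero sum already forces $\vect\varepsilon_T^-=0$, which is just the paper's hypothesis.
\item The leftover you want to discard is driven by cable tilt (up to $\theta_{c,\max}=0.75$\,rad in \cref{lem:tension-cancel}), not by $\delta$.
\item A term created at $t_f$ cannot be moved into $\gamma_\eta$, which is a gain of the flow inequality. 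It must stay in $\chi$, and then the linear term does not vanish.
\item Fix the tension weight once for the whole argument. If $V_{\mathrm{tension}}$ carries $1/(2N_s)$, the carried term $\|\Pi\vect\varepsilon_T^-\|^2$ is reweighted from $1/(2(N_s+1))$ to $1/(2N_s)$ at the fault. Your general-case comparison $\|\Pi\vect\varepsilon_T^-\|\le\|\vect\varepsilon_T^-\|$ then no longer suffices unless $\vect\varepsilon_T^-=0$.
\end{enumerate}
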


\begin{proof}[Sketch]
The Kelvin--Voigt cable model keeps $(\vect p_i,\vect v_i,p_L,v_L)$ continuous across $t_f$ since no impulsive force is applied; on the severed rope, tension drops to zero within $\tau_{\mathrm{rope}}\approx\SI{6.3}{\milli\second}$ \eqref{eq:tau-rope}. Decompose $V=V_{\mathrm{track}}+\kappa_V V_{\mathrm{tension}}$. By kinematic continuity and pre-loaded slot reference, $V_{\mathrm{track}}(t_f^+)\le V_{\mathrm{track}}(t_f^-)$. The survivor tensions $T_i=k_s(\ell_i-L)^+$ are continuous functionals of the (continuous) kinematic state, so $\vect\varepsilon_T(t_f^+)=\Pi_{\mathcal{S}^+\leftarrow\mathcal{S}^-}\vect\varepsilon_T(t_f^-)-\vect b_f$ with $\|\vect b_f\|=\Delta_f$. Quadratic expansion of $V_{\mathrm{tension}}$ about $\Pi\vect\varepsilon_T(t_f^-)$ on the compact admissible domain yields $V_{\mathrm{tension}}(t_f^+)-V_{\mathrm{tension}}(t_f^-)\le c_{f,1}\Delta_f+c_{f,2}\Delta_f^2$. Absorbing $\kappa_V c_{f,j}$ into $\bar c_f$ gives \eqref{eq:chi-explicit}. Under the canonical symmetric hover ($\vect\varepsilon_T(t_f^-)=0$) the linear term vanishes and survivor-count algebra reduces \eqref{eq:chi-explicit} to $\chi(\Delta_f)\le\kappa_V\Delta_f^2/(2N_s)$.
\end{proof}

% ---- Lemma 6 (dwell-decay, expanded remark) ---------------
\begin{lemma}[Dwell-cycle contraction]
\label{lem:dwell-decay}
Let $t_k^\star$ and $t_{k+1}^\star$ be consecutive fault
events separated by dwell-time $\tau_{d,k} \ge
\tau_{\mathrm{pend}}$. Under the bounded fault jump of
\cref{lem:jump-continuity} and the ISS decays of
\cref{lem:altitude-iss,lem:horizontal-iss}, the Lyapunov
function satisfies a per-fault-cycle contraction
\begin{align}
  V(\vect\xi(t_{k+1}^{\star,-}))
    &\;\le\;
    \rho\,V(\vect\xi(t_k^{\star,-})) \;+\; c,
  \\
  \rho
    &\;\triangleq\;
    \exp\!\left(-\alpha_{\min}\,\tau_{\mathrm{pend}}\right),
  \quad \alpha_{\min} = \alpha_{xy} = \SI{2.38}{\second^{-1}},\nonumber
\end{align}
where
\begin{align}
\label{eq:c-def}
  c \;&=\; \rho\,\chi(\Delta_f) \;+\;
          \frac{c_0}{\alpha_{\min}},\\
  c_0 &= \sup_{t}\!
    \bigl[\gamma_w(\|w(t)\|_\infty)
          + \gamma_q(\|q - q^\star\|_\infty)
          + \gamma_\eta(\eta_{\max})\bigr].\nonumber
\end{align}
At canonical parameters, $\rho \approx
\exp(-2.38 \times 2.24) \approx 0.005$.
\end{lemma}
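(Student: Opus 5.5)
The argument composes, over one fault cycle $[t_k^\star, t_{k+1}^\star)$, the jump bound of \cref{lem:jump-continuity} at the left endpoint with the flow decay of \cref{lem:altitude-iss,lem:horizontal-iss} on the open interval. The result is then put in the stated affine form.

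\textbf{Step 1: a single flow inequality for $V$.} We take the composite function $V = V_{\mathrm{track}} + \kappa_V V_{\mathrm{tension}}$ used in \cref{lem:jump-continuity}. Here $V_{\mathrm{track}} = \sum_{i\in\mathcal{S}} (V_{z,i} + V_{xy,i})$, and $V_{\mathrm{tension}}$ is a quadratic form in $\vect\varepsilon_T$.

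Summing \eqref{eq:lyapunov-decay-alt}--\eqref{eq:lyapunov-decay-xy} over the survivors, we replace each rate by the smaller rate $\alpha_{\min} = \alpha_{xy} \le \alpha_z$. This gives $\dot V_{\mathrm{track}} \le -\alpha_{\min} V_{\mathrm{track}} + \gamma_w + \gamma_q$.

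For the tension channel we invoke \cref{thm:reduction}. On $\Omega_\tau^{\mathrm{dwell}}$ (H1), the shape-mode energy obeys its own dissipation inequality with rate $\alpha_b \gg \alpha_{\min}$. Its steady residual, including the slack contribution, is bounded by a class-$\mathcal{K}$ term $\gamma_\eta(\eta_{\max})$. The rate can therefore also be lowered to $\alpha_{\min}$.

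H3 (a locally constant QP active set) is what makes the cascade behave as the fixed linear system assumed in those lemmas throughout the window. The combined result is
\begin{equation*}
\dot V \le -\alpha_{\min} V + c_0 \quad\text{on } (t_k^\star, t_{k+1}^\star),
\end{equation*}
with $c_0$ as in \eqref{eq:c-def}.

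\textbf{Step 2: jump, then Gr\"onwall.} At $t_k^\star$, \cref{lem:jump-continuity} gives $V(t_k^{\star,+}) \le V(t_k^{\star,-}) + \chi(\Delta_f)$. The comparison lemma applied to Step 1 then yields, for $\tau = t - t_k^\star$,
\begin{equation*}
V(t) \le e^{-\alpha_{\min}\tau}\bigl(V(t_k^{\star,-}) + \chi(\Delta_f)\bigr) + \frac{c_0}{\alpha_{\min}}\bigl(1 - e^{-\alpha_{\min}\tau}\bigr).
\end{equation*}
We evaluate this at $t = t_{k+1}^{\star,-}$ and use H2 ($\tau_{d,k} \ge \tau_{\mathrm{pend}}$) together with monotonicity of the exponential, so that $e^{-\alpha_{\min}\tau_{d,k}} \le \rho$. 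Bounding $1 - e^{-\alpha_{\min}\tau} \le 1$ then gives
\begin{equation*}
V(t_{k+1}^{\star,-}) \le \rho\, V(t_k^{\star,-}) + \rho\,\chi(\Delta_f) + \frac{c_0}{\alpha_{\min}},
\end{equation*}
which is exactly the claimed $c$. The numerical value $\rho \approx e^{-5.33} \approx 0.005$ is immediate.

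\textbf{Main obstacle.} The hard step is justifying a common rate $\alpha_{\min}$ for the full quadratic $V$. The decay rates in \cref{lem:altitude-iss} are pole real parts, whereas $-I$-Lyapunov equations only guarantee $\dot V \le -\bigl(1/\lambda_{\max}(P)\bigr) V$. Moreover, the cross-coupling between channels (the tension residual of \cref{lem:tension-cancel} and the anti-swing input) must be treated as inputs rather than as destabilizing terms.

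My first attempt would be to choose weighted Lyapunov equations $A^\top P + PA \le -2\alpha P$ for any $\alpha$ below the dominant pole real part. If needed, I would redefine $\alpha_{\min}$ as slightly smaller than $2.38$ and absorb the gap into $c$. Any coupling terms would be pushed into the $\gamma_q$ and $\gamma_w$ gains via Young's inequality, which is valid because they are bounded on the compact admissible domain.

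Finally, $\Delta_f$ should be bounded uniformly over faults, using the actuator envelope \eqref{eq:actuator-envelope} and \eqref{eq:tension-postfault-N}. This makes $c$ a constant independent of $k$.
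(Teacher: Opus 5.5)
Your proposal is correct and follows the paper's proof: apply the jump bound of \cref{lem:jump-continuity} at $t_k^\star$, integrate $\dot V\le-\alpha_{\min}V+c_0$ for the composite $V=V_z+V_{xy}+\kappa_V V_{\mathrm{tension}}$ via the comparison lemma, then use $\tau_{d,k}\ge\tau_{\mathrm{pend}}$ to get $e^{-\alpha_{\min}\tau_{d,k}}\le\rho$, and bound $1-e^{-\alpha_{\min}\tau}\le 1$. Your monotonicity step is stated more accurately than the paper's phrase ``upper-bounding $\tau$ by $\tau_{\mathrm{pend}}$'', and the concern you raise about rates obtained from $-I$ Lyapunov matrices concerns \cref{lem:altitude-iss} itself, which this lemma simply takes as a hypothesis.
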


\begin{proof}
Applying \cref{lem:jump-continuity} at $t_k^\star$:
\begin{equation}
  V(t_k^{\star,+}) \;\le\; V(t_k^{\star,-}) + \chi(\Delta_f).
  \label{eq:jump-step}
\end{equation}
Between $t_k^{\star,+}$ and $t_{k+1}^{\star,-}$, the flow
satisfies $\dot V \le -\alpha_{\min} V + c_0$ (combining
\cref{lem:altitude-iss,lem:horizontal-iss} through the
composite Lyapunov function $V = V_z + V_{xy} + \kappa_V
V_{\mathrm{tension}}$, using $\alpha_{\min} =
\min(\alpha_z, \alpha_{xy}) = \alpha_{xy}$). Integrating
over an interval of length $\tau \ge \tau_{\mathrm{pend}}$:
\begin{align}
  V(t_{k+1}^{\star,-})
    \;\le\;
    e^{-\alpha_{\min}\tau}&\bigl(V(t_k^{\star,-})
      + \chi(\Delta_f)\bigr)\\
    &\;+\; \frac{c_0}{\alpha_{\min}}(1 - e^{-\alpha_{\min}\tau}).\nonumber
\end{align}
Setting $\rho = e^{-\alpha_{\min}\tau_{\mathrm{pend}}}$ and
upper-bounding $\tau$ by $\tau_{\mathrm{pend}}$ (the minimum
dwell, giving the worst-case exponential decay) yields
\eqref{eq:dwell-contraction}--\eqref{eq:c-def}.
Since $\alpha_{\min} = \alpha_{xy} > 0$ and
$\tau_{\mathrm{pend}} > 0$, we have $\rho \in (0,1)$.
\end{proof}

\begin{remark}[Analytical $\rho$ vs.\ empirical $\hat\rho$]
\label{rem:rho-gap}
\Cref{lem:dwell-decay} gives $\rho_{\mathrm{ana}}=\exp(-\alpha_{xy}\tau_{\mathrm{pend}})\approx 0.005$, while measured V4/V5 Lyapunov-proxy ratios are $\hat\rho_{\mathrm{V4}}=0.20,\hat\rho_{\mathrm{V5}}=0.045$. The gap is the additive disturbance floor $c>0$:
\begin{equation}
  \hat\rho_k \triangleq
    \frac{V(t_{k+1}^{\star,-})}{V(t_k^{\star,-})}
    \le \rho_{\mathrm{ana}} + \frac{c}{V(t_k^{\star,-})},
  \label{eq:rho-hat-decomp}
\end{equation}
tight only in the zero-disturbance limit. Under Dryden $\SI{4}{\metre\per\second}$ wind, $c/V(t_k^{\star,-})\approx 0.195$ (V4) and $\approx 0.040$ (V5); the V4$\to$V5 improvement reflects further pre-fault decay over the longer dwell.
\end{remark}

% ----------------------------------------------------------
\subsection{Main Theorem, Robustness, and Multi-Fault Feasibility}
\label{sec:main-theorem-and-robustness}
% ----------------------------------------------------------

We now compose the five lemmas into the hybrid practical-ISS
theorem and state corollaries on robustness, the steady-state
tracking bound, and multi-fault feasibility.

\begin{theorem}[Hybrid practical-ISS with explicit recovery
  envelope under $F$ sequential agent failures]
\label{thm:hybrid-stability}
Under hypotheses H1--H3 (\cref{sec:stability-setup}), the
cable-tilt domain condition of \cref{lem:tension-cancel}, any
fault schedule $\sigma = \{(t_k^\star, i_k^\star)\}_{k=1}^{F}
\in \mathcal{F}$ of $F$ unannounced cable severances satisfying
$F \le N - 2$ and the per-fault actuator-margin condition
$m_L g/(N{-}k) \le \kappa_{\mathrm{act}} f_{\max}$ for all
$k \in \{1,\dots,F\}$, and admissible disturbances
$d \in \mathcal{D}$, the stacked tracking-and-tension-error
state $\vect\xi = (\vect e_p, \vect e_v, \vect\varepsilon_T)$
satisfies
\begin{align}
\label{eq:iss-envelope-explicit}
  \|\vect\xi(t)\|
    \;\le\;
      \beta\bigl(\|\vect\xi(0)\|,\,&t\bigr)
      + \gamma_w\bigl(\|w\|_\infty\bigr)
      + \gamma_q\bigl(\|q - q^\star\|_\infty\bigr)
      \notag \\
    &+ \gamma_\eta\bigl(\eta\bigr)
      + \sum_{k=1}^{F}
           \Gamma_f\,e^{-\lambda(t - t_k^\star)}\,
           \mathbf{1}_{t \ge t_k^\star},\nonumber\\
    &t \in [t_{\mathrm{pickup}},\, T],
\end{align}
where $\beta \in \mathcal{KL}$, $\gamma_w, \gamma_q,
\gamma_\eta \in \mathcal{K}$, $\lambda \triangleq
\alpha_{\min}/2 = \alpha_{xy}/2 = \SI{1.19}{\second^{-1}}$
is the norm-level recovery rate induced by the slower of the
altitude and horizontal closed-loop spectral rates
(\cref{lem:altitude-iss,lem:horizontal-iss}), $\eta =
\eta(\vect\xi[\cdot])$ is the realized slack duty cycle bounded
by $\eta_{\max}$ via H1, and $\Gamma_f > 0$ is the
fault-overshoot coefficient set by the redistribution geometry.
Equivalently, $V(\vect\xi(t_{k+1}^{\star,-})) \le
\rho\,V(\vect\xi(t_k^{\star,-})) + c$ for explicit $c > 0$
per \cref{lem:dwell-decay}. Each surviving cable's tension
obeys the practical envelope
\begin{align}
\label{eq:tension-envelope-thm}
  |T_i(t) - T_i^{\mathrm{qs}}(t)|
    \;\le\;
    &T_{\mathrm{overshoot}}\,
    e^{-\lambda(t - t^\star_{\mathrm{last}})}
    + T_{\mathrm{floor}},\\
    &i \in \mathcal{S}(t),\nonumber
\end{align}
where $T_{\mathrm{overshoot}}$ is bounded by the corresponding
tension-error radius extracted from \eqref{eq:chi-explicit},
and $T_{\mathrm{floor}}$ is the steady-state floor induced by
$\gamma_w$, $\gamma_q$, and $\gamma_\eta$.
\end{theorem}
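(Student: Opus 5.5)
The plan is a standard hybrid comparison-lemma argument. We use the composite Lyapunov function $V = V_z + V_{xy} + \kappa_V V_{\mathrm{tension}}$ of \cref{lem:dwell-decay}, summed over the current survivor set $\mathcal{S}(t)$. First, quadratic sandwich bounds $\underline{a}\|\vect\xi\|^2 \le V(\vect\xi) \le \bar a\|\vect\xi\|^2$ follow from $\lambda_{\min}(P_v), \lambda_{\min}(P_{xy}) > 0$ in \eqref{eq:Pv-numeric} and positivity of $\kappa_V$. The constants $\underline a, \bar a$ are independent of $N_s$, since the per-drone blocks do not depend on $|\mathcal{S}|$ (\cref{lem:altitude-iss}). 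Because removing a drone's block only deletes nonnegative terms, the restriction $\Pi_{\mathcal{S}^+\leftarrow\mathcal{S}^-}$ does not increase $V$. Actuator feasibility along the trajectory comes from the per-fault margin $m_L g/(N-k) \le \kappa_{\mathrm{act}} f_{\max}$. This keeps the clamp in \eqref{eq:thrust} inactive, so \cref{lem:tension-cancel} applies on every flow interval.

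Second, on each flow interval $[t_k^{\star,+}, t_{k+1}^{\star,-})$, with $t_0^\star = t_{\mathrm{pickup}}$ and $t_{F+1}^\star = T$, we combine \eqref{eq:lyapunov-decay-alt}--\eqref{eq:lyapunov-decay-xy} with the tension-channel decay from Part~1 of \cref{thm:reduction}. The shape-mode rate $\alpha_b \gg \alpha_{xy}$ dominates there. The slack contribution enters as $\gamma_\eta(\eta)$ via Part~2. Together these give
\[
\dot V \le -\alpha_{\min} V + \gamma_w(\|w\|_\infty) + \gamma_q(\|q-q^\star\|_\infty) + \gamma_\eta(\eta).
\]
H3 guarantees a fixed QP active set, so the right-hand side is a single smooth vector field and the comparison lemma applies.

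Third, at each jump we apply \cref{lem:jump-continuity}: $V(t_k^{\star,+}) \le V(t_k^{\star,-}) + \chi(\Delta_{f,k})$, with $\chi$ bounded by \eqref{eq:chi-explicit}. $\Delta_{f,k}$ is uniformly bounded because, by \eqref{eq:tension-postfault-N}, the redistribution is at most $m_L g/((N-k)(N-k+1))$ per survivor. Unrolling the flow and jump steps by induction on $k$ gives, for $t \ge t_{\mathrm{pickup}}$,
\[
V(t) \le e^{-\alpha_{\min}(t - t_{\mathrm{pickup}})} V(t_{\mathrm{pickup}}) + \sum_{k \le F,\; t_k^\star \le t} e^{-\alpha_{\min}(t - t_k^\star)} \chi(\Delta_{f,k}) + \frac{c_0}{\alpha_{\min}}.
\]
The sum has at most $F \le N-2$ terms. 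H2 ensures consecutive fault windows are separated by at least $\tau_{\mathrm{pend}}$, and at the fault instants this recursion is exactly the contraction of \cref{lem:dwell-decay}.

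Finally, we convert back to norms. We take square roots and use $\sqrt{a+b+c} \le \sqrt a + \sqrt b + \sqrt c$ together with $\sqrt{e^{-\alpha_{\min} s}} = e^{-\lambda s}$, where $\lambda = \alpha_{\min}/2$. This gives \eqref{eq:iss-envelope-explicit} with
\[
\beta(r,t) = \sqrt{\bar a/\underline a}\, r\, e^{-\lambda t}, \qquad \Gamma_f = \sup_k \sqrt{\chi(\Delta_{f,k})/\underline a},
\]
and each $\gamma$ defined as $\sqrt{\gamma(\cdot)/(\underline a\,\alpha_{\min})}$, which is still of class $\mathcal{K}$. The tension envelope \eqref{eq:tension-envelope-thm} follows by projecting onto the $\vect\varepsilon_T$ coordinates, using $\kappa_V V_{\mathrm{tension}} \le V$. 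The faults before $t^\star_{\mathrm{last}}$ are absorbed by writing $e^{-\lambda(t - t_k^\star)} \le e^{-\lambda(t - t^\star_{\mathrm{last}})}$, which gives $T_{\mathrm{overshoot}}$; the remaining disturbance terms are collected into $T_{\mathrm{floor}}$.

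The main obstacle is the second step. The composite $V$ couples the tracking and tension channels: $\vect\varepsilon_T$ feeds the residual $r_i$ in \eqref{eq:alt-error-ode}, and the cross-terms must be dominated with a single rate $\alpha_{\min}$. I would first try a small-gain or cascade argument, choosing $\kappa_V$ large enough, using the timescale separation $\delta$, that Young's inequality absorbs the cross-terms while still leaving a decay rate close to $\alpha_{xy}$. If that costs a fraction of the rate, I would state the theorem with the reduced rate $\lambda = \alpha_{\min}'/2$.
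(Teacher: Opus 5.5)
Your proposal is correct and follows essentially the same route as the paper's sketch. Both use a composite quadratic Lyapunov function with the flow inequality $\dot V\le-\alpha_{\min}V+\gamma_w+\gamma_q+\gamma_\eta$ from \cref{lem:altitude-iss,lem:horizontal-iss}, the additive jump bound of \cref{lem:jump-continuity}, induction over the at most $F\le N-2$ faults (your explicit unrolling is the paper's ``exponential damping of each jump'' written out), and quadratic equivalence to reach the norm rate $\lambda=\alpha_{\min}/2$ and the tension envelope. The cross-coupling you flag as the main obstacle needs no small-gain step or rate reduction: on the compact admissible domain of H1--H3 the coupling residual is simply bounded and absorbed into the practical offsets $\gamma_q,\gamma_\eta$, as the ISS inequalities of \cref{lem:altitude-iss} already encode, so $\lambda=\alpha_{xy}/2$ is retained.
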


\begin{proof}[Sketch]
\textit{Inter-fault ISS.} On each inter-fault interval $[t_k^{\star,+},t_{k+1}^{\star,-}]$, \cref{lem:tension-cancel} reduces the altitude dynamics to \eqref{eq:alt-error-ode} with bounded residual on $\Omega_\tau^{\mathrm{dwell}}$. Using the composite $V=\lambda_{\min}(P_v)^{-1}V_z+\lambda_{\min}(P_{xy})^{-1}V_{xy}+\kappa_V V_{\mathrm{tension}}$, \cref{lem:altitude-iss,lem:horizontal-iss} give $\dot V\le-\alpha_{\min}V+\gamma_w+\gamma_q+\gamma_\eta$ with $\alpha_{\min}=\alpha_{xy}$ and $\gamma_\eta$ absorbing the reduction residual of \cref{thm:reduction}. The comparison lemma yields $\beta\in\mathcal{KL}$ and $\gamma_w,\gamma_q,\gamma_\eta\in\mathcal{K}$. Quadratic equivalence $V\sim\|\xi\|^2$ converts the Lyapunov decay rate to the norm rate $\lambda=\alpha_{\min}/2$ (see \cref{rem:jump-vs-rate}).

\textit{Per-fault jump.} \Cref{lem:jump-continuity} gives $V(t_k^{\star,+})\le V(t_k^{\star,-})+\chi(\Delta_{f,k})$, and \cref{lem:dwell-decay} composes jump and decay into $V(t_{k+1}^{\star,-})\le\rho V(t_k^{\star,-})+c$. The exponential damping of the jump produces the $\Gamma_f e^{-\lambda(t-t_k^\star)}$ term in \eqref{eq:iss-envelope-explicit}.

\textit{Induction over $F\le N-2$ faults.} Geometric summation gives $V(t_k^{\star,-})\le\rho^{k-1}V(t_1^{\star,-})+c(1-\rho^{k-1})/(1-\rho)$, bounded as long as the actuator-margin condition $m_L g/(N-k)\le\kappa_{\mathrm{act}}f_{\max}$ holds for each $k$. Composing through the hybrid-system framework~\cite[Thm.~1]{GoebelSanfeliceTeel2012} yields \eqref{eq:iss-envelope-explicit}; the tension envelope \eqref{eq:tension-envelope-thm} follows from $V_{\mathrm{tension}}\sim\|\vect\varepsilon_T\|^2$ and $\varepsilon_{T,i}=T_i-T_i^{\mathrm{qs}}$.
\end{proof}

\begin{remark}[Jump-bound vs.\ contraction-rate mechanism]
\label{rem:jump-vs-rate}
The ablation (\cref{sec:VI-D}) shows that disabling $T_i^{\mathrm{ff}}=T_i$ increases peak sag by $3.6$--$4.0\times$ and produces larger post-fault Lyapunov jumps (\cref{fig:fault-zoom}), supporting the jump-bound role of the identity versus the contraction-rate role of $\alpha_{xy}$ within the tested architecture.
\end{remark}

% ---- Corollaries (unchanged except minor notation) --------
\begin{corollary}[Pre-fault steady-state tracking bound]
\label{cor:steady-state}
Prior to any fault event, the payload tracking error is
dominated by the quasi-static pendulum offset and the outer-loop
PD residual:
\begin{equation}
  \bar{e}_{p,xy} \le \frac{L A_{xy,\max}}{g}
    \left(1 - \frac{1}{2\zeta^2}\right)
    + \frac{A_{xy,\max}}{\kappa_{\mathrm{QP}} K_p^{xy}}
    + O(\delta),
  \label{eq:ss-bound}
\end{equation}
where $A_{xy,\max}$ is the maximum reference acceleration,
$\zeta = 1.12$ from \cref{lem:antiswing-invariant}, and
$\kappa_{\mathrm{QP}} = w_t/(w_t + w_e) \approx 0.98$. At
canonical $A_{xy,\max} \approx \SI{2.47}{\metre\per\second\squared}$,
\eqref{eq:ss-bound} gives $\bar{e}_{p,xy} \approx
\SI{0.27}{\metre}$, consistent with V1 horizontal RMSE
$\SI{0.306}{\metre}$ (gap from centripetal acceleration on
curved segments, excluded by the quasi-static derivation).
\end{corollary}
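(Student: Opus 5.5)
The bound \eqref{eq:ss-bound} splits the pre-fault horizontal error into two pieces, and I will bound each on its own. I write the payload position error as $e_{p,xy}=(p_{L,xy}-p_{i,xy}+\delta_{i,xy})+(p_{i,xy}-p^d_{L,xy}-\delta_{i,xy})$. The first bracket is the pendulum offset of the payload relative to its drone-side attachment. The second is drone $i$'s slot-tracking error, up to the anti-swing shift $\vect s_i$. Before any fault, \cref{thm:reduction} lets me treat each cable as taut with tension $T_i^{\mathrm{qs}}$, and the mismatch is carried in the $O(\delta)$ term. The analysis stays quasi-static: the reference acceleration is taken to vary slowly compared with both $\omega_p$ and the outer-loop poles of \cref{lem:altitude-iss}. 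This is what licenses dropping the centripetal and curved-segment terms, as the statement already concedes.

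\textbf{Pendulum term.} Under a steady horizontal acceleration $A$, the taut cable tilts to the angle $\tan\theta=A/g$. The resulting horizontal lag is $L\sin\theta\le LA/g$, which is the quasi-static offset. I then model the payload swing as a second-order system with natural frequency $\omega_p$ and damping ratio $\zeta=1.12$ from \cref{lem:antiswing-invariant}, driven by the reference acceleration. The factor $(1-1/(2\zeta^2))$ should come out of this model as the fraction of the static lag that survives once the anti-swing slot shift $\vect s_i=-k_{\mathrm{swing}}\vect v_{L,\perp}$ has acted. My plan is to take the transfer function from reference acceleration to relative displacement, expand it to leading order in $s/\omega_p$, and identify the shift-induced reduction of the static gain with the $1/(2\zeta^2)$ correction.

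\textbf{PD residual.} The QP \eqref{eq:qp-cost} is interior under H3, so its optimum has the closed form $\vect a_d^\star=\kappa_{\mathrm{QP}}\vect a_{\mathrm{target}}$ with $\kappa_{\mathrm{QP}}=w_t/(w_t+w_e)$. The drone has to realise the reference acceleration $A_{xy,\max}$. With $\kappa_{\mathrm{QP}}$ scaling the outer loop, the steady forced response of the horizontal error ODE is $A_{xy,\max}/(\kappa_{\mathrm{QP}}K_p^{xy})$. I read this off \eqref{eq:alt-error-ode}'s horizontal analogue at zero frequency, using the Hurwitz matrix $A_{m,xy}$ of \cref{lem:horizontal-iss}. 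The $P_{xy}$ ISS bound then says transients decay at rate $\alpha_{xy}$, so only this forced part remains in the bound.

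\textbf{Composition and main obstacle.} Adding the two pieces by the triangle inequality gives \eqref{eq:ss-bound}, and substituting the canonical values should give about $0.27$\,m. The step I expect to be hardest is justifying the exact $(1-1/(2\zeta^2))$ factor. A plain static analysis gives either the full $LA/g$ or a $\zeta$-independent value. That factor is the peak-gain expression of a second-order resonance, and it requires a careful decision about which transfer function the reference acceleration excites. If this identification does not work, my fallback is the weaker bound with factor $1$, which is still conservative relative to the V1 data.
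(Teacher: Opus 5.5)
The paper gives no derivation of this corollary; it only appeals to a ``quasi-static derivation''. There is therefore no argument of its own to compare with. Judged on its own terms, your proposal has two genuine gaps, and the first is the step you flag.

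\textbf{The factor $(1-1/(2\zeta^2))$ cannot come from the anti-swing shift.} Use the simple-pendulum model the statement itself relies on, with suspension point $\vect p_s=\vect p_i-\vect\delta_i$. The horizontal payload equation $m_L\ddot{\vect p}_{L,xy}=m_Lg(\vect p_{s,xy}-\vect p_{L,xy})/L$ then makes your first bracket equal to $-(L/g)\ddot{\vect p}_{L,xy}$ to first order, whatever the slot shift does. The shift $\vect s_i=-k_{\mathrm{swing}}\vect v_{L,\perp}$ is velocity feedback. It supplies the damping $(g/L)k_{\mathrm{swing}}=2\zeta\omega_p$, and it can lower this offset only by making $\ddot{\vect p}_L$ differ from $\ddot{\vect p}_L^d$, i.e.\ through the tracking error itself.

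Under your quasi-static premise $\ddot{\vect p}_L\approx\ddot{\vect p}_L^d$, so the first bracket carries a factor of exactly $1$ for every $\zeta$. Your use of $A_{xy,\max}$ and the triangle inequality treats the bound as a peak bound. Read that way, the first bracket is $\approx LA_{xy,\max}/g\approx\SI{0.31}{\metre}$ at the lemniscate tips, which already exceeds the whole right-hand side $\approx\SI{0.27}{\metre}$.

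The resonance identification fails independently. With drones on their shifted slots and the shift unsaturated, the map from $\ddot{\vect p}^d_{L,xy}$ to $\ddot{\vect p}_{L,xy}$ is exactly $\omega_p^2/(s^2+2\zeta\omega_p s+\omega_p^2)$. Since $\zeta=1.12>1/\sqrt2$, this transfer function has no resonant peak, and its maximum gain is the DC gain $1$. Its gain at the dominant lemniscate harmonics $\omega/\omega_p\approx 0.19$ and $0.37$ is about $0.95$ and $0.83$. Any attenuation therefore depends on the forcing spectrum, not on $\zeta$ alone. Your fallback with factor $1$ yields $\approx 0.315+0.084\approx\SI{0.40}{\metre}$. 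That is a weaker inequality: it neither implies \eqref{eq:ss-bound} nor reproduces the $\SI{0.27}{\metre}$ figure.

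\textbf{The shift's direct displacement is never bounded.} The paper's $\vect e_p$ is measured from the shifted slot $\vect p^{\mathrm{dyn}}_{\mathrm{slot},i}$, so your second bracket equals $\vect s_i-\vect e_{p,xy}$. You bound only the forced part of $\vect e_{p,xy}$ and never return to $\vect s_i$. That term is not small. $\vect s_i$ saturates at $s_{\max}=\SI{0.3}{\metre}$ once $\|\vect v_{L,\perp}\|>s_{\max}/k_{\mathrm{swing}}=\SI{0.375}{\metre\per\second}$. The reference's horizontal speed stays between $a\omega_{\mathrm{ref}}/\sqrt2\approx\SI{1.11}{\metre\per\second}$ and $a\omega_{\mathrm{ref}}\approx\SI{1.57}{\metre\per\second}$, with $\omega_{\mathrm{ref}}=2\pi/T_{\mathrm{ref}}$. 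Hence $\|\vect s_i\|=s_{\max}$ throughout cruise.

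This shift moves the whole formation, and with it the payload, backward along the velocity. It exceeds the entire claimed bound. It cannot be absorbed into the first bracket, which is $-(L/g)\ddot{\vect p}_{L,xy}$ regardless of $\vect s_i$, nor into $O(\delta)$. Moreover, in this saturated regime the linear damping that produces $\zeta=1.12$ is not in force at all.

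Two smaller points. First, H3 only says the QP active set is locally constant, not empty. The interior closed form $\vect a_d^\star=\kappa_{\mathrm{QP}}\vect a_{\mathrm{target}}$ therefore needs a separate check against $a_{\mathrm{tilt}}=g\tan\theta_{\max}$. Second, your PD-residual step can be made a genuine sup bound rather than a zero-frequency reading. The $\kappa_{\mathrm{QP}}$-scaled loop $s^2+\kappa_{\mathrm{QP}}K_d^{xy}s+\kappa_{\mathrm{QP}}K_p^{xy}$ is overdamped (damping ratio $\approx 1.36$). Its impulse response is therefore nonnegative, and its $L^1$ norm equals the DC gain $1/(\kappa_{\mathrm{QP}}K_p^{xy})$. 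That part of your plan is sound.
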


\begin{corollary}[Robustness to mass, stiffness, drag, and wind]
\label{cor:robustness}
The envelope \eqref{eq:iss-envelope-explicit} is robust along four axes: (i)~mass mismatch: $T_i^{\mathrm{ff}} = T_i$ uses measured tension, making \cref{lem:tension-cancel,lem:altitude-iss} mass-independent to leading order; (ii)~stiffness and drag mismatches: enter through $O(\delta + \eta_{\max})$, absorbed by $\gamma_\eta$ without degrading $\rho$ or $\alpha_{xy}$; (iii)~wind: enters through $\gamma_w$ with steady-state gain $1/K_p^{xy}$. Empirically: P2-B yields $<1.9\%$ RMSE variation over a $56\%$ mass range with FF active; V2 yields $\SI{0.312}{\metre}$ RMSE under Dryden wind, matching V1 (\cref{cor:steady-state}).
\end{corollary}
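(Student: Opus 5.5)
The statement immediately above is \cref{cor:robustness}. I would prove it axis by axis, in each case by tracing how the perturbation enters the chain \cref{lem:tension-cancel} $\to$ \cref{lem:altitude-iss} $\to$ \cref{lem:jump-continuity} $\to$ \cref{lem:dwell-decay} $\to$ \cref{thm:hybrid-stability}. The guiding observation is that the envelope \eqref{eq:iss-envelope-explicit} has two kinds of ingredients. The rates $\alpha_{xy}$, $\lambda=\alpha_{xy}/2$ and $\rho=e^{-\alpha_{xy}\tau_{\mathrm{pend}}}$ depend only on the gain matrices $A_m$, $A_{m,xy}$ and on $\tau_{\mathrm{pend}}$. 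The additive gains $\gamma_w,\gamma_q,\gamma_\eta$ and the jump $\chi$ carry everything else. So robustness amounts to showing that each mismatch perturbs only the second group, and keeps the hypotheses H1--H3 in force.

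\textbf{Mass mismatch.} I would redo the substitution in the proof of \cref{lem:tension-cancel} with a nominal mass $\hat m_i$ in the thrust law \eqref{eq:thrust} while the plant has $m_i$ (and likewise $\hat m_L\neq m_L$). Because $T_i^{\mathrm{ff}}=T_i$ is a measurement, the cable force is still cancelled exactly up to the $(1-\cos\theta_i)$ residual, independently of $m_L$. The drone-mass error then appears only as a term $(\hat m_i-m_i)(g+a_{d,z}^\star)$. Dividing by $m_i$, this is a bounded matched disturbance with a relative multiplicative gain error $\hat m_i/m_i$ on the PD acceleration. For a small gain error, $A_m$ stays Hurwitz with a perturbed rate, and I would bound that perturbation via $P_v$ in \eqref{eq:Pv-numeric} using the standard robustness margin $\|\Delta A\|<1/(2\|P_v\|)$. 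The constant offset is absorbed into $\gamma_q$. This gives ``mass-independent to leading order''.

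\textbf{Stiffness/drag and wind.} For stiffness and drag, $k_s$ and $c_s$ enter only through $\delta$ and the slack budget in \cref{thm:reduction}. Hence they appear only in the bound $C_1\delta+C_2\eta_{\max}$, i.e.\ in $\gamma_\eta$ in the proof of \cref{thm:hybrid-stability}, and leave $A_m$, $A_{m,xy}$ and $\rho$ untouched, provided H1 still holds. Wind is already in $\mathcal{D}$ and appears in $\gamma_w$. For the horizontal steady-state gain, I would set the derivative terms to zero in the Hurwitz error ODE analogous to \eqref{eq:alt-error-ode}: a constant acceleration disturbance $F_w/m_i$ gives an error $F_w/(m_iK_p^{xy})$, i.e.\ gain $1/K_p^{xy}$. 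The QP factor $\kappa_{\mathrm{QP}}\approx 1$ is handled as in \cref{cor:steady-state}.

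\textbf{Main obstacle.} The hardest step is justifying that the perturbations do not violate H1 or H3: a mass increase raises tensions, a stiffness change alters slack behaviour, and either could change the QP active set. I would first try to argue by margin. The actuator condition \eqref{eq:actuator-envelope} has large slack (27\% usage), and the H1 gates have measured headroom (\cref{tab:domain-audit}), so sufficiently small mismatches preserve them by continuity. Beyond that, I would have to state the corollary as conditional on H1--H3 holding for the perturbed plant. The empirical figures (P2-B and V2) would then be cited only as consistency checks, not as part of the proof.
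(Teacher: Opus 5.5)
Your axis-by-axis tracing follows the paper's route. The paper gives no separate proof; the corollary's own clauses are the whole argument: measured $T_i$ fed forward for mass, the reduction error $C_1\delta+C_2\eta_{\max}$ routed into $\gamma_\eta$ for stiffness and drag, and $\gamma_w$ with gain $1/K_p^{xy}$ for wind, with P2-B and V2 cited as support. Your split between rate ingredients ($A_m$, $A_{m,xy}$, $\tau_{\mathrm{pend}}$) and additive ones ($\gamma_w,\gamma_q,\gamma_\eta,\chi$) is faithful to that argument. So is your requirement that H1--H3 be re-established for the perturbed plant, with the simulations treated only as consistency checks; on that point you are more careful than the paper.

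Two points need correcting.

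\textbf{Mass axis.} The mass in the statement and in P2-B is the payload mass $m_L$, which appears nowhere in the baseline controller. Your first observation already settles this case. In the reduced model, $m_L$ reaches the loop only through the measured $T_i$, the size of the residual $T_i(1-\cos\theta_i)$, and $\Delta_f$. Hence it enters only $\gamma_q$ and $\chi(\Delta_f)$, never $A_m$ or $A_{m,xy}$.

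Your drone-mass extension, by contrast, does \emph{not} give ``mass-independent to leading order''. With $c=\hat m_i/m_i$, the altitude loop becomes $s^2+cK_d^z s+cK_p^z$ driven by the offset $(c-1)g$. Both the decay rate and the steady altitude error, of order $\abs{c-1}\,g/(cK_p^z)$, therefore change at first order in $c-1$. The small-gain margin $\|\Delta A\|<1/(2\|P_v\|)$ is also far too weak here. With $\|P_v\|\approx 2.22$ and $\|(K_p^z,K_d^z)\|\approx 103$, it certifies only $\abs{c-1}\lesssim 0.2\%$, whereas $s^2+cK_d s+cK_p$ is Hurwitz for every $c>0$. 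If you keep this case, state the shifted rate explicitly rather than claiming invariance.

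\textbf{Drag axis.} The paper's drag axis is the aerodynamic coefficient $c_{\mathrm{drag}}$, not the cable damping $c_s$; \cref{sec:VI-E} lists the four axes as $(m_L,k_s,c_{\mathrm{drag}},\bar w)$. Your $c_s$ argument is fine as far as it goes. Aerodynamic drag, however, does not touch the bead-chain parameters; it rescales the disturbance forces. It must therefore be routed through $\mathcal{D}$. While the forces stay below $W_{\max}$, it enters $\gamma_w$. Otherwise it enters through the $W_{\max}$-dependence of $C_1,C_2$ in \cref{thm:reduction}, together with a re-check of the slack gates in H1.
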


\paragraph{Multi-fault feasibility.}
\eqref{eq:iss-envelope-explicit} extends to $k$ sequential
faults by induction under two conditions: (a)~geometric
closure $k \le N - 2$, (b)~actuator margin
$m_L g/(N{-}k) \le \kappa_{\mathrm{act}} f_{\max}$ at every
step. Canonical $N = 5$, $m_L = \SI{10}{\kilo\gram}$ admits
$k = 2$ with margin; V4 and V5 (dual-fault) yield
3-D RMSE $0.324$--$\SI{0.328}{\metre}$, consistent with
\cref{thm:hybrid-stability} on the demonstrated mission family.

\paragraph{Empirical validation summary.}
\label{sec:stability-empirical-validation}
\Cref{tab:stability-evidence} maps each claim to its campaign
observation; full context is in \cref{sec:simulation}.

\begin{table*}[t]
  \centering
  \caption{Empirical support for hybrid practical-ISS claims.}
  \label{tab:stability-evidence}
  \small
  \renewcommand{\arraystretch}{1.05}
  \begin{tabular}{p{5.5cm}p{2.5cm}p{8cm}}
    \toprule
    Claim / Lemma & Campaign & Evidence \\
    \midrule
    Tension cancellation (\cref{lem:tension-cancel})
      & P2-A ablation
      & $3.6\times$--$4.0\times$ sag increase with FF
        disabled \\
    Altitude ISS decay (\cref{lem:altitude-iss})
      & V1 nominal
      & 3-D RMSE $= \SI{0.312}{\metre}$, stable;
        $\lambda_z = \SI{5.37}{\second^{-1}}$ from $P_v$
        in \eqref{eq:Pv-numeric} \\
    Horizontal ISS decay (\cref{lem:horizontal-iss})
      & V1--V5 series
      & $\alpha_{xy} = \SI{2.38}{\second^{-1}}$
        from $P_{xy}$ in \eqref{eq:Pv-numeric};
        $\rho = \exp(-\alpha_{xy}\tau_{\mathrm{pend}})
        \approx 0.005$ \\
    Anti-swing invariance (\cref{lem:antiswing-invariant})
      & V1--V5 series
      & Post-fault RMSE within $4\%$ of nominal
        across all variants \\
    Bounded fault jump (\cref{lem:jump-continuity})
      & V3, V4, V5 traces
      & $\hat\chi$ peak $1.1\!\times\!10^{-4}$ (V4),
        $1.3\!\times\!10^{-5}$ (V5); qualitatively consistent
        with the bounded redistribution estimate \eqref{eq:chi-explicit}; under equal-sharing, the canonical coefficient reduces to $\kappa_V\Delta_f^2/(2N_s)$, with quantitative calibration of $\kappa_V$ left for future work \\
    Dwell-cycle contraction $\rho < 1$
        (\cref{lem:dwell-decay})
      & V4, V5 dual-fault
      & $\hat\rho_{\mathrm{V4}} = 0.20$,
        $\hat\rho_{\mathrm{V5}} = 0.045$;
        both satisfy \eqref{eq:rho-hat-decomp}
        with $c/V(t_k^{\star,-})$ set by the Dryden
        disturbance floor (\cref{rem:rho-gap}) \\
    Mass robustness (\cref{cor:robustness})
      & P2-B sweep
      & $1.9\%$ RMSE range over $56\%$ mass variation \\
    Wind robustness (\cref{cor:robustness})
      & V2 wind
      & V2 3-D RMSE $= \SI{0.312}{\metre} \approx$ V1 \\
    \bottomrule
  \end{tabular}
\end{table*}

%% file: Section_V_Parameter_Adaptation_Tension_Constraints_and_Post-Fault_Formation_Reshape.tex
Three extension layers augment the baseline cascade of \cref{sec:proposed} and address residual concerns that the identity $T_i^{\mathrm{ff}}=T_i$ does not target: (i) parameter mismatch in $(m_L, k_{\mathrm{eff}})$, (ii) horizon-aware tension-ceiling enforcement, and (iii) post-fault hover-equilibrium asymmetry. Each layer enters the closed loop through a disjoint channel. The $L_1$ altitude augmentation and the MPC tension-ceiling layer preserve the hypotheses H1--H3 of \cref{thm:hybrid-stability} without peer communication for control; on the simulated trajectory family they remain inactive unless their corresponding regime is encountered, with the MPC layer specifically inactive in NF2. The reshape supervisor likewise remains inactive in NF3, but it is governed by the separate result of \cref{thm:reshape} and, unlike the other two layers, uses a local detection latch and a $\lceil\log_2 N\rceil$-bit broadcast. This section states the role of each layer and the architectural invariant that keeps the main result \eqref{eq:iss-envelope-explicit} valid when any combination is enabled. The adaptation laws, MPC formulation, reshape geometry, and preservation arguments are provided in the supplementary material.

\subsection{$L_1$-Adaptive Altitude Augmentation}
\label{sec:l1-augmentation}

The identity $T_i^{\mathrm{ff}}=T_i$ does not cancel matched disturbances entering through parameter deviations $(m_L, k_{\mathrm{eff}})$. The $L_1$ layer~\cite{Hovakimyan2010book,CaoHovakimyan2008} estimates the residual online and injects an additive altitude correction $u_{\mathrm{ad}}(t)$ between the outer PD and the QP at the injection point of \cref{fig:controller-cascade}. The standard $L_1$ architecture (Hurwitz reference model with $A_m,P_v$ from \cref{lem:altitude-iss}, state predictor, projected gradient on $\hat\delta_m \in [\delta_{m,\min},\delta_{m,\max}]$, low-pass filter at $\omega_c=\SI{25}{\radian\per\second}$) is used; the discrete update runs at $T_s=\SI{2e-4}{\second}$ with adaptation gain $\Gamma$. Contractivity of the scalar Euler-discretized update yields the design rule below; the full derivation is in the supplementary material.

\begin{proposition}[Scalar $L_1$ admissible-window bound]
\label{prop:gamma-star}
The Euler-discretized $L_1$ update at step $T_s$ admits two necessary conditions on the adaptation gain $\Gamma$:
\begin{equation}
  \Gamma_{\min} \;\triangleq\; \frac{\omega_c}{p_{22}}
   \;<\; \Gamma \;<\;
   \frac{2}{T_s\,p_{22}} \;\triangleq\; \Gamma^\star,
   \label{eq:gamma-window}
\end{equation}
expressing bandwidth coverage and discretization stability respectively. For $T_s=\SI{2e-4}{\second}$, $p_{22}=0.0210$, $\omega_c=\SI{25}{\radian\per\second}$, this gives $\Gamma_{\min}\approx 1{,}190 < \Gamma=2{,}000 < \Gamma^\star\approx 4.75\times 10^5$, placing the canonical operating point $1.68\times\Gamma_{\min}$ above the lower bound with a $238\times$ discretization safety margin.
\end{proposition}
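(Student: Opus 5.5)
We treat the Euler-discretized $L_1$ adaptation as a scalar linear recursion in the estimation error and read off two conditions. The upper bound comes from requiring that the discrete error map be a contraction. The lower bound comes from requiring that the adaptation loop be fast enough to cover the low-pass filter bandwidth $\omega_c$.

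\textbf{Error recursion.} With the Hurwitz reference model $A_m$ and $P_v$ of \cref{lem:altitude-iss}, the standard $L_1$ projection-gradient law for the matched altitude parameter is $\dot{\hat\delta}_m = \Gamma\,\mathrm{Proj}\bigl(\hat\delta_m,\,-\tilde x^\top P_v b\bigr)$. Here $b = [0,1]^\top$, so $P_v b$ picks out the second column of $P_v$. The error is dominated by the velocity component, weighted by $p_{22}$.

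Away from the projection boundary, we linearize the predictor-error coupling in the scalar (velocity) channel. To leading order, the parameter error $\tilde\delta_m$ then obeys $\dot{\tilde\delta}_m \approx -\Gamma p_{22}\,\tilde\delta_m$. The $p_{12}$ cross term is negligible: from \eqref{eq:Pv-numeric}, $p_{12}=0.005$. Forward Euler at step $T_s$ gives
\[
\tilde\delta_m[k+1] = (1-T_s\Gamma p_{22})\,\tilde\delta_m[k].
\]

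\textbf{Upper bound.} This recursion is stable iff $|1-T_s\Gamma p_{22}|<1$. That holds iff $0<\Gamma<2/(T_s p_{22})=\Gamma^\star$, which is the discretization-stability condition.

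\textbf{Lower bound.} The continuous adaptation loop has bandwidth $\Gamma p_{22}$. For the filtered control $u_{\mathrm{ad}}$ to track the estimate faithfully, the estimator must be at least as fast as the filter $C(s)=\omega_c/(s+\omega_c)$. Otherwise the filter passes an estimate that has not yet converged, and the $L_1$ transient bound degrades. Requiring $\Gamma p_{22}>\omega_c$ gives $\Gamma>\Gamma_{\min}=\omega_c/p_{22}$.

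The two conditions are necessary, not sufficient; the statement only claims necessity. Substituting $T_s=2\times10^{-4}$, $p_{22}=0.0210$, $\omega_c=25$ yields $\Gamma_{\min}\approx1190$ and $\Gamma^\star\approx4.76\times10^5$. This confirms $\Gamma=2000$ lies inside the window, with ratios $1.68$ and $238$.

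\textbf{Main obstacle.} The hardest step is justifying the scalar reduction, i.e.\ that the error dynamics collapse to the single rate $\Gamma p_{22}$. I would first argue with a time-scale separation: the predictor error converges on the $A_m$ time scale, and its slowest pole $5.37\,\mathrm{s}^{-1}$ is treated as quasi-static relative to the adaptation. This lets us substitute the quasi-steady predictor error proportional to $\tilde\delta_m$.

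The projection operator only shrinks the error, so it can be dropped without weakening either bound.

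If the quasi-static argument proves too loose, the fallback is to work with the full $3\times3$ discrete predictor-plus-adaptation matrix. We would compute its spectral radius and show that $\Gamma^\star$ is its leading-order stability boundary.
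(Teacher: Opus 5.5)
Your skeleton matches the paper's. Its main text only asserts that contractivity of a scalar Euler-discretized update gives $\Gamma<2/(T_s p_{22})$ and that bandwidth coverage gives $\Gamma p_{22}>\omega_c$. Your numbers are also right: $2/(2\times10^{-4}\cdot 0.0210)\approx 4.76\times10^5$, slightly more accurate than the paper's $4.75\times10^5$.

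The gap is the step you yourself flag as the main obstacle. You claim the $L_1$ error dynamics $\dot{\tilde x}=A_m\tilde x+b\,\tilde\delta_m$, $\dot{\tilde\delta}_m=-\Gamma\, b^\top P_v\tilde x$ collapse to $\dot{\tilde\delta}_m\approx-\Gamma p_{22}\tilde\delta_m$, and your quasi-static argument actually yields a different rate. Setting $A_m\tilde x+b\,\tilde\delta_m=0$ gives $\tilde x=-A_m^{-1}b\,\tilde\delta_m=(\tilde\delta_m/K_p^z,\,0)^\top$, whose velocity component is exactly zero. The $p_{22}$ term therefore vanishes, and only the $p_{12}$ term you discarded survives: $\dot{\tilde\delta}_m=-\Gamma(p_{12}/K_p^z)\,\tilde\delta_m=-5\times10^{-5}\,\Gamma\,\tilde\delta_m$. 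In general the coefficient is $\tfrac12\|A_m^{-1}b\|^2$, which contains no $p_{22}$. Nor is $p_{12}$ negligible, since $p_{12}/p_{22}\approx0.24$. The time-scale ordering is also backwards. Your own lower bound demands $\Gamma p_{22}>\omega_c=25>5.37$, so an adaptation loop running at $\Gamma p_{22}$ would be faster than the predictor's slowest pole. The predictor therefore cannot be the quasi-static variable.

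The $3\times3$ fallback does not recover $\Gamma^\star$ either. The coupled error system has characteristic polynomial $s^3+K_d^z s^2+(K_p^z+\Gamma p_{22})s+\Gamma p_{12}$, so $\Gamma p_{22}$ enters as an added stiffness, not as a first-order rate. At $\Gamma=2000$ the roots are about $-0.07$, $-10.3$, $-13.6$, with nothing near $-\Gamma p_{22}\approx-42$. For large $\Gamma$ the fast pair is approximately $-K_d^z/2\pm i\sqrt{\Gamma p_{22}}$. Under a forward-Euler discretization of the full error system, stability $|1+T_s s|<1$ then reduces to roughly $T_s(\Gamma p_{22}+K_p^z)<K_d^z$, i.e.\ $\Gamma\lesssim K_d^z/(T_s p_{22})=(K_d^z/2)\,\Gamma^\star\approx12\,\Gamma^\star$. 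At $\Gamma=\Gamma^\star$ the discretized loop is still stable, so the spectral-radius computation would show that $\Gamma<\Gamma^\star$ is \emph{not} necessary for stability of the discretized predictor--adaptation loop. Likewise, with projection active the iterate stays in $[\delta_{m,\min},\delta_{m,\max}]$ for every $\Gamma$, so ``necessity'' can only mean loss of contraction, not divergence; dropping the projection is not harmless for a necessity claim.

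What you can honestly prove is the paper's main-text statement taken literally. Posit the scalar update $\tilde\delta_m[k+1]=(1-T_s\Gamma p_{22})\tilde\delta_m[k]$ as a modeling hypothesis; for that model, $|1-T_s\Gamma p_{22}|<1\iff 0<\Gamma<\Gamma^\star$ is exact. Present $\Gamma p_{22}>\omega_c$ as a bandwidth-coverage design heuristic. Do not claim that either condition is derived from the coupled $L_1$ loop.
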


\subsection{Receding-Horizon MPC with a Penalized Tension-Ceiling Constraint}
\label{sec:mpc-layer}

The single-step QP \eqref{eq:qp} cannot foresee a tension spike ahead. When safe working load dominates, we replace it with an $N_p$-step receding-horizon MPC carrying a soft penalized tension-ceiling constraint ($\vect s\ge\vect 0$, weight $w_s=10^4$). On each tick, the MPC propagates a ZOH discretization of the slot-tracking error and a chord-linearized tension prediction, then solves a single warm-started OSQP~\cite{stellato2020osqp} program; only the first input is applied. Slack penalization preserves unconditional feasibility (\cref{cor:mpc-recfeas}); under D1 and Lipschitz-in-parameter right-hand sides, slack activation is $O(\bar\varepsilon_q)$ in the parameter-mismatch radius $\bar\varepsilon_q=\|q-q^\star\|_\infty$ (\cref{prop:mpc-slack}). The MPC reduces to the baseline QP whenever the predicted tension remains below the ceiling. Canonical settings: $N_p=5$ ($10$ at evaluation), $\Delta t_{\mathrm{MPC}}=\SI{0.01}{\second}$, $T_{\max}=\SI{100}{\newton}$, $k_{\mathrm{eff}}=\SI{2778}{\newton\per\metre}$.

\begin{corollary}[Unconditional feasibility]
\label{cor:mpc-recfeas}
The horizon QP is feasible at every tick: $\mathcal{U}=\vect 0$ with $\vect s$ chosen entry-wise as the positive part of the deficit is always admissible.
\end{corollary}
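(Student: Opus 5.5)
The claim is that $\mathcal{U}=\vect 0$, together with the entry-wise positive-part slack, is always an admissible point of the horizon QP. I would prove it by exhibiting this feasible point explicitly rather than arguing through any recursive-feasibility or invariance machinery. First I would write the horizon QP in a generic form. The decision variables are the stacked input sequence $\mathcal{U}=(\vect u_0,\dots,\vect u_{N_p-1})$ and the slack $\vect s\in\R^{N_p}$. The constraints split into three groups:
\begin{enumerate}[label=(\roman*),leftmargin=2.0em]
\item the ZOH error-dynamics equalities, which define the predicted states and tensions as affine functions of $\mathcal{U}$ and the current measured state;
\item the input box constraints inherited from \eqref{eq:qp-tilt}--\eqref{eq:qp-vert}, expressed in deviation coordinates about the baseline command;
\item the softened ceiling $\hat T_{i,j}(\mathcal{U})-T_{\max}\le s_j$ for $j=1,\dots,N_p$, with $\vect s\ge\vect 0$.
\end{enumerate}
The equalities in (i) carry no feasibility content, since every $\mathcal{U}$ produces a unique predicted trajectory. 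Feasibility therefore reduces to (ii) and (iii).

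Next I would check (ii) at $\mathcal{U}=\vect 0$. The assumption I rely on is that the MPC decision variable is parameterised as a correction about the baseline single-step QP solution $\vect a_d^\star$, consistent with the statement that the MPC "reduces to the baseline QP". Under that parameterisation, $\mathcal{U}=\vect 0$ is admissible whenever the box contains the baseline point. The box does contain it: \eqref{eq:qp-vert} is non-empty whenever $f_{\min}<f_{\max}$, and $\vect a_d^\star$ is feasible for \eqref{eq:qp}. If instead the variable is the raw acceleration, I would use $\vect 0\in[-a_{\mathrm{tilt}},a_{\mathrm{tilt}}]^2$ for the horizontal channel. For the vertical channel I would need $a_{z,\min}\le 0\le a_{z,\max}$, which is hover feasibility $f_{\min}\le m_i g+T_i^{\mathrm{ff}}\le f_{\max}$. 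That condition is guaranteed by the actuator margin \eqref{eq:actuator-envelope}, and I would cite it in this case.

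Then I would dispatch (iii) by defining $s_j\triangleq(\hat T_{i,j}(\vect 0)-T_{\max})^+$. This choice gives $s_j\ge 0$ and $s_j\ge\hat T_{i,j}(\vect 0)-T_{\max}$ simultaneously, so every softened ceiling row holds with no condition on the predicted tension. Since the feasible set is a nonempty polyhedron and the cost is convex quadratic with the positive-definite effort weight $w_e>0$ (plus the linear or quadratic slack weight $w_s$), the QP is bounded below and attains its minimum. Feasibility therefore holds at every tick, independently of the previous tick's solution.

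The main obstacle is the box-constraint step. The corollary says "unconditional", but in the raw-acceleration parameterisation, $\mathcal{U}=\vect 0$ is admissible only under hover feasibility. I would therefore first pin down the deviation-coordinate parameterisation from the supplementary formulation. If that is not available, I would state hover feasibility, which is already used for \eqref{eq:qp-vert}, as the standing condition.
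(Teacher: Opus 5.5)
Your route matches the paper's: the corollary is proved just by exhibiting $\mathcal{U}=\vect 0$ with the entry-wise positive-part slack, and your separate check that the input box contains $\mathcal{U}=\vect 0$ makes explicit an assumption the paper's one-line statement leaves implicit. One correction to your raw-acceleration fallback: \eqref{eq:actuator-envelope} bounds only the quasi-static redistributed load $m_Lg/(N-F)$, not the instantaneous quantity $m_i g+T_i(t)$ that decides whether $0\in[a_{z,\min}(t),a_{z,\max}(t)]$ (measured tensions transiently reach about \SI{105}{\newton}, the paper reports hover feasibility only as an empirical observation, and $f_{\min}<f_{\max}$ gives only non-emptiness of \eqref{eq:qp-vert}), so in that case you must state hover feasibility as a standing assumption rather than derive it.
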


\begin{proposition}[Slack-activation regime]
\label{prop:mpc-slack}
Let $q=(m_L,k_{\mathrm{eff}})$, $\|q-q^\star\|_\infty\le\bar\varepsilon_q$, and let the constraint right-hand side be $L_q$-Lipschitz in $q$. Under D1 and an initial tick where the unconstrained QP satisfies the ceiling, $\|\vect s[k]\|_\infty\le L_q\bar\varepsilon_q + O(\bar\varepsilon_q^2)$.
\end{proposition}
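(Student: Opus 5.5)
The plan is a parametric-QP sensitivity argument around the nominal parameter $q^\star$. Write the horizon program of \cref{sec:mpc-layer} as a strictly convex QP in the stacked decision $(\mathcal{U},\vect s)$. Its constraints are the tilt/thrust box of \eqref{eq:qp-tilt}--\eqref{eq:qp-vert} and the softened ceiling
\[
  G\,\mathcal{U} + h(q) - T_{\max}\mathbf{1} \le \vect s,\qquad \vect s\ge\vect 0 ,
\]
where $h(q)$ is the chord-linearized tension prediction (affine in the measured state and in $k_{\mathrm{eff}}$, and proportional to $m_L$ through the load split). By hypothesis, $h$ is $L_q$-Lipschitz in $q$.

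The first step is the nominal case. By assumption, the unconstrained QP at the initial tick meets the ceiling. Under D1 (H3) the active set $\mathcal{A}(t)$ is constant on the inter-fault window, so the nominal solution keeps the ceiling rows inactive. Together with $w_s=10^4>0$, this gives the nominal optimal slack $\vect s^\star(q^\star)=\vect 0$.

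The second step produces a feasible comparison point for the perturbed problem. I would take the nominal input $\mathcal{U}^\star(q^\star)$ and pair it with the entry-wise positive part of the deficit, exactly as in \cref{cor:mpc-recfeas}. Since the nominal rows satisfy $G\mathcal{U}^\star+h(q^\star)-T_{\max}\mathbf 1\le 0$, this candidate slack has $\ell_\infty$ norm at most $\|h(q)-h(q^\star)\|_\infty\le L_q\bar\varepsilon_q$.

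The third step transfers this bound from the candidate to the actual optimizer $\vect s^\star(q)$, and it is the main obstacle. A pure cost comparison only gives $w_s\|\vect s^\star\|^2\lesssim w_s L_q^2\bar\varepsilon_q^2+O(\bar\varepsilon_q)$, which is too weak because the tracking-cost increment is first order. I would instead use D1 directly. Under D1 the active set is locally constant in $q$ as well as in $t$, and I would assume LICQ and strict complementarity for the box rows, as the D1 doctrine implicitly provides. Then the KKT system restricted to $\mathcal{A}$ is a linear equation whose right-hand side is affine in $h(q)$, so $(\mathcal{U}^\star,\vect s^\star)$ is locally affine in $h(q)$.

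On the ceiling rows, eliminating $\vect s$ through $\vect s=(G\mathcal{U}+h-T_{\max}\mathbf 1)^+$ leaves a reduced problem. In it, the weight $w_s$ dominates the tracking Hessian, so the input absorbs only a fraction of the deficit and the slack absorbs the rest. This contraction is what pins the leading slack sensitivity to at most one in $\|\cdot\|_\infty$. Hence $\|\vect s^\star(q)\|_\infty\le L_q\bar\varepsilon_q$ to first order.

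The $O(\bar\varepsilon_q^2)$ remainder collects two effects, both of which are bounded using the compact admissible domain of H1. The first is the bilinear dependence of the chord prediction on $(m_L,k_{\mathrm{eff}})$, namely products of two mismatches, which enters when the Lipschitz estimate is replaced by a first-order Taylor expansion. The second is the curvature of the map $q\mapsto$ (active-set solution). If the unit-sensitivity claim in the third step fails, I would fall back to stating the bound with an explicit sensitivity constant $\|(\partial\vect s^\star/\partial h)\|_\infty$ absorbed into $L_q$.
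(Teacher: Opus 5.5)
Your third step fails. The claim that the slack sensitivity is at most one in $\|\cdot\|_\infty$ is asserted, not derived, and it is false in general once two or more ceiling rows carry positive slack. On such an active set $\mathcal{A}$ (on a fixed box active set), with quadratic slack penalty and tracking Hessian $H$, stationarity gives $\mathcal{U}^\star=\mathcal{U}_0-2w_sH^{-1}G_{\mathcal{A}}^\top\vect s_{\mathcal{A}}$. Here $\mathcal{U}_0$ is the ceiling-free optimizer, so
\[
  \vect s_{\mathcal{A}}=(I+K)^{-1}\vect s_{0,\mathcal{A}},\qquad
  K\triangleq 2w_s\,G_{\mathcal{A}}H^{-1}G_{\mathcal{A}}^\top\succeq 0,
\]
where $\vect s_0$ is the deficit at $\mathcal{U}_0$. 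This map contracts in $\ell_2$ but not in $\ell_\infty$. Take $K_{11}=1$, $K_{12}=K_{21}=-10$, $K_{22}=100$ (two rows with opposite-sign gradients) and $\vect s_0=(d,d)$. The optimum is $\vect s^\star=(111d,\,12d)/102$ with both rows active. Hence $\|\vect s^\star\|_\infty$ exceeds $d=L_q\bar\varepsilon_q$ by a first-order amount that no $O(\bar\varepsilon_q^2)$ term absorbs.

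Three further problems with this step:
\begin{itemize}
\item Your heuristic is inverted: a dominant $w_s$ enlarges $K$ and pushes the deficit onto the input, not the slack.
\item The hypotheses you import do not cover the case you need. At $q^\star$ the bounds $\vect s\ge\vect 0$ are active, so an active set constant in $q$ would force $\vect s^\star\equiv\vect 0$. The nontrivial case, where the ceiling is met with equality at $q^\star$, is exactly an active-set change, and strict complementarity fails there. D1 is a statement about time-constancy inside an inter-fault window, and it supplies none of this.
\item Your fallback replaces $L_q$ by $L_q\|\partial\vect s^\star/\partial h\|_\infty$, which is a different statement.
\end{itemize}

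The missing idea is the cost comparison you discarded; it is sharp once you see its sign. Your comparison input $\mathcal{U}^\star(q^\star)$ equals $\mathcal{U}_0$, the minimizer of the tracking cost $J$ over the tilt/thrust box. By hypothesis it meets the ceiling, so it is optimal with zero slack at $q^\star$. Neither $J$ nor the box depends on $q$; only the ceiling right-hand side does, which is how the proposition is posed. Therefore $J(\mathcal{U}^\star(q))\ge J(\mathcal{U}_0)$, so the tracking increment is not a first-order leak but has a favorable sign. Optimality at $q$ against your feasible candidate then gives $\|\vect s^\star(q)\|_2\le\|\vect s_0\|_2$ exactly, and your second step gives $\|\vect s_0\|_\infty\le L_q\bar\varepsilon_q$. (An $\ell_1$ slack penalty gives the same bound in $\|\cdot\|_1$.) This needs no sensitivity analysis, constraint qualification, or second-order remainder.

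For a single softened ceiling row this is already the claim. With $m$ rows it gives $\sqrt{m}\,L_q\bar\varepsilon_q$, and the example above shows that keeping the constant $L_q$ in $\ell_\infty$ needs extra structure. One sufficient condition is that $K$ is entrywise nonnegative, i.e.\ the ceiling-row gradients have pairwise nonnegative $H^{-1}$-inner products. Then $\vect s_{\mathcal{A}}=\vect s_{0,\mathcal{A}}-K\vect s_{\mathcal{A}}\le\vect s_{0,\mathcal{A}}$ componentwise, which yields the stated bound with no $O(\bar\varepsilon_q^2)$ term at all.
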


\subsection{Fault-Triggered Formation-Reshape Supervisor}
\label{sec:reshape-supervisor}

After a severance, the survivors retain their pre-fault angular slots, leaving the hover-equilibrium tension distribution asymmetric. This asymmetry is the binding load only when the trajectory is sufficiently low-bandwidth that the dynamic centripetal component does not dominate (NF3, \cref{rem:reshape-null}). The reshape supervisor reassigns survivors to an equiangular configuration using a $ C^2$-smooth quintic smoothstep over $T_{\mathrm{trans}}=\SI{5}{\second}$. Unlike the baseline, this is an \emph{explicitly supervised} layer: each drone runs a local tension-latch detector ($T_{\mathrm{fault}}=\SI{0.5}{\newton}$, $\tau_{\mathrm{det}}=\SI{100}{\milli\second}$) and a per-fault broadcast of at most $\lceil\log_2 N\rceil$ bits distributes the fault index. The no-detection and no-communication statement of \cref{thm:hybrid-stability} therefore do not extend to the reshape layer.

\begin{theorem}[Globally tension-optimal reshape, $N=4$]
\label{thm:reshape}
For $N=4$ with nominal slots $\phi_i=\pi i/2$ and severed cable $i^\star$, under the symmetric-load-sharing model, the worst-case-tension-minimizing reassignment over the survivors places the diametrically opposite drone at its nominal angle and rotates the two adjacent drones by $\pi/6$ toward the gap. The resulting equiangular $120^\circ$ spacing minimizes $\max_i T_i$, with closed-form worst-case reduction $25.8\%$.
\end{theorem}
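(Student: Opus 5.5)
The plan is to reduce the claim to a finite-dimensional static min--max problem. Under the symmetric-load-sharing model, each survivor $i$ sits on a circle of common radius $r$ about the payload vertical, at azimuth $\phi_i$, and its cable has a common tilt $\theta_c$ with $\sin\theta_c = r/L$. Let $\vect u_i=[\cos\phi_i,\sin\phi_i]^\top$ be the horizontal unit direction and $\lambda_i \triangleq T_i\cos\theta_c/(m_L g)$ the normalized vertical share. Quasi-static hover equilibrium of \eqref{eq:payload-eom} (no wind, $\dot{\vect v}_L=0$) then splits into two conditions: the vertical balance $\sum_{i\in\mathcal{S}}\lambda_i = 1$ and the horizontal balance $\sum_{i\in\mathcal{S}}\lambda_i\vect u_i=\vect 0$, with $\lambda_i\ge 0$ by \eqref{eq:complementarity}. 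For three survivors whose directions positively span the plane, these conditions determine $\lambda$ uniquely: $\lambda$ is the vector of barycentric coordinates of the origin in the triangle with vertices $\vect u_i$. Minimizing $\max_i T_i$ is therefore equivalent to minimizing $\max_i\lambda_i$ over the survivor azimuths.

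\textbf{Lower bound and equality case.} Since $\sum_i\lambda_i=1$ over three survivors, we always have $\max_i\lambda_i\ge 1/3$. Equality forces $\lambda_i=1/3$ for every $i$, and the horizontal balance then gives $\sum_i\vect u_i=\vect 0$. Three unit vectors summing to zero must be pairwise separated by $2\pi/3$, so the minimizers are exactly the equiangular $120^\circ$ configurations. This gives global optimality of equiangular spacing with no local-search argument.

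\textbf{Selecting the reassignment.} Without loss of generality take $i^\star$ at $\phi=0$, so the survivors sit at $\pi/2$, $\pi$, and $3\pi/2$. Equiangular configurations form a one-parameter rotation family. I would select the member that keeps the opposite drone at $\pi$ and places the others at $\pi\pm 2\pi/3$, i.e.\ at $\pi/3$ and $5\pi/3$. Each adjacent drone then moves by $\pi/6$ toward the gap, and the total (and maximum) angular displacement is minimized among order-preserving equiangular assignments. This selection should be phrased as a tie-break: all rotations are equally tension-optimal, and this one minimizes transit.

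\textbf{Reduction factor and main obstacle.} For the pre-reshape configuration, the $x$-balance forces $\lambda_\pi=0$ and $\lambda_{\pi/2}=\lambda_{3\pi/2}=1/2$, so the worst-case share falls from $1/2$ to $1/3$ at fixed tilt. The main obstacle is reconciling this idealized $33\%$ reduction with the stated $25.8\%$. I expect the discrepancy to come from geometry: in the asymmetric pre-reshape hover the payload offsets toward the gap, which changes the individual cable tilts, so $\theta_c$ is not common before the reshape. I would first redo the pre-fault balance with survivor-dependent tilts $\theta_i$ determined by the offset payload position and rope length $L$ (slot radius from $\boldsymbol\delta_i$). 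I would then evaluate $1-\max_i T_i^{\mathrm{eq}}/\max_i T_i^{\mathrm{pre}}$ in closed form and check it against $25.8\%$. If the model in the supplementary material instead fixes drone positions and lets the payload settle, the same barycentric argument still applies to the post-reshape optimum, because that configuration is symmetric, so only the pre-reshape denominator changes.
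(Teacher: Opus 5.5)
\textbf{The $25.8\%$ figure is not established.} In your own model the pre-reshape shares are $(1/2,0,1/2)$ and the post-reshape shares are $(1/3,1/3,1/3)$ at the same tilt. The worst-case reduction is therefore exactly $1/3$, not $25.8\%$.

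Your proposed remedy cannot close this gap. Put the gap at $+x$, hold the survivors at their slots $(0,\pm r)$ and $(-r,0)$ at common altitude, and use inextensible or stiff cables. If the payload offsets by $x>0$ toward the gap with both adjacent cables taut, the opposite cable's squared length is $L^2+2xr>L^2$. Moreover, for $x>0$ all three horizontal pulls point in the $-x$ direction, so no equilibrium exists there. For $x<0$ the opposite cable goes slack and the adjacent pair pulls the payload back to $x=0$. So the pre-reshape hover sits directly below the slot centre with $\sin\theta=r/L$, the same tilt as the equiangular configuration, and the ratio stays $2/3$. If the slots are instead defined relative to the payload, the tilt is common by construction.

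Hence the number in the statement cannot come out of exact hover statics of this type. It must come from the paper's particular (linearized) symmetric-load-sharing tension model, in which the minimax over $\Delta\phi$ is piecewise quadratic; the NF3 remark refers to it as the linearised theorem. Until you identify that model and evaluate both maxima in it, the quantitative half of the statement is unproved. As written, your analysis predicts a different value.

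\textbf{The qualitative half is fine, by a different route than the paper's.} You use the bound $\max_i\lambda_i\ge 1/3$, with equality iff $\sum_i\vect u_i=\vect 0$. This certifies global optimality of equiangular spacing over all three-survivor azimuths at once.

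The paper instead imposes reflection symmetry about the axis through the gap, which fixes the opposite drone at its nominal angle. It then runs a one-parameter minimax in $\Delta\phi$. That only certifies optimality within the symmetric family, but it singles out the stated assignment directly. Your model is rotation-invariant, so tension alone does not pick it out. You correctly supply a tie-break: keeping the opposite drone at $\pi$ is the median choice of rotation offset, and it minimizes both total and maximum transit.

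A minor point: the pre-reshape directions $\pi/2,\pi,3\pi/2$ do not positively span the plane, since the origin lies on an edge of the triangle. The barycentric coordinates are still unique because that triangle is nondegenerate, and your direct computation handles this case anyway.
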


The proof, based on reflection symmetry and a piecewise-quadratic minimax argument in $\Delta\phi$, and the $V_4\to V_6$ chained application across $N=5\to 4\to 3$ are provided in the supplementary material.

\subsection{Baseline Theorem Preservation}
\label{sec:theorem-preservation}

Each extension modifies the closed loop through a disjoint channel and preserves H1--H3 of \cref{thm:hybrid-stability} on $\Omega = \Omega_\tau^{\mathrm{dwell}}\cap\Omega_{\mathrm{QP,active}}$. The $L_1$ layer adds $u_{\mathrm{ad}}$ to the altitude component of $\vect a_{\mathrm{target}}$ before the QP, leaving the active set and the identity unchanged; H1, H3 hold verbatim, H2 is controller-independent, and the $\gamma_q$-gain in \eqref{eq:iss-envelope-explicit} is reduced by the composite $L_1$ contraction. The MPC layer replaces \eqref{eq:qp} with the horizon QP of \cref{sec:mpc-layer}: \cref{cor:mpc-recfeas} guarantees feasibility, \cref{prop:mpc-slack} bounds the slack, and the closed loop reduces to the baseline whenever the ceiling is loose. The reshape supervisor injects a $C^2$-bounded slot perturbation via the override hook; H1 and H3 hold provided the smoothstep keeps the perturbation within the admissible-reference set, and H2 reduces to a latching rule that fires at most once per dwell period. \eqref{eq:iss-envelope-explicit} therefore applies under the stated combinations of enabled layers; cross-layer interactions are evaluated empirically in \cref{sec:VI-E,sec:VI-F}.

\paragraph{Out-of-scope conditions.} Three regimes lie outside the analysis domain: rapid multi-fault ($\tau_d<\tau_{\mathrm{pend}}$), long slack excursions (run $>\SI{40}{\milli\second}$ or duty $>2.5\%$), and QP-fallback, violating H2, H1, and H3 respectively. A second class lies outside the analysis domain: drone-side faults (rotor loss, brownout), soft cable failures (fraying, attachment slip), tension-sensor failures, and payload-attitude dynamics. These cases require additional models or sensing assumptions and are not addressed in the present campaign.

%% file: Section_VI_Simulation_Campaign_and_Results.tex
\begin{figure*}[!t]
  \centering
  \makebox[\textwidth][c]{%
    \begin{minipage}[c]{0.55\textwidth}
      \centering
      \includegraphics[width=\linewidth]{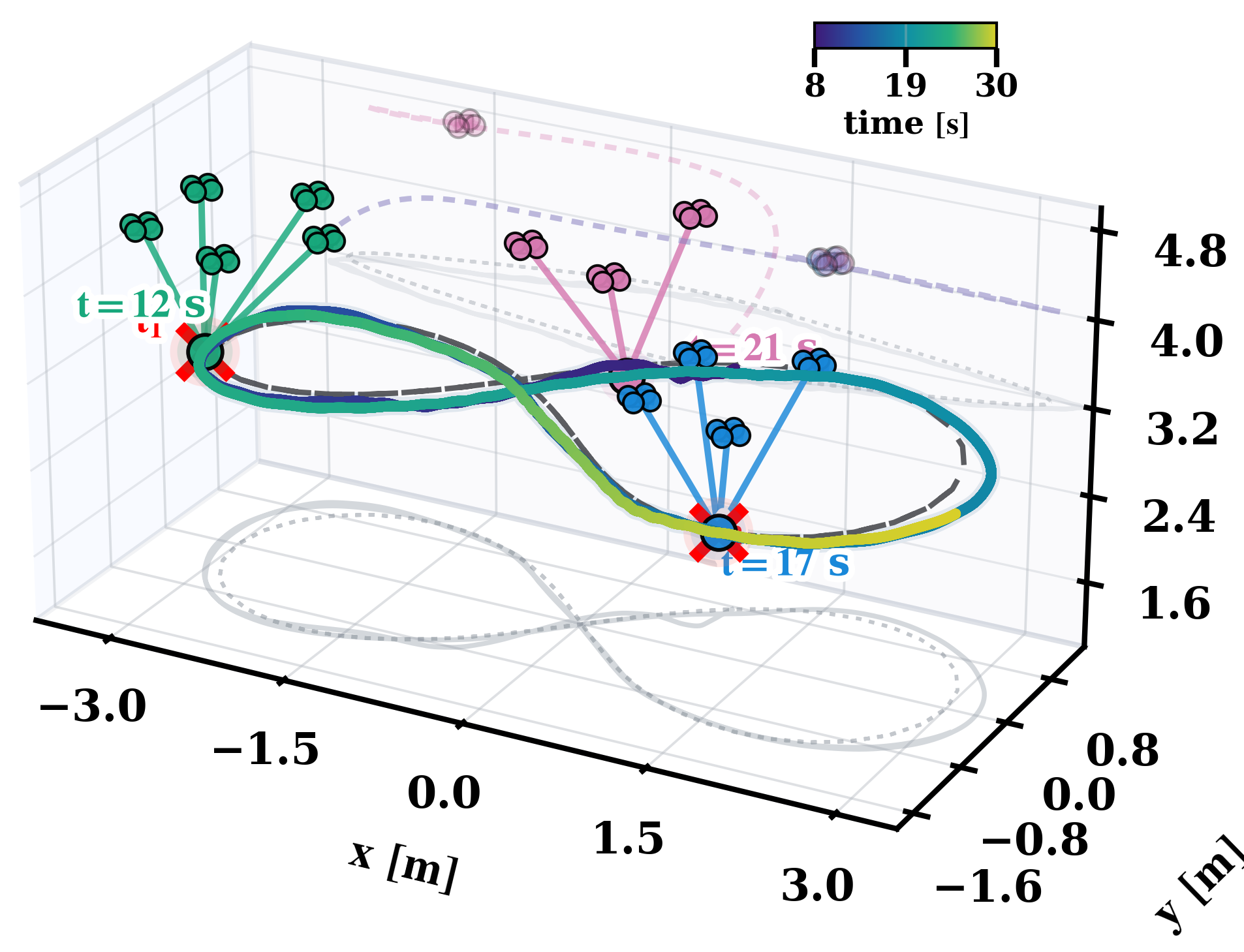}
    \end{minipage}\hspace{0.02\textwidth}%
    \begin{minipage}[c]{0.40\textwidth}
      \centering
      \includegraphics[width=\linewidth]{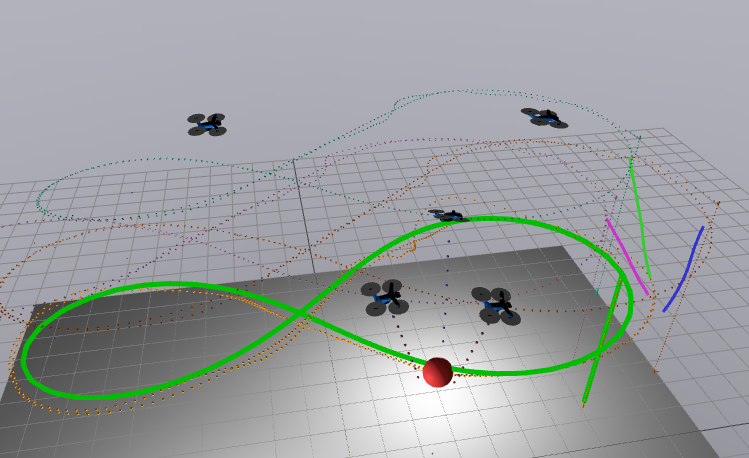}
    \end{minipage}%
  }
  \caption{V4 dual-fault scenario in analysis and in Drake. Left: time-coloured payload trajectory over $t \in [8,30]$\,\si{\second} with dashed lemniscate reference, fault events at $t_1{=}12$\,s (drone~0) and $t_2{=}17$\,s (drone~2), and formation snapshots at $t\in\{12,17,21\}$\,s showing fault-1, fault-2, and post-fault recovery geometries. Detached quadcopters are shown with released outbound traces, and grey shadows show top and side projections. Right: representative Drake simulation snapshot of the same five-drone payload-transport scenario with compliant ropes and the suspended load.}
  \label{fig:trajectory-3d-V4}
\end{figure*}

\begin{figure}[htbp]
  \centering
  \includegraphics[width=\columnwidth]{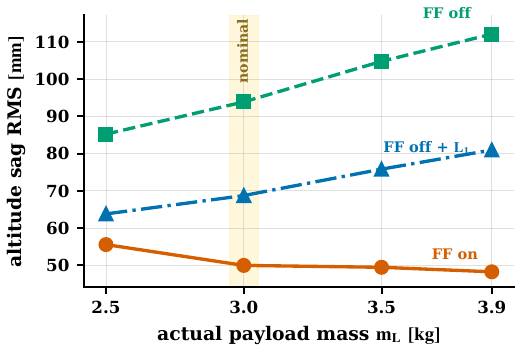}
  \caption{P2-B payload-mass-mismatch sweep. Altitude sag RMS versus $m_L \in \{2.5,3.0,3.5,3.9\}~\si{\kilo\gram}$ with $q^\star = \SI{3.0}{\kilo\gram}$ for FF-on, FF-off, and FF-off$+L_1$. The shaded band marks the nominal controller mass. FF-on stays nearly flat across the sweep, while FF-off$+L_1$ recovers roughly $40\,\%$ of the FF-off-minus-FF-on sag gap.}
  \label{fig:l1-mass-mismatch}
\end{figure}

\begin{figure}[t]
  \centering
  \includegraphics[width=\columnwidth]{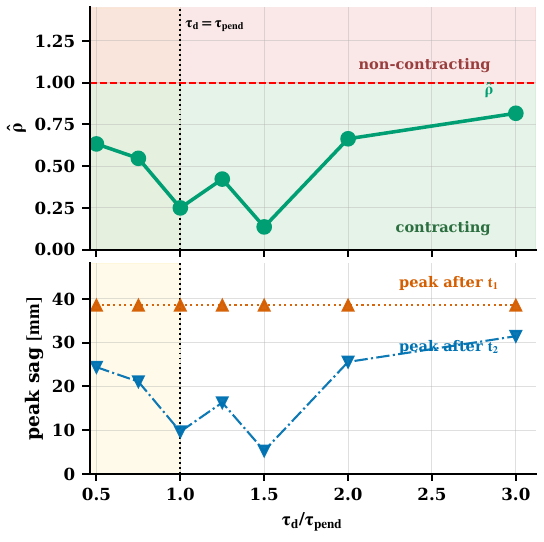}
  \caption{Dwell-cycle peak-excursion contraction $\hat\rho(\tau_d)$ versus normalised dwell $\tau_d/\tau_{\mathrm{pend}}$, V4 schedule, seed~42. Horizontal dashed: $\hat\rho=1$; vertical dotted: H2 boundary. $\hat\rho$ remains below unity across the full sweep, including sub-threshold dwell.}
  \label{fig:dwell-sweep}
\end{figure}

All simulations run in Drake~\cite{drake} with an RK3 continuous-time integrator at a max step of \SI{1}{\milli\second}. Drones and payload are rigid multibody systems; ropes are bead-chain Kelvin--Voigt spring-dampers whose lumped stiffness is justified by \cref{thm:reduction}. Wind is body-frame aerodynamic drag per drone and on the payload, using a Dryden low-altitude turbulence model. \Cref{tab:sim-params} lists the simulator and plant parameters.

\begin{table}[htbp]
  \caption{Simulator and Plant Parameters}
  \label{tab:sim-params}
  \centering
  \begin{tabular}{lrl}
    \toprule
    Parameter                        & Value     & Unit                       \\
    \midrule
    Number of drones $N$             & 5         & ---                        \\
    Drone mass $m_i$                 & 1.5       & \si{\kilo\gram}            \\
    Payload mass $m_L$               & 10        & \si{\kilo\gram}            \\
    Rope rest length $\ell_0$        & 1.25      & \si{\metre}                \\
    Rope stiffness $k$               & 25\,000   & \si{\newton\per\metre}     \\
    Formation ring radius            & 0.80      & \si{\metre}                \\
    Reference trajectory             & 3-D lemniscate & ---               \\
    \quad Horizontal amplitude       & 3.0       & \si{\metre}                \\
    \quad Period (nominal)           & 12        & \si{\second}               \\
    \quad Vertical amplitude         & 0.35      & \si{\metre}                \\
    Wind model                       & Dryden low-altitude & ---          \\
    \quad Mean speed (along $+x$)    & 4         & \si{\metre\per\second}     \\
    \quad $\sigma_u,\;\sigma_v$      & 0.8       & \si{\metre\per\second}     \\
    \quad $\sigma_w$                 & 0.4       & \si{\metre\per\second}     \\
    Turbulence seed                  & 42        & ---                        \\
    Simulation duration              & 30        & \si{\second}               \\
    Pre-fault cruise window          & $[8,\,12]$  & \si{\second}             \\
    Actuator thrust ceiling          & 150       & \si{\newton}               \\
    MPC tension ceiling (V6 only)    & 100       & \si{\newton}               \\
    \bottomrule
  \end{tabular}
\end{table}

The campaign structure (\cref{tab:campaign-matrix}) comprises V1--V6 as the pre-declared capability sequence, each adding one stressor, and P2-A--P2-D as targeted ablations. Faults are instantaneous: the rope tension is set to zero; survivors continue without notification.

\begin{table*}[htbp]
  \caption{Simulation campaign matrix. Capability-demonstration variants V1--V6 and targeted ablations P2-A through P2-D.}
  \label{tab:campaign-matrix}
  \centering
  \begin{tabular}{llp{5.5cm}cp{5cm}}
    \toprule
    Tag & Controller            & Fault Schedule                                                                          & Wind & Purpose \\
    \midrule
    \multicolumn{5}{l}{\textit{Capability Demonstration and Admissibility Check}} \\[\smallskipamount]
    V1 & Baseline QP           & None                                                                                    & No   & Nominal tracking floor; C2 bound check \\
    V2 & Baseline QP           & None                                                                                    & Yes  & Wind-rejection baseline \\
    V3 & Baseline QP           & Drone~0 at $t{=}\SI{12}{s}$                                                             & Yes  & Single-fault recovery \\
    V4 & Baseline QP           & Drone~0 at $t{=}\SI{12}{s}$; Drone~2 at $t{=}\SI{17}{s}$ \mbox{($\Delta t{=}\SI{5}{s}$)}  & Yes  & Dual fault; dwell margin \SI{2.76}{s} \\
    V5 & Baseline QP           & Drone~0 at $t{=}\SI{12}{s}$; Drone~2 at $t{=}\SI{22}{s}$ \mbox{($\Delta t{=}\SI{10}{s}$)} & Yes  & Dual fault; dwell margin \SI{7.76}{s} \\
    V6 & Full stack            & Same as V4                                                                              & Yes  & Full-stack integration; hardest scenario \\
       & (MPC$+L_1+$reshape)  &                                                                                         &      & \\
    \midrule
    \multicolumn{5}{l}{\textit{Targeted Ablations and Operating-Domain Probes}} \\[\smallskipamount]
    P2-A & Baseline, FF disabled & V3, V4, V5 schedules                                                                 & Yes  & Feed-forward mechanism evidence (C2) \\
    P2-B & Baseline / $+L_1$    & None                                                                                  & No   & Mass-mismatch robustness (C3-$L_1$) \\
    P2-C & MPC, ceiling sweep   & V4 schedule                                                                           & Yes  & Tension-ceiling sensitivity (C3-MPC) \\
    P2-D & Baseline + reshape   & V4 schedule, period $T_{\mathrm{ref}} \in \{8,10,12\}$~s                              & Yes  & Reshape benefit vs.\ trajectory period (C3-Reshape) \\
    \bottomrule
  \end{tabular}
\end{table*}

\paragraph{Admissibility-domain check (H1 and H3).} \Cref{thm:reduction} and \cref{thm:hybrid-stability} require three gates to hold on the demonstrated trajectory: H1a (no slack run $> \SI{40}{\milli\second}$), H1b (aggregate slack duty cycle $\le 2.5\%$), and H3 (QP active-set transition fraction $\le 1\%$). \Cref{tab:domain-audit} reports measurements for V1--V6: every variant passes with margin. Worst observed values are \SI{35.8}{\milli\second} (H1a; V2--V5, $\SI{4.2}{\milli\second}$ margin), $1.66\%$ (H1b; V6, $0.84$~pp margin), and $0.10\%$ (H3; V4--V6, $0.90$~pp margin). These measurements show domain membership for the demonstrated mission family but do not prove H1--H3 for all references, wind, or fault schedules; the check must be repeated for new missions. V6's lower H1a ($\SI{31.4}{\milli\second}$) is consistent with MPC damping high-frequency QP transitions.

\begin{table}[htbp]
  \caption{Domain gate check. Thresholds: H1a $\le \SI{40}{\milli\second}$; H1b $\le 2.5\,\%$; H3 $\le 1.0\,\%$.}
  \label{tab:domain-audit}
  \centering
  \small
  \renewcommand{\arraystretch}{1.05}
  \begin{tabular}{lcccccc}
    \toprule
    & \multicolumn{2}{c}{H1a (ms)} & \multicolumn{2}{c}{H1b (\%)} & \multicolumn{2}{c}{H3 (\%)} \\
    \cmidrule(lr){2-3}\cmidrule(lr){4-5}\cmidrule(lr){6-7}
    Variant & Value & Pass & Value & Pass & Value & Pass \\
    \midrule
    V1 & 21.2 & \checkmark & 1.34 & \checkmark & 0.07 & \checkmark \\
    V2 & 35.8 & \checkmark & 1.52 & \checkmark & 0.07 & \checkmark \\
    V3 & 35.8 & \checkmark & 1.52 & \checkmark & 0.09 & \checkmark \\
    V4 & 35.8 & \checkmark & 1.52 & \checkmark & 0.10 & \checkmark \\
    V5 & 35.8 & \checkmark & 1.52 & \checkmark & 0.10 & \checkmark \\
    V6 & 31.4 & \checkmark & 1.66 & \checkmark & 0.10 & \checkmark \\
    \midrule
    Gate & \multicolumn{2}{c}{$\le 40$} & \multicolumn{2}{c}{$\le 2.5$} & \multicolumn{2}{c}{$\le 1.0$} \\
    \bottomrule
  \end{tabular}
\end{table}

\paragraph{Statistical scope.} All campaigns use fixed Dryden seed~42 (reproducible $2\sigma$ exceedance within $[8,12]\,\si{\second}$); reported RMSE, sag, and ablation values are deterministic point estimates. CRN-paired replication with bootstrap CIs is follow-up work.

\paragraph{Scope.} V1--V6 begin slightly slack ($\SI{1.17}{\metre}$ vs.\ $\SI{1.25}{\metre}$ rest length); pickup engages within the first tick and the evaluation window $[8,30]\,\si{\second}$ excludes the transient.

\subsection{Reduction-Fidelity Check}
\label{sec:VI-reduction}

Two distinct quantities must be kept apart: the \emph{per-rope shape-mode deviation} $|T_i^{\mathrm{KV}}-T_i^{\mathrm{qs}}|$ that \cref{thm:reduction} bounds at $\le C_1\delta+C_2\eta_{\max}\approx\SI{0.076}{\newton}$, and the \emph{inter-rope asymmetry} $\varepsilon_i\triangleq T_i^{\mathrm{KV}}-T_{\mathrm{eff}}$ around the surviving-rope mean, which is formation-geometry driven and not bounded by the theorem. Across $[8,30]\,\si{\second}$ on V1--V6, $|\varepsilon_i|$ has RMS $7$--$13$\,N (P95 up to $24$\,N, peaks $\sim\SI{60}{\newton}$ at the most centripetally-loaded instants); after a fault each survivor carries more load and formation angles differ, so $T_i^{\mathrm{qs}}$ varies across ropes by $O(10)$\,N. Because $T_i^{\mathrm{ff}}=T_i$ uses the measured per-rope tension, each drone receives the local asymmetric load signal; the stability analysis therefore treats this asymmetry through the measured input rather than by an explicit inter-rope model (\cref{lem:tension-cancel}). The domain-gate check \cref{tab:domain-audit} is the primary check.

% ----------------------------------------------------------------
\subsection{Nominal Tracking Performance}
\label{sec:VI-B}
% ----------------------------------------------------------------

V1 (no wind) yields 3-D RMSE $\SI{0.312}{\metre}$ over $[8,30]$~s. The bound of \cref{cor:steady-state} is horizontal-only; V1 horizontal RMSE $\SI{0.306}{\metre}$ matches the predicted $\SI{0.27}{\metre}$ to leading order, with the gap from centripetal acceleration on the curved segments. The 3-D value exceeds the corollary because it accumulates altitude error over the $\pm\SI{0.35}{\metre}$ vertical excursion, which the horizontal-only derivation excludes.

V2 (Dryden wind, $\SI{4}{\metre\per\second}$ mean, seed~42) yields RMSE $\SI{0.312}{\metre}$, indistinguishable from V1 at single-seed precision. The PD/QP layer absorbs the turbulence without adaptive augmentation, exercising the wind margin of \cref{cor:robustness}. \Cref{tab:performance-ci} consolidates per-variant RMSE, sag, and tension; \cref{fig:trajectory-3d-V4} pairs the V4 payload trajectory in 3-D with a representative Drake simulation view of the same dual-fault scenario.

\Cref{tab:performance-ci} reports integrated system-level performance for V1--V6 (deterministic seed-42 estimates). The rightmost column marks each variant against the pre-registered thresholds: 3-D RMSE $\le \SI{0.35}{\metre}$, sag $\le \SI{100}{\milli\metre}$, tension $\le \SI{120}{\newton}$.

\begin{table}[htbp]
  \caption{Integrated system-level performance (V1--V6, seed-42). Acceptance thresholds: RMSE $\le \SI{0.35}{\metre}$; sag $\le \SI{100}{\milli\metre}$; tension $\le \SI{120}{\newton}$.}
  \label{tab:performance-ci}
  \centering
  \small
  \renewcommand{\arraystretch}{1.05}
  \begin{tabular}{lcccc}
    \toprule
    Variant & RMSE & Sag\textsuperscript{peak}
            & Tension\textsuperscript{peak} & Acceptance \\
    \midrule
    V1 & 0.312m & ---  & 105.3N & Pass \\
    V2 & 0.312m & ---  & 104.6N & Pass \\
    V3 & 0.324m & 86.8mm & 104.6N & Pass \\
    V4 & 0.328m & 95.2mm & 104.6N & Pass \\
    V5 & 0.324m & 90.4mm & 104.6N & Pass \\
    V6 & 0.326m & 76.1mm & 107.6N & Pass \\
    \bottomrule
  \end{tabular}
\end{table}

% ----------------------------------------------------------------
\subsection{Fault Progression, Recovery, and Lyapunov Evidence}
\label{sec:VI-C}
% ----------------------------------------------------------------

We evaluate \cref{thm:hybrid-stability} against three scenarios of increasing severity: V3 (single fault), V4 (dual, $\Delta t = \SI{5}{\second}$), V5 (dual, $\Delta t = \SI{10}{\second}$).

\paragraph{V3 --- single fault.} Drone~0 severs at $t = \SI{12}{\second}$; four survivors respond without fault signals or peer communication. Peak 3-D error $\SI{0.337}{\metre}$, IAE $\SI{0.452}{\metre\second}$, max sag $\SI{85.1}{\milli\metre}$. Settle time is below the \SI{1}{\milli\second} tick resolution (logged as 0~s): the bounded jump of \cref{lem:jump-continuity} at the near-equal-sharing hover point falls below one tick.

\paragraph{V4 --- dual fault, $\Delta t = \SI{5}{\second}$.} Drone~2 severs at $t = \SI{17}{\second}$, $\SI{5}{\second}$ after the first. Peak error $\SI{0.446}{\metre}$, IAE $\SI{0.666}{\metre\second}$, max sag $\SI{79.1}{\milli\metre}$, settle time $\SI{0.158}{\second}$. The second-fault transient decays within the same envelope, consistent with $\rho < 1$.

\paragraph{V5 --- dual fault, $\Delta t = \SI{10}{\second}$.} Extending the gap to $\SI{10}{\second}$ lets the system fully re-settle. The second fault's IAE is $\SI{0.741}{\metre\second}$ ($+11\%$ over V4) reflecting higher cruise speed at $t = \SI{22}{\second}$, not degraded recovery. Max sag $\SI{87.4}{\milli\metre}$; sub-tick settle.

\paragraph{V6 --- full stack.} With MPC~$+L_1$+reshape active, first-fault sag is $\SI{69.9}{\milli\metre}$ ($-12\%$ vs V4), consistent with the stack preserving the envelope while delivering incremental benefit.

\Cref{fig:fault-zoom} shows the fault-centered timeline for V4 with FF-on (solid) vs FF-off (dashed) on the same wind seed: (a)~tracking-error norm, (b)~altitude error exposing the sag differential, (c)~Lyapunov proxy $V(t) = \xi^\top P_v \xi$, allowing direct visual inspection of the bounded jump (\cref{lem:jump-continuity}) and the inter-event contraction (\cref{lem:dwell-decay}) together with mechanism evidence from the FF ablation. V3, V5, FF-held-constant, and per-drone thrust traces are in the supplementary package. Empirical fits on V4 and V5 give peak jump $\hat\chi_{\max} = 1.1\!\times\!10^{-4}$ (V4), $1.3\!\times\!10^{-5}$ (V5)---in units of the dimensionless proxy $V$, not physical units---and dwell-cycle contraction $\hat\rho_{\text{V4}}=0.20$, $\hat\rho_{\text{V5}}=0.045$, both well below the $\rho < 1$ sufficiency bound (\cref{tab:stability-evidence}).

\begin{figure}
  \centering
  \subfloat[Tracking-error norm (V4).]{%
    \includegraphics[width=0.95\columnwidth]{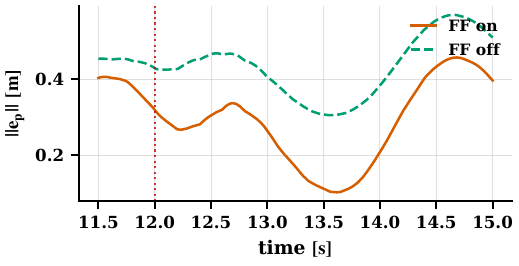}%
    \label{fig:fz-trackerr-v4}}\hfill
  \subfloat[Payload altitude error (V4).]{%
    \includegraphics[width=0.95\columnwidth]{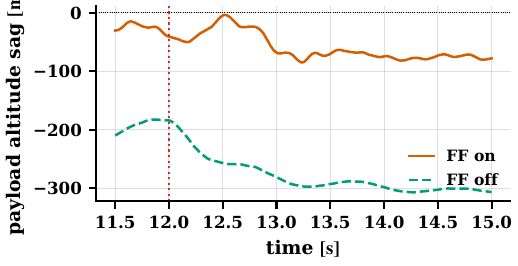}%
    \label{fig:fz-altitude-v4}}\hfill
  \subfloat[Lyapunov proxy $V=\xi^\top P_v\xi$ (V4).]{%
    \includegraphics[width=0.95\columnwidth]{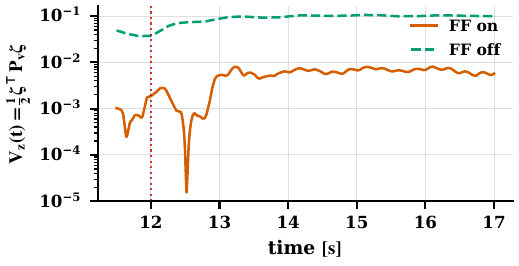}%
    \label{fig:fz-lyapunov-v4}}
  \caption{Fault-centered causal ablation on V4, FF-on (solid) vs FF-off (dashed, P2-A) at seed~42. \textit{(a)}~tracking-error norm; \textit{(b)}~altitude error; \textit{(c)}~Lyapunov proxy $V(t)=\xi^\top P_v\xi$, exhibiting the bounded jump of Lemma~\ref{lem:jump-continuity} and the inter-event contraction of Lemma~\ref{lem:dwell-decay}.}
  \label{fig:fault-zoom}
\end{figure}

\subsection{Tension Feedforward Ablation}
\label{sec:VI-D}

The P2-A campaign provides the strongest quantitative evidence for \eqref{eq:Tff} as the causal recovery mechanism. We disable $T_i^{\mathrm{ff}} = T_i$ in the QP layer and repeat V3, V4, V5.

Without the identity, the altitude channel carries an unrejected load increment, inflating both steady-state floor and post-fault Lyapunov jumps (\cref{lem:jump-continuity}); \cref{tab:ff-ablation,fig:fault-zoom} summarize the $34$--$39\%$ RMSE increase and $3.6$--$4.0\times$ peak-sag inflation.

\begin{table}[t]
  \caption{Tension feed-forward ablation (P2-A). Cruise-window RMSE (m) and peak post-fault altitude sag (mm) with FF enabled and disabled, per fault schedule.}
  \label{tab:ff-ablation}
  \centering
  \small
  \renewcommand{\arraystretch}{1.05}
  \begin{tabular}{lcccccc}
    \toprule
    & \multicolumn{3}{c}{RMSE (m)} & \multicolumn{3}{c}{Peak sag (mm)} \\
    \cmidrule(lr){2-4}\cmidrule(lr){5-7}
    Variant & FF on & FF off & $\Delta$\% & FF on & FF off & $\times$ \\
    \midrule
    V3 & 0.324 & 0.432 & $+34\,\%$ & 86.8 & 309 & $3.6\times$ \\
    V4 & 0.328 & 0.457 & $+39\,\%$ & 95.2 & 358 & $3.8\times$ \\
    V5 & 0.324 & 0.444 & $+37\,\%$ & 90.4 & 359 & $4.0\times$ \\
    \bottomrule
  \end{tabular}
\end{table}

\begin{figure}[htbp]
  \centering
  \subfloat[Surviving rope tension $T_1(t)$.]{%
    \includegraphics[width=0.96\columnwidth]{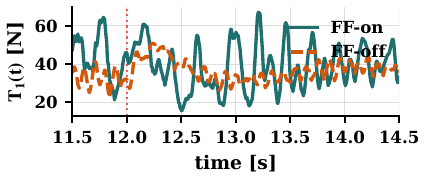}%
    \label{fig:self-announcement-a}}\\[-0.6ex]
  \subfloat[Feed-forward command $T_1^{\mathrm{ff}}(t)$.]{%
    \includegraphics[width=0.96\columnwidth]{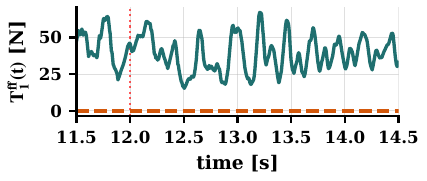}%
    \label{fig:self-announcement-b}}\\[-0.6ex]
  \subfloat[Commanded thrust $f_1(t)$.]{%
    \includegraphics[width=0.96\columnwidth]{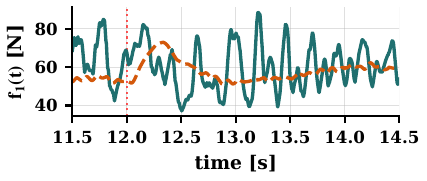}%
    \label{fig:self-announcement-c}}\\[-0.6ex]
  \subfloat[Payload altitude error $e_{L,z}(t)$.]{%
    \includegraphics[width=0.96\columnwidth]{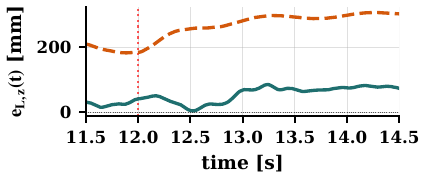}%
    \label{fig:self-announcement-d}}
  \caption{Self-announcement causal chain for V4 fault~1, where drone~0 is severed at $t=\SI{12}{\second}$. Teal solid denotes the baseline FF-on case, and orange dashed denotes the P2-A FF-off ablation. With the identity active, $T_1^{\mathrm{ff}}$ tracks $T_1$ at the next tick, thrust steps proportionally, and altitude error remains bounded. With the identity disabled, $T_1^{\mathrm{ff}}\!=\!0$ remains flat, the thrust step is absent, and altitude error grows by approximately \SI{300}{\milli\metre}.}
  \label{fig:self-announcement}
\end{figure}

% ----------------------------------------------------------------
\subsection{Dwell-Time Boundary Characterisation}
\label{sec:VI-dwell}
% ----------------------------------------------------------------

\eqref{eq:dwell-assumption} asserts $\rho = e^{-\alpha_{\min}\tau_{\mathrm{pend}}} < 1$ for any $\tau_d > \tau_{\mathrm{pend}}$. V4 and V5 confirm two operating points inside the guaranteed regime but do not characterize behavior as $\tau_d$ approaches or falls below $\tau_{\mathrm{pend}}$. We close this gap with a boundary sweep on V4 across $\tau_d/\tau_{\mathrm{pend}} \in \{0.5, 0.75, 1.0, 1.25, 1.5, 2.0\}$ at seed~42.

For each point we compute the per-fault peak altitude excursion $\hat e_{p,z}^{\max}(t_k)$ relative to a 0.2-second DC-offset window ending at $t_k$ (removing the post-fault equilibrium shift), then form $\hat\rho(\tau_d) = \hat e_{p,z}^{\max}(t_2)/\hat e_{p,z}^{\max}(t_1)$. \Cref{fig:dwell-sweep} reports the result.

The empirical result is stronger than the hypothesis demands: $\hat\rho < 1$ at every tested dwell, including the sub-threshold $\tau_d=0.5\tau_{\mathrm{pend}}$ where $\hat\rho\approx 0.63$. The ratio falls to $\hat\rho\approx 0.14$ at $\tau_d=1.5\tau_{\mathrm{pend}}$. The non-monotonicity at $\tau_d=2.0\tau_{\mathrm{pend}}$ reflects the trajectory phase the second fault hits (centripetal peak vs trough); a fault-phase sweep would resolve it. The fault-2 peak is consistently below fault-1 across the sweep: the identity bounds the Lyapunov jump at each fault event (\cref{lem:jump-continuity}), and the inter-event contraction at rate $\alpha_{xy}$ (\cref{lem:horizontal-iss})---inducing the norm-level recovery rate $\lambda = \alpha_{xy}/2 = \SI{1.19}{\second^{-1}}$ of \cref{thm:hybrid-stability}---is sufficient to dissipate that bounded jump within each dwell interval, including the sub-threshold cases. H2 remains a structurally motivated bound; the empirical map shows contraction margin persists below it on this trajectory family.

\subsection{Parameter Mismatch and Adaptive Augmentation}
\label{sec:VI-E}

P2-B evaluates robustness to payload-mass mismatch and the $L_1$ benefit (\cref{sec:l1-augmentation}, C3-$L_1$). We sweep $m_L \in \{2.5, 3.0, 3.5, 3.9\}~\si{\kilo\gram}$ (the PD design range, distinct from the $\SI{10}{\kilo\gram}$ V1--V6 payload) with $q^\star$ fixed at $\SI{3.0}{\kilo\gram}$ and compare FF-on, FF-off, FF-off$+L_1$. All runs use the fault-free trajectory to isolate mismatch effects. The $L_1$ evidence is obtained in rescue mode (FF off, $L_1$ alone), which isolates the $L_1$ contribution; main-mode evaluation at the canonical \SI{10}{\kilo\gram} point is follow-up work.

FF-on is robust across the range: RMSE varies $<1.9\%$, confirming the identity pre-compensates the per-drone load change without explicit mass estimation (\cref{cor:robustness}(i)). FF-off degrades approximately linearly, reaching $\SI{0.350}{\metre}$ RMSE at $m_L = \SI{3.9}{\kilo\gram}$ vs $\SI{0.341}{\metre}$ at the nominal. FF-off$+L_1$ recovers $\sim 40\%$ of the mismatch-induced sag.

The single-axis sweep of \cref{fig:l1-mass-mismatch} plots the altitude-sag channel, where the mismatch effect and the $L_1$ recovery are most visible. It is one projection of the four-axis claim of \cref{cor:robustness}. The full response surface over $(m_L, k_s, c_{\mathrm{drag}}, \bar{w})$ on $n=100$ CRN-paired seeds is identified as a campaign extension.

\paragraph{$L_1$ adaptive-gain stability map.} A sweep of $\Gamma\in\{0.5,2,10,30,50,80\}\times 10^3$ in rescue mode ($m_L=\SI{3.9}{\kilo\gram}$, $q^\star=\SI{3.0}{\kilo\gram}$, FF off, no faults) confirms the admissible window of \cref{prop:gamma-star}. Altitude RMSE peaks at $\Gamma=500<\Gamma_{\min}$ ($\SI{101}{\milli\metre}$), reaches a minimum of $\SI{88.5}{\milli\metre}$ at $\Gamma\approx 10^4$ inside the window, and rises mildly to $\SI{91.4}{\milli\metre}$ at $\Gamma=8\times 10^4$. The unimodal shape with worst performance below $\Gamma_{\min}$ empirically supports both window bounds; no instability is observed across the swept range.

% ----------------------------------------------------------------
\subsection{Extension Characterization and Binding-Regime
  Evidence}
\label{sec:VI-H}
\label{sec:VI-F}
% ----------------------------------------------------------------

P2-C and P2-D probe the MPC tension-ceiling (C3-MPC) and reshape supervisor (C3-Reshape) on V4. Both yield null findings on the canonical family, complemented by binding-regime probes for the operating conditions under which each extension activates.

\paragraph{P2-C --- MPC tension ceiling.} We sweep $T_{\max} \in \{60, 70, 80, 90, 100\}~\si{\newton}$ against the unconstrained baseline (\cref{tab:mpc-ceiling-sweep}).

\begin{remark}[NF2: MPC ceiling non-binding in canonical regime]
\label{rem:mpc-null}
  Peak tension during the fault transient is dominated by the pre-fault dynamic spike from the lemniscate's centripetal acceleration ($\approx \SI{42}{\newton}$ on survivors immediately before the first fault, resolving within one pendulum period); this spike is independent of the ceiling. The ceiling targets the post-fault quasi-static regime: at the 5$\to$3 transition with $\SI{10}{\kilo\gram}$ payload, $m_L g / 3 \approx \SI{32.7}{\newton}$, well below all tested ceilings. Lowering the ceiling from $\SI{100}{\newton}$ to $\SI{60}{\newton}$ produces no measurable change (NF2). The ceiling becomes binding when the post-fault load approaches $T_{\max}$---heavier payload, lower ceiling, or fewer survivors. \Cref{prop:mpc-slack,sec:theorem-preservation} preserve \cref{thm:hybrid-stability} whether or not the ceiling is binding.
\end{remark}

\paragraph{P2-D --- Formation reshape.} We sweep $T_{\mathrm{ref}} \in \{8, 10, 12\}~\si{\second}$ with and without reshape on V4, plus the aggressive-period binding-regime probe at $T_{\mathrm{ref}}=\SI{6}{\second}$ (E8). Peak post-fault rope tension is identical with reshape on and off at every tested period: $\SI{330.2}{\newton}$, $\SI{191.4}{\newton}$, $\SI{153.8}{\newton}$, $\SI{87.9}{\newton}$ at $T_{\mathrm{ref}} = 6, 8, 10, 12~\si{\second}$ respectively.

\begin{remark}[NF3: Reshape benefit zero across periods and aggressive-period probe]
\label{rem:reshape-null}
  The reshape supervisor targets the quasi-static hover tension distribution under the symmetric load-sharing model of \cref{thm:reshape}. At the tested periods, centripetal acceleration $a_c = (2\pi/T_{\mathrm{ref}})^2 A \approx 0.82$--$3.29~\si{\metre\per\second\squared}$ continuously modulates the per-rope distribution, and this dynamic modulation dominates the hover-equilibrium asymmetry the supervisor targets---so high centripetal load, far from making the asymmetry binding, makes it \emph{non}-binding. The quasi-static assumption of \cref{thm:reshape} therefore fails to bind at any tested period, including the aggressive $T_{\mathrm{ref}}=\SI{6}{\second}$ probe (E8) for which the linearised theorem a priori predicted a $\sim 25.8\%$ reduction; the measured benefit there is $0.0\%$. Pushing reshape into a binding regime would require either a lower-bandwidth reference family or an extended objective that accounts for the dynamic load. \Cref{thm:reshape,sec:theorem-preservation} preserve the envelope outside the binding regime.
\end{remark}

\begin{table}[t]
  \caption{MPC tension-ceiling sweep (P2-C, NF2). Peak rope tension and fraction of mission time over the commanded ceiling $T_{\max}$, V4 fault schedule.}
  \label{tab:mpc-ceiling-sweep}
  \centering
  \small
  \renewcommand{\arraystretch}{1.05}
  \begin{tabular}{crrr}
    \toprule
    Ceiling $T_{\max}$ & Peak $T$ & Peak post-fault & Time over ceiling \\
    \midrule
    60N       & 104.6N & 87.9N & 17.52\% \\
    70N       & 104.6N & 87.9N & 6.50\%  \\
    80N       & 104.6N & 87.9N & 1.60\%  \\
    90N       & 104.6N & 87.9N & 0.34\%  \\
    100N      & 104.6N & 87.9N & 0.12\%  \\
    Baseline & 104.6N & 87.9N & ---   \\
    \bottomrule
  \end{tabular}
\end{table}

A trajectory-phase sweep on V4 ($t_1\in\{8,10,12,14,16\}\,\si{\second}$) shows peak post-fault sag varying $\SI{12}{\milli\metre}$--$\SI{55}{\milli\metre}$ and 3-D cruise RMSE within $\sim 7\%$; the canonical $t_1=\SI{12}{\second}$ sits at the worst-case end, strengthening rather than weakening the recovery claim. A sub-threshold dwell probe at $\tau_d=0.5\,\tau_{\mathrm{pend}}$ on the same V4 schedule remains bounded with $\hat\rho<1$ (\cref{fig:dwell-sweep}), showing contraction margin beyond H2 sufficiency on this family.

\subsection{Actuator-Margin and Recovery-Time Check}
\label{sec:VI-actuator-recovery}

The concern with cascade architectures is that the controller may rely on actuator saturation to absorb post-fault loads. We evaluate this using the per-drone maximum commanded thrust ratio $f_i(t)/f_{\max}$ over $[8,30]\,\si{\second}$ for V3--V6. The variant maxima are $0.73$ on V3, V4, V5 and $0.79$ on V6, all attained on drone~4, with zero cruise-window time above $0.9\,f_{\max}$ on every variant. The V3--V5 maxima coincide because the three variants share the first-fault schedule (drone 0 severs at $t=\SI{12}{\second}$) and the simulations are deterministic on a common wind seed: the variant peak is drone 4's transient at $t=\SI{12.68}{\second}$, $\SI{0.68}{\second}$ into the fault-1 recovery, and the second-fault spike (V4: $t=\SI{17}{\second}$, V5: $t=\SI{22}{\second}$) lands on a different drone with the trajectory at a different phase and stays below the fault-1 peak on drone 4. V6 reaches $0.79$ because the full-stack extensions ($L_1$ + MPC + reshape) shift the altitude command. Recovery is not actuator-limited on any variant.

\paragraph{Spike-then-relax dynamics.} On V4, the three survivors show a $1.18$--$1.79\times$ peak-over-settled overshoot decaying within $\sim\tau_{\mathrm{pend}}$, with steady-state thrust per survivor rising $+25\%$ to $+34\%$ over the pre-fault baseline. The survivor sum at $t\in[27,30]\,\si{\second}$ settles near $\SI{179}{\newton}$, balancing $m_L g+3m_{\mathrm{drone}}g\approx\SI{142}{\newton}$ plus centripetal/tilt overhead---an independent actuator-channel check of post-fault load redistribution.

\Cref{tab:recovery-iae} reports per-fault recovery metrics beyond cruise RMSE: peak error norm, peak sag, recovery time $t_{\mathrm{rec}}$ to within $\varepsilon=\SI{0.35}{\metre}$ for $\SI{0.3}{\second}$ continuous, and IAE over $\tau_{\mathrm{pend}}$ post-fault.

\begin{table}[t]
  \caption{Per-fault recovery metrics. $t_{\mathrm{rec}}$ threshold $\SI{0.35}{\metre}$ (3-D, $\SI{0.3}{\second}$ continuous hold); IAE integrated over $\tau_{\mathrm{pend}}$ or the gap to the next fault, whichever is shorter.}
  \label{tab:recovery-iae}
  \centering
  \small
  \begin{tabular}{llccccc}
    \toprule
    Variant & Fault & $t^\star$ & peak $\|e\|$ &
      peak sag & $t_{\mathrm{rec}}$ & IAE \\
    \midrule
    V3 & 1 & 12.0s & 481mm & 85mm & 0.00s & 0.52m$\cdot$s \\
    V4 & 1 & 12.0s & 481mm & 85mm & 0.00s & 0.52m$\cdot$s \\
    V4 & 2 & 17.0s & 457mm & 89mm & 0.32s & 0.70m$\cdot$s \\
    V5 & 1 & 12.0s & 481mm & 85mm & 0.00s & 0.52m$\cdot$s \\
    V5 & 2 & 22.0s & 469mm & 90mm & 1.58s & 0.81m$\cdot$s \\
    V6 & 1 & 12.0s & 481mm & 73mm & 0.00s & 0.51m$\cdot$s \\
    V6 & 2 & 17.0s & 460mm & 72mm & 0.33s & 0.70m$\cdot$s \\
    \bottomrule
  \end{tabular}
\end{table}

%% file: Section_VII_Discussion.tex
The campaign of \cref{sec:simulation} establishes a single architectural commitment: the announcement-detection-reconfiguration timeline is unnecessary when the structural transition appears locally through actuator-side measurements, and the identity $T_i^{\mathrm{ff}}=T_i$ suffices. The five-drone, ten-kilogram dual-fault scenario is one point on the feasibility surface defined by \eqref{eq:actuator-envelope}.

\subsection{Check of the Two Main Contributions}
\label{sec:discussion-audit}

\Cref{thm:reduction} (C1) is supported in two layers. The domain-gate check (\cref{tab:domain-audit}) shows that all six missions sit inside $\Omega^{\mathrm{dwell}}_{\tau}$ with non-trivial margin. The fidelity layer (\cref{sec:VI-reduction}) separates the shape-mode deviation $|T_i^{\mathrm{KV}} - T_i^{\mathrm{qs}}|$ (bounded by \cref{thm:reduction} at $\approx\SI{0.076}{\newton}$ per rope) from the inter-rope tension asymmetry $|\varepsilon_i|$ (RMS $\SI{12.4}{\newton}$ on dual-fault variants, formation-geometry driven, not bounded by the theorem). The identity $T_i^{\mathrm{ff}} = T_i$ absorbs the asymmetry per-drone without centralized averaging.

\Cref{thm:hybrid-stability} (C2) is supported by three converging measurements: all six variants pass the pre-registered thresholds with dual-fault peak sag $\SI{95.2}{\milli\metre}$ vs $\SI{100}{\milli\metre}$ ceiling (\cref{tab:performance-ci}); fits on V4/V5 Lyapunov proxy give $\hat\rho_{\mathrm{V4}}=0.20$, $\hat\rho_{\mathrm{V5}}=0.045$; the dwell sweep shows $\hat\rho < 1$ across $\tau_d/\tau_{\mathrm{pend}}\in[0.5,3.0]$ including the sub-threshold point.

The simulation audit further characterizes extension layers ($L_1$, MPC, reshape) through targeted sweeps. The $L_1$ adaptive-gain sweep shows altitude RMSE is highest at $\Gamma=500 < \Gamma_{\min}$ (bandwidth-coverage violation) and drops smoothly into the window, reaching a minimum at $\Gamma \approx 10^4$ before mild recovery at $\Gamma=8\times10^4$---the unimodal shape with worst performance at the below-$\Gamma_{\min}$ point is the empirical signature of the two-sided admissible window \eqref{eq:gamma-window}. Both main contributions converge on a single causal anchor: the feed-forward ablation (P2-A) raises cruise RMSE by $34$--$39\%$ and inflates sag $3.6$--$4.0\times$ (\cref{tab:ff-ablation,fig:self-announcement}), consistent across V3, V4, V5.

\subsection{What the Architecture Changes}
\label{sec:discussion-advance}

Existing decentralized cooperative-transport architectures address fault tolerance through one of three families. Active fault-tolerant pipelines~\cite{LiuSuspensionFailure2021} estimate, classify, and reassign on a detection-confirmation-reassignment timeline comparable to or longer than $\tau_{\mathrm{pend}}$. Passive force-control and leader--follower architectures~\cite{GassnerPassiveForce2017,NoCommTransport2021} avoid inter-UAV exchange but presuppose fixed cable cardinality, with no guarantee across the constraint-set transition. Centralized and synchronous-distributed force allocation~\cite{Michael2011,WangOng2017,LiLoianno2023,PallarLi2025,SuBhowmick2023} is structurally inapplicable: the per-tick global-state synchronization it requires is precisely what the constraint-cardinality transition breaks at the actuator-tick rate.

The controller maps the transition to thrust through $T_i^{\mathrm{ff}}=T_i$, eliminating the detection-reconfiguration timeline rather than substituting a faster version. The corresponding measurements are $\SI{0.00}{\second}$ recovery on first faults (sub-tick) and $\SI{0.32}{\second}$, $\SI{0.33}{\second}$ on V4/V6 second faults, well inside one pendulum period (\cref{tab:recovery-iae}). This implementation has four practical properties: \emph{locality} by $\mathcal{I}_i$ \eqref{eq:info-set}; \emph{passivity} by the absence of detector, classifier, or scheduler in the loop; \emph{robustness to multi-agent failure} by induction on the fault counter, with the demonstrated $(N{=}5,m_L{=}\SI{10}{\kilo\gram},F{=}2)$ consuming $27\%$ of the actuator envelope \eqref{eq:actuator-envelope}; and \emph{team-size independence}, since the per-drone law and lemma stack carry no $N$ structurally and the dependence concentrates in $m_Lg/N$ and $m_Lg/(N{-}F)$.

The two null findings (NF2, NF3) help delimit the regimes in which the baseline term is dominant. MPC shows identical peak tension $\approx \SI{104.6}{\newton}$ across all ceilings because the pre-fault dynamic spike dominates the targeted quasi-static load (\cref{tab:mpc-ceiling-sweep}). The reshape supervisor shows $0.0\%$ benefit because centripetal load dominates the hover asymmetry \cref{thm:reshape} targets (\cref{rem:reshape-null}). Together, these results indicate that the additional active-FT layers are not the limiting factors on the demonstrated family; \cref{sec:theorem-preservation} states the corresponding preservation conditions for \cref{thm:hybrid-stability}.

\subsection{Boundary of the Analysis Domain}
\label{sec:discussion-boundary}

Three regimes lie outside the analysis domain: rapid multi-fault ($\tau_d < \tau_{\mathrm{pend}}$) voids \cref{lem:dwell-decay}; long slack excursions exit $\Omega^{\mathrm{dwell}}_{\tau}$; QP fallback breaks D1. A multi-fault extension of the dwell argument would further clarify these operating boundaries.

A second class of future work concerns linearization residuals: tube-MPC folding the recovery-envelope overshoot into the QP RHS would close P2-C; extended-objective reshape would close P2-D; main-mode $L_1$ at $\SI{10}{\kilo\gram}$ would complete C3-$L_1$. A third class is not addressed by the present model---drone-side faults, soft cable failures, tension-sensor failures, and payload-attitude dynamics--- as enumerated in \cref{sec:theorem-preservation}.

Within this domain, the campaign indicates that the identity $T_i^{\mathrm{ff}}=T_i$ delivers hybrid practical input-to-state stability with measured contraction margin across single and dual unannounced severances, that the identity is necessary in the precise sense P2-A isolates, and that the additional active-FT layers do not materially affect the trajectory family for which the feed-forward term is dominant.

%% file: Section_VIII_Conclusion.tex
The results show that a decentralized tension feed-forward term can improve recovery after abrupt cable severance in cooperative aerial transport. By setting each drone's altitude thrust feed-forward equal to its measured cable tension, the controller maps load redistribution into a local thrust adjustment at the next control update. The method uses local measurements and does not require a separate fault flag, inter-agent communication, or online controller reconfiguration for the severance cases considered here.

We analyzed the resulting closed loop as a hybrid system and established a practical input-to-state-stability result under structural fault transitions. The analysis provides an explicit recovery envelope with exponential decay and per-fault-cycle contraction, and it gives bounded tracking behavior under the stated actuator, slack-excursion, and dwell-time assumptions. A reduced-order cable model supports the analysis by bounding the effect of distributed cable dynamics on the closed loop.

High-fidelity multibody simulations show recovery from single- and multiple-cable severance events within one payload-pendulum period on the reported mission family. Feed-forward ablations indicate that the tension feed-forward term is the main contributor to this behavior in the tested baseline cascade. The evaluated extension layers do not materially alter the trajectories in the regimes studied, which helps identify the operating range in which the baseline mechanism is dominant.

These results support the use of local force measurements as response channels for abrupt structural transitions in cooperative aerial transport. Future work will extend the analysis beyond the slack-excursion-bounded domain, investigate hardware validation, and study related multi-agent and contact-rich systems with additional sensing and fault modes.